\documentclass{article}


\usepackage[nonatbib,final]{neurips_2025}





\usepackage[utf8]{inputenc} 
\usepackage[T1]{fontenc}    
\usepackage{hyperref}       
\usepackage{url}            
\usepackage{booktabs}       
\usepackage{amsfonts}       
\usepackage{nicefrac}       
\usepackage{microtype}      
\usepackage{xcolor}         

\usepackage{graphicx}
\usepackage[style=numeric]{biblatex} 
\addbibresource{references.bib}
\usepackage{hyperref}
\usepackage{subfigure}
\usepackage{amsthm,amsmath,amssymb}
\usepackage{mathrsfs}
\usepackage{enumitem}
\usepackage{wrapfig}
\usepackage{multirow}
\usepackage{diagbox}
\usepackage{tabularx}
\usepackage{soul}
\usepackage{array}
\usepackage[svgnames]{xcolor}
\usepackage{algorithm}
\usepackage[noend]{algorithmic}
\usepackage[table]{xcolor}

\newtoggle{final}
\togglefalse{final} 




\newcommand{\seq}[2][]{{#2}_{#1}}
\newcommand{\subseq}[3][]{{{#2}_{#1}^{[#3]}}}

\newcommand{\subtok}[3][]{{{#2}_{#1}^{[#3]}}}

\newtheorem{theorem}{Theorem}
\newtheorem{proposition}{Proposition}

\theoremstyle{definition}
\newtheorem{lemma}{Lemma}
\newtheorem{example}{Example}

\newcolumntype{R}{>{\raggedleft\arraybackslash}X}
\definecolor{low}{RGB}{220, 0, 0}
\definecolor{high}{RGB}{0, 220, 0}
\newcommand{\Acc}[1]{\textcolor{high!#1!low}{#1}}

\title{Self-Verifying Reflection\\Helps Transformers with CoT Reasoning}

%

\author{%
  Zhongwei Yu$^1$, Wannian Xia$^2$, Xue Yan$^2$, Bo Xu$^2$, Haifeng Zhang$^2$, Yali Du$^3$, Jun Wang$^4$\\
  $^1$ The Hong Kong University of Science and Technology (Guangzhou)\\
  $^2$ Institute of Automation, Chinese Academy of Sciences\\
  $^3$ King's College London \quad $^4$ University College London\\
  $^1$ \texttt{zyu950@connect.hkust-gz.edu.cn},\\
  $^2$ \texttt{\{xiawannian2020, yanxue2021, xubo, haifeng.zhang\}@ia.ac.cn},\\
  $^3$ \texttt{yali.du@kcl.ac.uk},\quad $^4$ \texttt{jun.wang@cs.ucl.ac.uk}\\
}

\begin{document}
\maketitle
\begin{abstract}
Advanced large language models (LLMs) frequently reflect in reasoning chain-of-thoughts (CoTs), where they self-verify the correctness of current solutions and explore alternatives. However, given recent findings that LLMs detect limited errors in CoTs, how reflection contributes to empirical improvements remains unclear. To analyze this issue, in this paper, we present a minimalistic reasoning framework to support basic self-verifying reflection for small transformers without natural language, which ensures analytic clarity and reduces the cost of comprehensive experiments. Theoretically, we prove that self-verifying reflection guarantees improvements if verification errors are properly bounded. Experimentally, we show that tiny transformers, with only a few million parameters, benefit from self-verification in both training and reflective execution, reaching remarkable LLM-level performance in integer multiplication and Sudoku. Similar to LLM results, we find that reinforcement learning (RL) improves in-distribution performance and incentivizes frequent reflection for tiny transformers, yet RL mainly optimizes shallow statistical patterns without faithfully reducing verification errors. In conclusion, integrating generative transformers with discriminative verification inherently facilitates CoT reasoning, regardless of scaling and natural language.
\end{abstract}


\section{Introduction}

Numerous studies have explored the ability of large language models (LLMs) to reason through a chain of thought (CoT), an intermediate sequence leading to the final answer. While simple prompts can elicit CoT reasoning \cite{kojimaLargeLanguageModels2023}, subsequent works have further enhanced CoT quality through reflective thinking \cite{havrillaGLoReWhenWhere2024} and the use of verifiers \cite{cobbeTrainingVerifiersSolve2021}. Recently, reinforcement learning (RL) \cite{suttonReinforcementLearningIntroduction2018} has achieved notable success in advanced reasoning models, such as OpenAI-o1 \cite{LearningReasonLLMs} and Deepseek-R1 \cite{deepseek-aiDeepSeekR1IncentivizingReasoning2025}, which show frequent reflective behaviors that self-verify the correctness of current solutions and explore alternatives, integrating generative processes with discriminative inference. However, researchers also report that the ability of these LLMs to detect errors is rather limited, and a large portion of reflection fails to bring correct solutions \cite{heCanLargeLanguage2025}. Given the weak verification ability, the experimental benefits of reflection and the emergence of high reflection frequency in RL require further explanation.

To address this challenge, we seek to analyze two main questions in this paper: 1) what role self-verifying reflection plays in training and execution of reasoning models, and 2) how reflective reasoning evolves in RL with verifiable outcome rewards \cite{lambertTulu3Pushing2025}. However, the complexity of natural language and the prohibitive training cost of LLMs make it difficult to draw clear conclusions from theoretical abstraction and comprehensive experiments across settings. Inspired by Zeyuan \textit{et al.} \cite{allen-zhuPhysicsLanguageModels2024}, we observe that task-specific reasoning and self-verifying reflection do not necessitate complex language. This allows us to investigate reflective reasoning through tiny transformer models \cite{vaswaniAttentionAllYou2017}, which provide efficient tools to understand self-verifying reflection through massive experiments.

To enable tiny transformers to produce long reflective CoTs and ensure analytic simplicity, we introduce a minimalistic reasoning framework, which supports essential reasoning behaviors that are operable without natural language. In our study, the model self-verifies the correctness of each thought step; then, it may resample incorrect steps or trace back to previous steps. Based on this framework, we theoretically prove that self-verifying reflection improves reasoning accuracy if verification errors are properly bounded, which does not necessitate a strong verifier. Additionally, a trace-back mechanism that allows revisiting previous solutions conditionally improves performance if the problem requires a sufficiently large number of steps.

Our experiments evaluate 1M, 4M, and 16M transformers in solving integer multiplication \cite{dziriFaithFateLimits2023} and Sudoku puzzles \cite{chiTechniquesSolvingSudoku2013}, which have simple definitions (thus, operable by transformers without language) yet still challenging for even LLM solvers. To maintain relevance to broader LLM research, the tiny transformers are trained from scratch through a pipeline similar to that of training LLM reasoners. Our main findings are listed as follows: 1) Learning to self-verify greatly facilitates the learning of forward reasoning. 2) Reflection improves reasoning accuracy if true correct steps are not excessively verified as incorrect. 3) Resembling the results of DeepSeek-R1 \cite{deepseek-aiDeepSeekR1IncentivizingReasoning2025}, RL can incentivize reflection if the reasoner can effectively explore potential solutions. 4) However, RL fine-tuning increases performance mainly statistically, with limited improvements in generalizable problem-solving skills.

Overall, this paper contributes to the fundamental understanding of reflection in reasoning models by clarifying its effectiveness and synergy with RL. Our findings based on minimal reasoners imply a general benefit of reflection for more advanced models, which operate on a super-set of our simplified reasoning behaviors. In addition, our implementation also provides insights into the development of computationally efficient reasoning models.
\section{Related works}

\paragraph{CoT reasoning} Pretrained LLMs emerge the ability to produce CoTs from simple prompts \cite{kojimaLargeLanguageModels2023, weiChainofThoughtPromptingElicits2023}, which can be explained via the local dependencies \cite{prystawskiWhyThinkStep2023} and probabilistic distribution \cite{tutunovWhyCanLarge2024} of natural-language reasoning. Many recent studies develop models targeted at reasoning, e.g., scaling test-time inference with external verifiers \cite{cobbeTrainingVerifiersSolve2021, lightmanLetsVerifyStep2023, luoImproveMathematicalReasoning2024, snellScalingLLMTestTime2024} and distilling large general models to smaller specialized models \cite{tianTinyLLMLearningSmall2024, fuSpecializingSmallerLanguage2023}. In this paper, we train tiny transformers from scratch to not only generate CoTs but also self-verify, i.e., detect errors in their own thoughts without external models.

\paragraph{RL fine-tuning for CoT reasoning} RL \cite{suttonReinforcementLearningIntroduction2018} recently emerges as a key method for CoT reasoning \cite{shaoDeepSeekMathPushingLimits2024, yangQwen25MathTechnicalReport2024}. It optimizes the transformer model by favoring CoTs that yield high cumulated rewards, where PPO \cite{schulmanProximalPolicyOptimization2017} and its variant GRPO \cite{shaoDeepSeekMathPushingLimits2024} are two representative approaches. Central to RL fine-tuning are reward models that guide policy optimization: the 1) \textit{outcome reward models} (ORM) assessing final answers, and the 2) \textit{process reward models} (PRM) \cite{lightmanLetsVerifyStep2023} evaluating intermediate reasoning steps. Recent advances in \textit{RL with verifiable rewards} (RLVR) \cite{deepseek-aiDeepSeekR1IncentivizingReasoning2025,yueDoesReinforcementLearning2025} demonstrate that simple ORM based solely on answer correctness can induce sophisticated reasoning behaviors.

\paragraph{Reflection in LLM reasoning} LLM reflection provides feedback to the generated solutions \cite{madaanSelfRefineIterativeRefinement2023} and may accordingly refine the solutions \cite{havrillaGLoReWhenWhere2024}. Research shows that supervised learning from verbal reflection improves performance, even though the reflective feedback is omitted during execution \cite{zhangLearnAnswerTraining2024}. Compared to the generative verbal reflection, self-verification uses discriminative labels to indicate the correctness of reasoning steps, which supports reflective execution and is operable without linguistic knowledge. Recently, RL is widely used to develop strong reflective abilities \cite{kumarTrainingLanguageModels2024, quRecursiveIntrospectionTeaching2024, LearningReasonLLMs}. In particular, DeepSeek-R1 \cite{deepseek-aiDeepSeekR1IncentivizingReasoning2025} shows that RLVR elicits frequent reflection, and such a result is reproduced in smaller LLMs \cite{tinyzero}. In this paper, we further investigate how reflection evolves during RLVR by examining the change of verification errors.

\paragraph{Understanding LLMs through small transformers} Small transformers are helpful tools to understand LLMs, for their architectural consistency with LLMs and low development cost to support massive experiments. For example, transformers smaller than 1B provide insights into how data mixture and data diversity influence LLM training \cite{xieDoReMiOptimizingData2023, allen-zhuPhysicsLanguageModels2024}. They also contribute to foundational understanding of CoT reasoning, such as length generalization \cite{houUniversalLengthGeneralization2024}, internalization of thoughts \cite{dengExplicitCoTImplicit2024}, and how CoTs inherently extend the problem-solving ability \cite{fengRevealingMysteryChain2023, liChainThoughtEmpowers2024}. In this paper, we further use tiny transformers to better understand reflection in CoT reasoning. 

\section{Reflective reasoning for transformers}

In this section, we develop transformers to perform simple reflective reasoning in long CoTs. Focusing on analytic clarity and broader implications, the design of our framework follows the minimalistic principle, providing only essential reasoning behavior operable without linguistic knowledge. More advanced reasoning frameworks optimized for small-scale models are certainly our next move in future work. In the following, we first introduce the basic formulation of CoT reasoning; then, based on this formulation, we introduce our simple reasoning framework for self-verifying reflection; afterwards, we describe how transformers are trained to reason through this framework. 

\subsection{Reasoning formulation}
\label{sec:mtp}
 
\begin{figure}[t]
    \centering
    \includegraphics[width=1.0\textwidth]{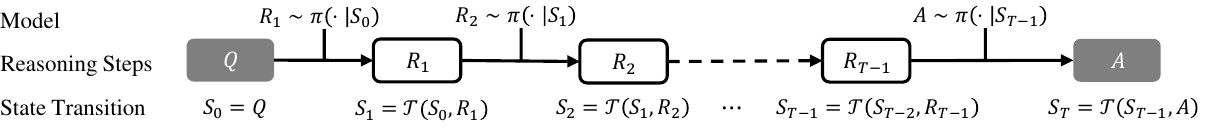}
    \vspace{-1em}
    \caption{The illustration of MTP, where the transformer model $\pi$ reasons the answer $A$ of a query $Q$ through $T-1$ intermediate steps.}
    \label{fig:mtp-illustrations}
\end{figure}

\begin{figure}[t]
    \centering
    \vspace{-1em}
    \subfigure[Multiplication]{
        \begin{minipage}[b]{0.4\textwidth}
            \centering
            \includegraphics[width=\textwidth]{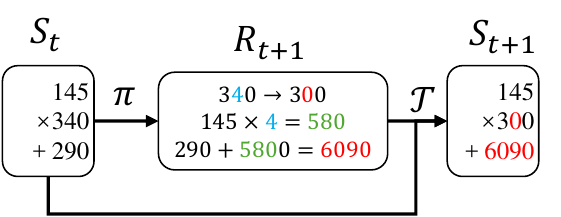}
            \vspace{0em}
        \end{minipage}
        \label{fig:mult-mtp-step}
    }
    \hfill
    \subfigure[Sudoku]{
        \includegraphics[width=0.56\textwidth]{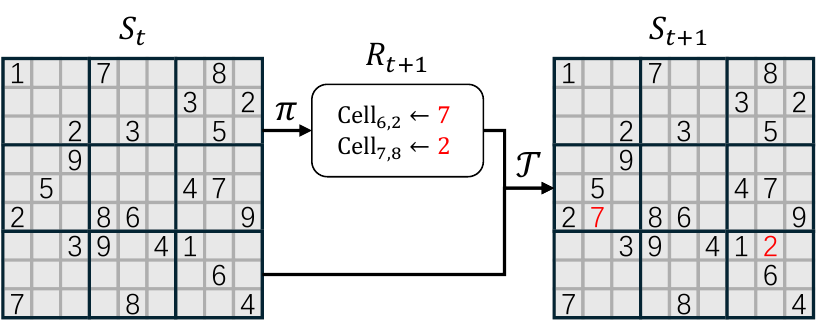}
        \label{fig:sudoku-mtp-step}
    }
    \caption{Example reasoning steps for multiplication and Sudoku, where the core planning is presented in the reasoning step $\seq[t+1]{R}$.}
    \label{fig:mtp-step}
\end{figure}

\paragraph{CoT Reasoning as a Markov decision process} A general form of CoT reasoning is given as a tuple $(\seq{Q}, \{{R}\}, \seq{A})$, where $\seq{Q}$ is the input query, $\{\seq{R}\} = (\seq[1]{R},\ldots, \seq[T-1]{R})$ is the sequence of $T - 1$ intermediate steps, and $\seq{A}$ is the final answer. Following Wang \cite{wangTutorialLLMReasoning2025}, we formulate the CoT reasoning as a \textit{Markov thought process} (MTP). As shown in Figure~\ref{fig:mtp-illustrations}, an MTP follows that \cite{wangTutorialLLMReasoning2025}:
\begin{gather}  
\seq[t+1]{R} \sim \pi(\cdot \mid \seq[t]{S}),\  
\seq[t+1]{S} = \mathcal{T}(\seq[t]{S}, \seq[t+1]{R}), \label{eq:mtp:transit}  
\end{gather}
where $\seq[t]{S}$ is the $t$-th reasoning state, $\pi$ is the \textit{planning policy} (the transformer model), and $\mathcal{T}$ is the (usually deterministic) \textit{transition function}. The initial state $\seq[0]{S} := Q$ is given by the input query. In each reasoning step $\seq[t+1]{R}$, the policy $\pi$ plans the next reasoning action that determines the state transition, which is then executed by $\mathcal{T}$ to obtain the next state. The process terminates when the step presents the answer, i.e., $A = \seq[T]{R}$. For clarity, a \textbf{table of notations} is presented in Appendix~\ref{app:notations}.

An MTP is implemented by specifying the state representations and transition function $\mathcal{T}$. Since we use tiny transformers that are weak in inferring long contexts, we suggest reducing the length of state representations, so that each state $\seq[t]{S}$ carries only necessary information for subsequent reasoning. Here, we present two examples to better illustrate how MTPs are designed for tiny transformers.

\begin{example}[An MTP for integer multiplication] As shown in Figure~\ref{fig:mult-mtp-step}, to reason the product of two integers $x, y \geq 0$, each state is an expression $\seq[t]{S} := [x_t \times y_t + z_t]$ mathematically equal to $x \times y$, initialized as $\seq[0]{S} =[x \times y + 0]$. On each step, $\pi$ plans $y_{t+1}$ by eliminate a non-zero digit in $y_t$ to $0$, and it then computes $z_{t+1} = z_t + x_t(y_{t} - y_{t+1})$. Consequently, $\mathcal{T}$ updates $\seq[t+1]{S}$ as $[x_{t+1} \times y_{t+1} + z_{t+1}]$ with $x_{t+1} = x_t$. Similarly, $\pi$ may also eliminate non-zero digits in $x_t$ in a symmetric manner. Finally, $\pi$ yields $A = z_t$ as the answer if either $x_t$ or $y_t$ becomes $0$.
\end{example}

\begin{example}[An MTP for Sudoku \cite{chiTechniquesSolvingSudoku2013}] As shown in Figure~\ref{fig:sudoku-mtp-step}, each Sudoku state is a $9 \times 9$ game board. On each step, the model $\pi$ fills some blank cells to produce a new board, which is exactly the next state. The answer $A$ is a board with no blank cells.
\end{example}

\subsection{The framework of self-verifying reflection}
\label{sec:rmtp}

\begin{figure}[t]
    \centering
    \subfigure[Reflective MTP]{
        \includegraphics[width=0.48\textwidth]{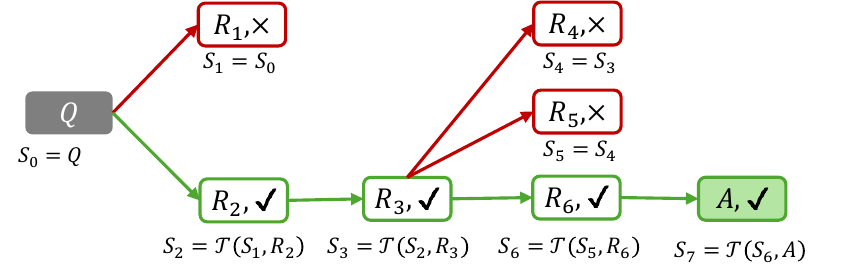}
        \label{fig:rmtp}
    }
    \hfill
    \subfigure[Reflective trace-back search (width $m=2$)]{
        \includegraphics[width=0.48\textwidth]{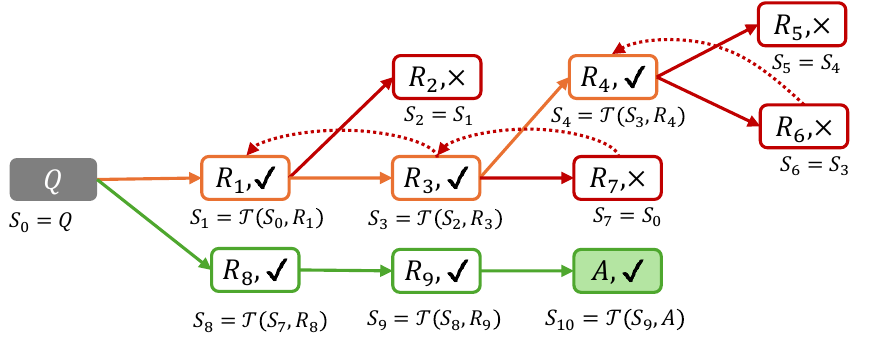}
        \label{fig:rtbs}
    }
    \caption{Reflective reasoning based on MTP. ``$\checkmark$'' and ``$\times$'' are self-verification labels for positive and negative steps, respectively. The steps that are instantly verified as negative are highlighted in \textcolor{red}{red}. In RTBS, the dashed-line arrows back-propagate the negative labels, causing parental steps to be recursively rejected (\textcolor{orange}{orange}). The \textcolor{Green}{green} shows the steps that successfully lead to the answer.}
    \label{fig:rmtp-illustrations}
\end{figure}

Conceptually, reflection provides feedback for the proposed steps and may alter the subsequent reasoning accordingly. Reflection takes flexible forms in natural language (e.g., justifications and comprehensive evaluations), making it extremely costly to analyze. In this work, we propose to equip transformers with the simplest discriminative form of reflection, where the model self-verifies the correctness of each step and is allowed to retry those incorrect attempts. We currently do not consider the high-level revisory behavior that maps incorrect steps to correct ones, as we find learning such a mapping is challenging for tiny models and leads to no significant gain in practice. Specifically, we analyze two basic variants of reflective reasoning in this paper: the \textit{reflective MTP} and the \textit{reflective trace-back search}, as described below (see pseudo-code in Appendix~\ref{app:impl:algs}). 

\paragraph{Reflective MTP (RMTP)} Given any MTP with a policy $\pi$ and transition $\mathcal{T}$, we use a \textit{verifier} $\mathcal{V}$ to produce a verification sequence after each reasoning step, denoted as  $\seq[t]{V} \sim \mathcal{V}(\cdot|\seq[t]{R})$. Such $\seq[t]{V}$ includes verification label(s): The positive ``$\checkmark$'' and negative ``$\times$" signifying correct and incorrect reasoning of $\seq[t]{R}$, respectively. Given the \textit{verified step} $\seq[t+1]{\tilde{R}} := (\seq[t+1]{R}, \seq[t+1]{V})$ that contains verification, we define $\tilde{\mathcal{T}}$ as the \textit{reflective transition function} that rejects incorrect steps: 
\begin{equation}
    \seq[t+1]{S} = \tilde{\mathcal{T}}(\seq[t]{S}, \seq[t+1]{\tilde{R}}) = \tilde{\mathcal{T}}(\seq[t]{S}, (\seq[t+1]{R}, \seq[t+1]{V})) := \begin{cases}
        \seq[t]{S}, & \text{``$\times$''} \in \seq[t+1]{V}; \\
        \mathcal{T}(\seq[t]{S}, \seq[t+1]{R}), & \text{otherwise.}
    \end{cases}
    \label{eq:rmtp-transit}
\end{equation}
In other words, if $\mathcal{V}$ detects any error (i.e. ``$\times$") in $\seq[t+1]{R}$, the state remains unchanged so that $\pi$ may re-sample another attempt. Focusing on self-verification, we use a single model called the \textit{self-verifying policy} $\tilde{\pi} := \{\pi, \mathcal{V}\}$ to serve simultaneously as the \textit{planning policy} $\pi$ and the verifier $\mathcal{V}$. By operating tokens, $\tilde{\pi}$ outputs the verified step $\seq[t]{\tilde{R}}$ for each input state $\seq[t]{S}$. In this way, $\tilde{\mathcal{T}}$ and $\tilde\pi$ constitute a new MTP called the RMTP, with illustration in Figure~\ref{fig:rmtp}.

\paragraph{Reflective trace-back search (RTBS)} Though RMTP allows instant rejections of incorrect steps, sometimes the quality of a step can be better determined by actually trying it. For example, a Sudoku solver occasionally makes tentative guesses and traces back if the subsequent reasoning fails. Inspired by o1-journey \cite{qinO1ReplicationJourney2024}, a \textit{trace-back search} allowing the reasoner to revisit previous states may be applied to explore solution paths in an MTP. We implement simple RTBS by simulating the depth-first search in the trajectory space. Let $m$ denote the \textit{RTBS width}, i.e., the maximal number of attempts on each step. As illustrated in Figure~\ref{fig:rtbs}, if $m$ proposed steps are rejected on a state $\seq[t]{S}$, the negative label ``$\times$'' will be propagated back to \textit{recursively reject} the previous step $\seq[t]{R}$. As a result, the state traces back to the closest ancestral state that has remaining attempt opportunities.

\subsection{Training}
\label{sec:training}

\begin{figure}[tb]
    \centering
    \includegraphics[width=1.0\linewidth]{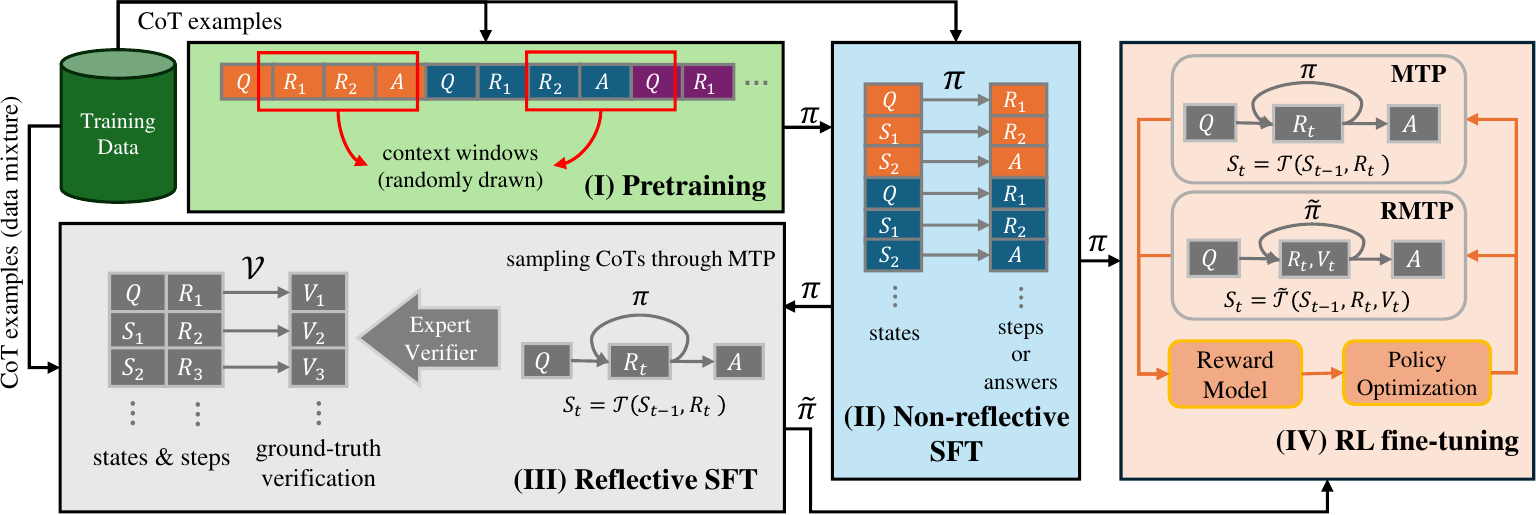}
    \vspace{-1em}
    \caption{The training workflow for transformers to perform CoT reasoning.}
    \vspace{-1em}
    \label{fig:training}
\end{figure}

As shown in Figure~\ref{fig:training}, we train the tiny transformers from scratch through consistent techniques of LLM counterparts, such as pretraining, supervised fine-tuning (SFT), and RL fine-tuning. First, we use conventional pipelines to train a baseline model $\pi$ with only the planning ability in MTPs. During \textbf{(I) pretraining}, these CoT examples are treated as a textual corpus, where sequences are randomly drawn to minimize cross-entropy loss of next-token prediction. Then, in \textbf{(II) non-reflective SFT}, the model learns to map each state $\seq[t]{S}$ to the corresponding step $\seq[t+1]{R}$ by imitating examples.

Next, we employ \textbf{(III) reflective SFT} to integrate the planning policy $\pi$ with the knowledge of self-verification. To produce ground-truth verification labels, we use $\pi$ to sample non-reflective CoTs, in which the sampled steps are then labeled by an expert verifier (e.g., a rule-based process reward model). Reflective SFT learns to predict these labels from the states and the proposed steps, i.e., $(\seq[t]{S}, \seq[t+1]{R}) \to \seq[t+1]{V}$. To prevent disastrous forgetting, we also mix the \textit{same} CoT examples as in non-reflective SFT. This converts $\pi$ to a self-verifying policy $\tilde{\pi}$ that can self-verify reasoning steps.

Thus far, we have obtained the planning policy $\pi$ and the self-verifying policy $\tilde{\pi}$, which can be further strengthened through \textbf{(IV) RL fine-tuning}. As illustrated in Figure~\ref{fig:training}, RL fine-tuning involves iteratively executing $\pi$ ($\tilde{\pi}$) to collect experience CoTs through an MTP (RMTP), evaluating these CoTs with a reward model, and updating the policy to favor higher-reward solutions. Following the RLVR paradigm \cite{lambertTulu3Pushing2025}, we use binary outcome rewards (i.e., $1$ for correct answers and $0$ otherwise) computed by a rule-based answer checker $\operatorname{ORM}(Q,A)$. When training the self-verifying policy $\tilde{\pi}$, the RMTP treats verification $\seq[t]{V}$ as a part of the augmented step $\seq[t]{\tilde{R}}$, simulating R1-like training \cite{deepseek-aiDeepSeekR1IncentivizingReasoning2025} where reflection and solution planning are jointly optimized. We mainly use GRPO \cite{shaoDeepSeekMathPushingLimits2024} as the algorithms to optimize policies. Details of RL fine-tuning are elaborated in Appendix~\ref{app:rl}.
\section{Theoretical results} \label{sec:theory}

\def\sset{\mathcal{S}}
\def\oset{\mathcal{A}}
\def\osetof#1{\mathcal{A}_{\seq{#1}}}
\def\nexts{\seq{S'}}

This section establishes theoretical conditions under which self-verifying reflection (RMTP or RTBS in Section~\ref{sec:rmtp}) enhances reasoning accuracy (the probability of deriving correct answers). The general relationship between the verification ability and reasoning accuracy (discussed in Appendix~\ref{app:bellman}) for any MTP is intractable as the states and transitions can be arbitrarily specified. Therefore, to derive interpretable insights, we discuss a simplified prototype of reasoning that epitomizes the representative principle of CoTs --- to incrementally express complex relations by chaining the local relation in each step \cite{prystawskiWhyThinkStep2023}. Specifically, Given query $Q$ as the initial state, we view a CoT as the step-by-step process that reduces the complexity within states: 

\begin{itemize}[left=8pt, itemsep=0.5em, parsep=0pt, topsep=0pt, partopsep=0pt]
    \item  We define $\mathcal{S}_n$ as the set of states with a complexity scale of $n$. For simplicity, we assume that each step, if not rejected by reflection, reduces the complexity scale by $1$. Therefore, the scale $n$ is the number of effective steps required to derive an answer.
    \item An answer $A$ is a state with a scale of $0$, i.e. $A \in \mathcal{S}_0$. Given an input query $Q$, the answers $\mathcal{S}_0$ are divided into positive (correct) answers $\mathcal{S}_0^+$ and negative (wrong) answers $\mathcal{S}_0^-$.
    \item States $\mathcal{S}_n$ ($n$ > 0) are divided into 1) \textit{positive} states $\mathcal{S}_n^+$ that potentially lead to correct answers and 2) \textit{negative} states $\mathcal{S}_n^-$ leading to only incorrect answers through forward transitions.
\end{itemize}

Consider a self-verifying policy $\tilde{\pi} = \{\pi, \mathcal{V}\}$ to solve this simplified task. We describe its fundamental abilities using the following probabilities (whose meanings will be explained afterwards):
\vspace{-1em}
\begin{gather}
    \mu := p_{\seq{R} \sim \pi} (\mathcal{T}(\seq{S}, \seq{R}) \in \mathcal{S}^+_{n-1} \mid \seq{S} \in \mathcal{S}_n^+) \\
    e_{+} := p_{\seq{R},\seq{V} \sim \tilde\pi} (\mathcal{T}(\seq{S}, \seq{R}) \in \mathcal{S}^-_{n-1}, \text{``$\times$''} \notin \seq{V} \mid \seq{S} \in \mathcal{S}_n^+), \\
    e_{-}:= p_{\seq{R},\seq{V} \sim \tilde\pi} (\mathcal{T}(\seq{S}, \seq{R}) \in \mathcal{S}^+_{n-1},  \text{``$\times$''} \in \seq{V} \mid \seq{S} \in \mathcal{S}_n^+), \\
    f := p_{\seq{R},\seq{V} \sim \tilde\pi} ( \text{``$\times$''} \in \seq{V} \mid \seq{S} \in \mathcal{S}_n^-).
\end{gather}

To elaborate, $\mu$ measures the planning ability, defined as the probability that $\pi$ plans a step that leads to a positive next state, given that the current state is positive. For verification abilities, we measure the rates of two types of errors: $e_{+}$ (false positive rate) is the probability of accepting a step that leads to a negative state, and $e_{-}$ (false negative rate) is the probability of rejecting a step that leads to a positive state. Additionally, $f$ is the probability of rejecting any step on negative states, providing the chance of tracing back to previous states. Given these factors, Figure~\ref{fig:mtp-simplified} illustrates the state transitions in non-reflective (vanilla MTP) and reflective (RMTB and RTBS) reasoning.

\begin{figure}[t]
    \centering
    \subfigure[Non-reflective reasoning]{
        \begin{minipage}[b]{0.45\textwidth}
            \centering
            \includegraphics[height=6em]{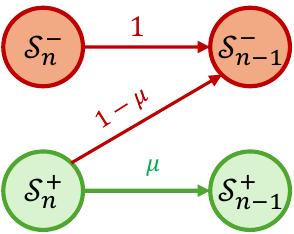}
            \vspace{1.6em}
        \end{minipage}
        \label{fig:mtp-simplified:non-refl}
    }
    \subfigure[Reflective reasoning through an RMTP or RTBS]{
        \begin{minipage}[b]{0.45\textwidth}
            \centering
            \includegraphics[height=8em]{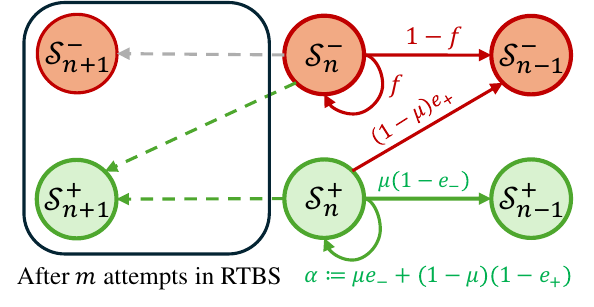}
        \end{minipage}
        \label{fig:mtp-simplified:refl}
    }
    \caption{
    The diagram of state transitions starting from scale $n$ in the simplified reasoning, where probabilities are attached to solid lines. In (b) reflective reasoning, the dashed-line arrow presents the trace-back move after $m$ attempts in RTBS.
    }
    \vspace{-1em}
    \label{fig:mtp-simplified}
\end{figure}

For input problems with scale $n$, we use $\rho(n)$, $\tilde{\rho}(n)$, and $\tilde{\rho}_m(n)$ to respectively denote the reasoning accuracy using no reflection, RMTP, and RTBS (with width $m$). Obviously, we have $\rho(n) = \mu^n$. In contrast, the mathematical forms of $\tilde{\rho}(n)$ and $\tilde{\rho}_m(n)$ are more complicated and therefore left to Appendix~\ref{app:simple:accuracy}. Our main result provides simple conditions for the above factors $(\mu, e_-, e_+, f)$ to ensure an improved accuracy when reasoning through an RMTP or RTBS.

\begin{theorem}
\label{the:main}
In the above simplified problem, consider a self-verifying policy $\tilde{\pi}$ where $\mu$, $e_-$, and $ e_+$ are non-trivial (i.e. neither $0$ nor $1$). Let $\alpha := \mu e_- + (1-\mu)(1 - e_+)$ denote the rejection probability on positive states. Given an infinite computation budget, for $n > 0$ we have:
\begin{itemize}[left=10pt, itemsep=0em, parsep=0pt, topsep=-0.5em, partopsep=0pt]
    \item $\tilde{\rho}(n) \ge \rho(n)$ if and only if $e_- + e_+ \leq 1$, where equalities hold simultaneously; furthermore, reducing either $e_-$ or $e_+$ strictly increases $\tilde{\rho}(n)$. 
    \item $\tilde{\rho}_m(n) > \tilde{\rho}(n)$ for a sufficiently large $n$ if and only if $f > \alpha$ and $m > \frac{1}{1 - \alpha}$; furthermore, such a gap of $\tilde\rho_m(n)$ over $\tilde\rho(n)$ increases strictly with $f$.
\end{itemize}
\end{theorem}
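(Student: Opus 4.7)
The plan for the first bullet is to treat RMTP as a renewal process: rejections become self-loops that do not advance the scale, and with unbounded retries the probability that a step from a positive state of scale $n$ is eventually accepted is $1$ (since $\alpha<1$), while the conditional probability of the accepted step landing in a positive state of scale $n-1$ is $\tilde\mu := \mu(1-e_-)/[\mu(1-e_-) + (1-\mu)e_+]$; the complementary mass goes into a negative branch that, by the simplified model, can only lead forward to wrong answers. Hence $\tilde\rho(n) = \tilde\mu^n$. A short algebra calculation reduces $\tilde\mu \ge \mu$ to $e_- + e_+ \le 1$, with equalities holding simultaneously, and the strict monotonicity claim follows from $\partial\tilde\mu/\partial e_- < 0$ and $\partial\tilde\mu/\partial e_+ < 0$.

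For the second bullet I will introduce three scale-indexed recurrences: correct- and wrong-output probabilities $p(n), w(n)$ starting from a positive state, and wrong-output probability $w^-(n)$ starting from a negative state. Writing $\beta := \mu(1-e_-)$ and $\delta := (1-\mu)e_+$, the ``first of $m$ independent attempts that does not trace back'' rule gives $w^-(n) = 1 - [1 - (1-f)\,w^-(n-1)]^m$ and, with per-attempt quantities $A_c := \beta\, p(n-1)$, $A_w := \beta\, w(n-1) + \delta\, w^-(n-1)$, $T_b := 1 - A_c - A_w$, the compact forms $p(n) = A_c(1-T_b^m)/(1-T_b)$ and $w(n) = A_w(1-T_b^m)/(1-T_b)$, with base cases $p(0) = w^-(0) = 1$ and $w(0) = 0$. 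A standard concavity argument on the map $\psi(x) := 1 - [1 - (1-f)x]^m$ shows that $w^-(n)$ converges to an attracting fixed point $\gamma^*$, with $\gamma^* > 0 \iff m(1-f) > 1$.

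The decisive step is an asymptotic rate analysis for $p(n)$. Passing to the joint asymptote $(p, w, w^-) \to (0, w^*, \gamma^*)$, the self-consistent $w^*$ solves $w^* = 1 - (1 - \beta w^* - \delta\gamma^*)^m$, and the identity $w^* = 1 - (T_b^*)^m$ algebraically simplifies the linearized per-step multiplier of $p(n)$ around this asymptote to exactly $\beta w^*/(\beta w^* + \delta\gamma^*)$. This multiplier exceeds $\tilde\mu = \beta/(\beta+\delta)$ iff $w^* > \gamma^*$; to reduce the latter to $f > \alpha$, I plug $w = \gamma^*$ into the $w^*$-equation and observe that its ``forcing'' $(\beta+\delta)\gamma^* = (1-\alpha)\gamma^*$ dominates the $w^-$-map's forcing $(1-f)\gamma^*$ precisely when $f > \alpha$, whence concavity lifts the $w^*$-map above the diagonal at $\gamma^*$ and pushes its fixed point above $\gamma^*$. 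On the degenerate branch $\gamma^* = 0$ (requiring $m(1-f) \le 1$) the rate collapses via L'H\^opital as $T_b \uparrow 1$ to $m\beta$, and $m\beta > \tilde\mu$ rearranges to $m > 1/(1-\alpha)$; the constraints $m(1-f) \le 1$ and $m > 1/(1-\alpha)$ together force $f > \alpha$, so the two branches glue into a single biconditional. Strict monotonicity in $f$ then follows from $d\gamma^*/df < 0$ combined with an implicit-function derivative on the $w^*$-equation giving $d[\beta w^*/(\beta w^* + \delta \gamma^*)]/df > 0$. The main obstacle I anticipate is precisely this gluing and the justification that the linearized rate around the coupled fixed point is the genuine asymptotic decay rate of $p(n)$, which I would address by a contraction/monotonicity argument on the full three-variable recurrence near its attractor together with a careful handling of the L'H\^opital limit at the $\gamma^* = 0$ boundary.
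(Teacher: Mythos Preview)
Your treatment of the first bullet is correct and coincides with the paper's: your $\tilde\mu = \mu(1-e_-)/[\mu(1-e_-)+(1-\mu)e_+]$ is exactly the paper's $\beta/(1-\alpha)$, and the algebra reducing $\tilde\mu\ge\mu$ to $e_-+e_+\le1$ is identical.

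For the second bullet, your output-probability variables $(p,w,w^-)$ are dual to the paper's rejection-probability variables $(\delta_m,\epsilon_m)$: one has $1-w^-(n)=(\epsilon_m(n))^m$ and $1-p(n)-w(n)=(\delta_m(n))^m$, so your $T_b$ equals the paper's $\delta_m(n)$ and your asymptotic multiplier $\beta w^*/(\beta w^*+\delta\gamma^*)$ is precisely the paper's $\sigma=\beta(1-\delta^m)/(1-\delta)$. The core reduction (rate $>\tilde\mu\iff w^*>\gamma^*\iff f>\alpha$) mirrors the paper's $\sigma>\tilde\mu\iff\epsilon>\delta\iff f>\alpha$. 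The paper gains two things from working with rejection probabilities: a one-line induction shows $\delta_m(n),\epsilon_m(n),\sigma_m(n)$ are monotone in $n$, which makes convergence immediate (your coupled $(p,w,w^-)$ system needs the separate contraction argument you flag), and an induction on the recurrence gives that $\tilde\rho_m(n)$ increases strictly with $f$ for every finite $n\ge2$, whereas your implicit-function argument on fixed points only delivers asymptotic monotonicity.

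There is, however, a genuine gap in your degenerate branch. You claim that $\gamma^*=0$ forces $T_b\uparrow1$ and hence rate $=m\beta$; but with $\gamma^*=0$ one has $T_b^*=1-\beta w^*$, and the fixed-point equation $w^*=1-(1-\beta w^*)^m$ admits a strictly positive root exactly when $m\beta>1$. In that sub-case $T_b^*<1$ and the rate is $\beta w^*/(\beta w^*)=1$, not $m\beta$. This missing parameter slice $1/\beta<m\le 1/(1-f)$ is precisely where the paper's case split (on $\delta=1$ versus $\delta<1$, not on $\epsilon=1$ versus $\epsilon<1$) differs from yours: the paper handles it under ``$\delta<1$, $\epsilon=1$'' and notes $\epsilon=1>\delta$ automatically. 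The biconditional still holds there (since $m>1/\beta>1/(1-\alpha)$ and $1/(1-f)\ge m>1/\beta$ force $f>\alpha$), so your conclusion survives, but the sub-case must be treated explicitly rather than absorbed into the L'H\^opital limit.
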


\textbf{Does reflection require a strong verifier?} Theorem~\ref{the:main} shows that RMTP improves performance over vanilla MTP if the verification errors $e_+$ and $e_-$ are properly bounded, which \textit{does not necessitate a strong verifier}. In our simplified setting, this only requires the verifier $\mathcal{V}$ to be better than random guessing (which ensures $e_{-} + e_{+} = 1$). This also indicates a trivial guarantee of RTBS, as an infinitely large width ($m \to +\infty$) substantially converts RTBS to RMTB.

\textbf{When does trace-back search facilitate reflection?} Theorem~\ref{the:main} provides the conditions for RTBS to outperform RMTP for a \textit{sufficiently large $n$}: 1) The width $m$ is large enough to ensure \textit{effective exploration}. 2) $f > \alpha$ indicates that \textit{negative states are inherently discriminated} from positive ones, leading to a higher rejection probability on negative states than on positive states (see Figure~\ref{fig:mtp-simplified:refl}). In other words, provided $f > \alpha$, RTBS is ensured to be more effective on complicated queries using a finite $m$. However, this also implies a risk of over-thought on simple queries that have a small $n$.

The derivation and additional details of Theorem~\ref{the:main} are provided in Appendix~\ref{app:theory:main-derivation}. In addition, we also derive how many steps it costs to find a correct solution in RMTP. The following Proposition~\ref{pro:refl-cost} (see proof in Appendix~\ref{app:refl-cost}) shows that a higher $e_{-}$ causes more steps to be necessarily rejected and increases the solution cost. In contrast, although a higher $e_{+}$ reduces accuracy, it forces successful solutions to rely less on reflection, leading to fewer expected steps. Therefore, \textit{a high false negative rate $e_-$ is worse than a high $e_+$} given the limited computational budget in practice.

\begin{proposition}[RMTP Reasoning Length]\label{pro:refl-cost}
For a simplified reasoning problem with scale $n$, the expected number of steps $\bar{T}$ for $\tilde{\pi}$ to find a correct answer is $\bar{T} = \frac{n}{(1 - \mu)e_{+} + \mu (1 - e_{-})}$. Especially, a correct answer will never be found if the denominator is $0$.
\end{proposition}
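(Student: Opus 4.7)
The plan is to decompose the reasoning trajectory by scale and exploit the memoryless property of acceptance. Because RMTP has no trace-back mechanism, a correct answer is produced only if every transition along the trajectory keeps the state positive; I will therefore condition on the event that all $n$ transitions are positive and then show that this conditioning does not alter the distribution of the number of steps spent at each scale.

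At a positive state at scale $k > 0$, a single sample of $\tilde{\pi}$ yields exactly one of three outcomes: an accepted positive transition with probability $a = \mu(1-e_-)$, an accepted negative transition with probability $b = (1-\mu)e_+$, or a rejection with probability $\alpha = 1 - a - b = \mu e_- + (1-\mu)(1-e_+)$. Let $T_k$ denote the number of samples drawn at scale $k$ until the first non-rejection. Then $T_k$ is geometric with success probability $1-\alpha = \mu(1-e_-) + (1-\mu)e_+$. The crucial observation is that, among i.i.d.\ multinomial trials, the identity of the first non-reject is independent of how many rejections preceded it. Writing this out via Bayes, $P(T_k = t \mid \text{first non-reject is positive}) = \alpha^{t-1}(1-\alpha)$, so the conditional distribution is unchanged and the conditional expectation is still $1/(1-\alpha)$.

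With this scale-level fact in hand, the rest is linearity of expectation. The total number of steps to reach a correct answer from scale $n$ is $T = T_1 + \cdots + T_n$, and under the conditioning that all transitions are positive each $T_k$ is an independent geometric random variable with mean $1/(1-\alpha)$. Hence $\bar{T} = n/(1-\alpha) = n/[(1-\mu)e_+ + \mu(1-e_-)]$, exactly the claimed formula. The degenerate case follows automatically: if the denominator is zero then $\alpha = 1$, so every sample is almost surely rejected, no transition ever occurs, and no answer, correct or otherwise, can be produced.

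The main obstacle is the conditioning step rather than any algebra. A naive reading might worry that conditioning on the trajectory being correct biases the retry count within each scale (since success is, in some sense, rarer). The independence between the waiting time for the first non-reject and the sign of that non-reject is what makes the formula clean, and it is also what keeps $f$, the rejection rate on negative states, out of the expression: once a trajectory is forced to stay positive, excursions through negative states simply do not enter the expectation. Once this independence is observed, the remainder of the proof is bookkeeping.
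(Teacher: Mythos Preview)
Your proposal is correct and follows the same conceptual route as the paper: condition on the trajectory remaining positive, compute the expected number of attempts per scale, and multiply by $n$. The paper arrives at the per-scale expectation by first bounding the number of attempts by a finite $m$, computing $A_m = \sum_{i=1}^m i\,\Pr(i\text{ attempts}\mid\text{correct})$ via a telescoping-sum identity for $\sum i\alpha^{i-1}$, and then sending $m\to\infty$ to obtain $A_\infty = 1/(1-\alpha)$. Your argument shortcuts this by directly invoking the standard fact that in i.i.d.\ multinomial trials the waiting time for the first non-reject is independent of the label of that non-reject, so the conditional law of $T_k$ is still geometric with mean $1/(1-\alpha)$. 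This is cleaner and also makes transparent why $f$ plays no role in the formula, a point the paper's computation leaves implicit.
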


Appendix~\ref{app:theory:complicated-mtp} further extends our analysis to more realistic reasoning, where rejected attempts lead to a posterior drop of $\mu$ (or rise of $e_{-}$), indicating that the model may not well generalize the current state. In this case, the bound of $e_-$ to ensure improvements becomes stricter than that in Theorem~\ref{the:main}.

\section{Experiments} \label{sec:experiments}

We conduct comprehensive experiments to examine the reasoning performance of tiny transformers under various settings. We trained simple causal-attention transformers \cite{vaswaniAttentionAllYou2017} (implemented by LitGPT \cite{litgpt-2023}) with 1M, 4M, and 16M parameters, through the pipelines described in Section~\ref{sec:training}. Details of training data, model architectures, tokenization, and hyperparameters are included in Appendix~\ref{app:implementation}. The source code is available at \href{https://github.com/zwyu-ai/self-verifying-reflection-reasoning}{\small \texttt{https://github.com/zwyu-ai/self-verifying-reflection-reasoning}}.

We test tiny transformers in two reasoning tasks: The integer multiplication task (\textbf{Mult} for short) computes the product of two integers $x$ and $y$; the \textbf{Sudoku} task fills numbers into blank positions of a $9\times9$ matrix, such that each row, column, or $3\times3$ block is a permutation of $\{1,\ldots,9\}$. For both tasks, we divide queries into \textit{3 levels of difficulties}: The \textbf{in-distribution (ID) Easy}, \textbf{ID Hard}, and \textbf{out-of-distribution (OOD) Hard}.  \textit{The models are trained on ID-Easy and ID-Hard problems}, while tested additionally on OOD-Hard cases. We define the difficulty of a Mult query by the number $d$ of digits of the greater multiplicand, and that of a Sudoku puzzle is determined by the number $b$ of blanks to be filled. Specifically, we have $1 \leq d \leq 5$ or $9 \leq b < 36$ for ID Easy, $6 \leq d \leq 8$ or $36 \leq b < 54$ for ID Hard, and $9 \leq d \leq 10$ or $54 \leq b < 63$ for OOD Hard.

Our full results are presented in Appendix~\ref{app:experiments}. Shown in Appendix~\ref{app:experiments:llms}, these seemingly simple tasks pose challenges even for some well-known LLMs. Remarkably, through simple self-verifying reflection, our best 4M Sudoku model is as good as OpenAI o3-mini \cite{openaiOpenAIO3miniSystem}, and our best 16M Mult model outperforms DeepSeek-R1 \cite{deepseek-aiDeepSeekR1IncentivizingReasoning2025} in ID difficulties. 

\subsection{Results of supervised fine-tuning}
\label{sec:experiments:sft}

First, we conduct (I) pretraining, (II) non-reflective SFT, and (III) reflective SFT as described in Section~\ref{sec:training}. In reflective SFT, we consider learning two \textbf{types of self-verification}: 1) The \textbf{binary verification} includes a single binary label indicating the overall correctness of a planned step; 2) the \textbf{detailed verification} includes a series of binary labels checking the correctness of each meaningful element in the step. The implementation of verification labels is elaborated in Appendix~\ref{app:implementation:refl-labels}. We present our full SFT results in Appendix~\ref{app:experiments:sft}, which includes training 30 models and executing 54 tests. In the following, we discuss our main findings through visualizing representative results.

\begin{figure}
    \centering
    \includegraphics[width=\linewidth]{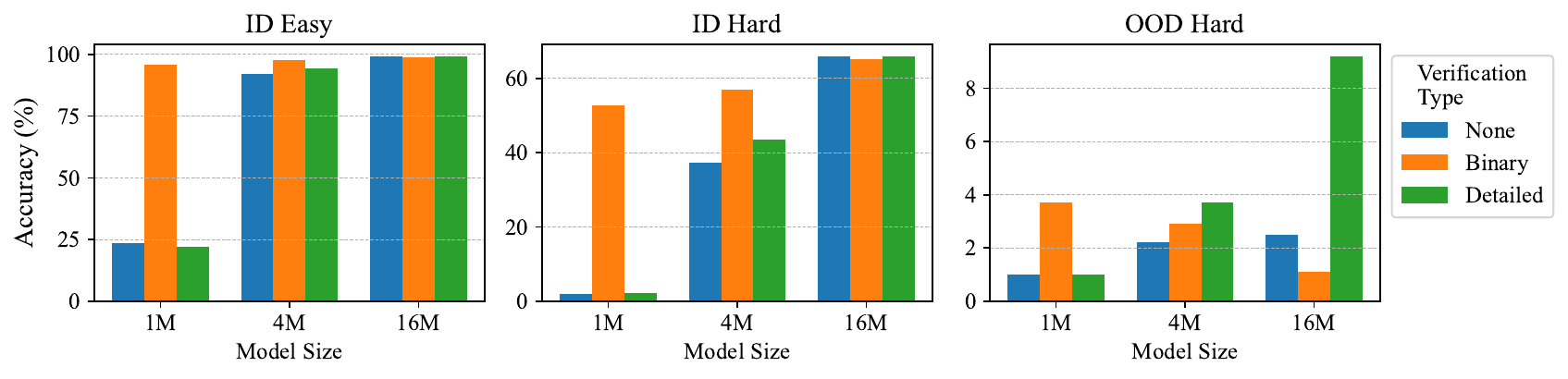}
    \vspace{-1.5em}
    \caption{
    The accuracy of \textit{non-reflective execution} of models in Mult. In each group, we compare training with various types of verification (``None'' for no reflective SFT).
    \vspace{-1em}
    }
    \label{fig:mult_accuracy_nonrefl_execution}
\end{figure}

\textbf{Does learning self-verification facilitate learning the planning policy?} We compare our models under the \textit{ non-reflective execution}, where self-verification is not actively used in test time. As shown in Figure~\ref{fig:mult_accuracy_nonrefl_execution}, reflective SFT with \textit{binary verification} brings remarkable improvements for 1M and 4M in ID-Easy and ID-Hard Mult problems, greatly reducing the gap among model sizes. Although detailed verification does not benefit as much as binary verification in ID problems, it significantly benefits the 16M model in solving OOD-Hard problems. Therefore, \textit{learning to self-verify benefits the learning of forward planning, increasing performance even if test-time reflection is not enabled}.

Since reflective SFT mixes the same CoT examples as used in non-reflective SFT, an explanation for this phenomenon is that learning to self-verify serves as a \textit{regularizer} to the planning policy. This substantially improves the quality of hidden embeddings in transformers, which facilitates the learning of CoT examples. Binary verification is inherently a harder target to learn, which produces stronger regularizing effects than detailed verification. However, the complexity (length) of the verification should match the capacity of the model; otherwise, it could severely compromise the benefits of learning self-verification. For instance, learning binary verification and detailed verification fails to improve the 16M model and the 1M model, respectively.

\begin{figure}
    \centering
    \includegraphics[width=1.0\linewidth]{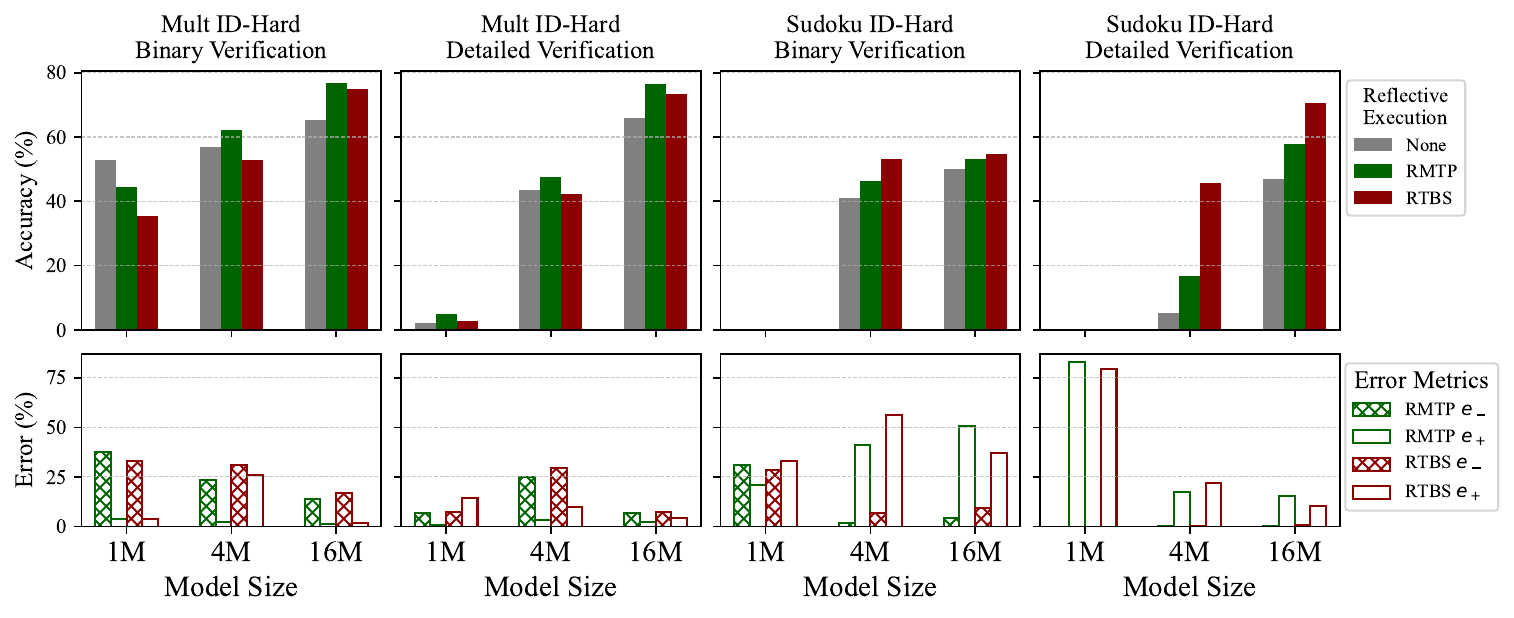}
    \vspace{-1.5em}
    \caption{Performance of reflective execution methods across different model sizes, including the accuracy (top) and the self-verification errors (bottom).}
    \label{fig:refl_sft}
\end{figure}

\textbf{When do reflective executions improve reasoning accuracy?} Figure~\ref{fig:refl_sft} evaluates the non-reflective, RMTP, and RTBS executions for models in solving ID-Hard problems. Apart from the accuracy, the rates of verification error (i.e., the \textit{false positive rate} $e_+$ and \textit{false negative rate} $e_-$ defined in Section~\ref{sec:theory}) are measured using an oracle verifier. In these results, RMTP reasoning raises the performance over non-reflective reasoning except for the 1M models (which fail in ID-hard Sudoku). Smaller error rates (especially $e_-$) generally lead to higher improvements, whereas a high $e_-$ in binary verification severely compromises the performance of the 1M Mult Model. Overall, \textit{reflection improves reasoning if the chance of rejecting correct steps ($e_-$) is sufficiently small}.

\textbf{In what task is the trace-back search helpful?} As seen in Figure~\ref{fig:refl_sft}, though RTBS shows no advantage against RMTP in Mult, it outperforms RMTP in Sudoku, especially the 4M model with detailed verification. This aligns with Theory~\ref{the:main} --- The state of Sudoku (the $9\times 9$ matrix) is required to comply with explicit verifiable rules, making incorrect states easily discriminated from correct states. However, errors in Mult states can only be checked by recalculating all historical steps. Therefore, we are more likely to have $f > \alpha$ in Sudoku, which grants a higher chance of solving harder problems. This suggests that \textit{RTBS can be more helpful than RMTP if incorrect states in the task carry verifiable errors}, which validates our theoretical results.

\subsection{Results of reinforcement learning}
\label{sec:experiments:rl}

\begin{figure}[htb]
    \centering
    \includegraphics[width=\linewidth]{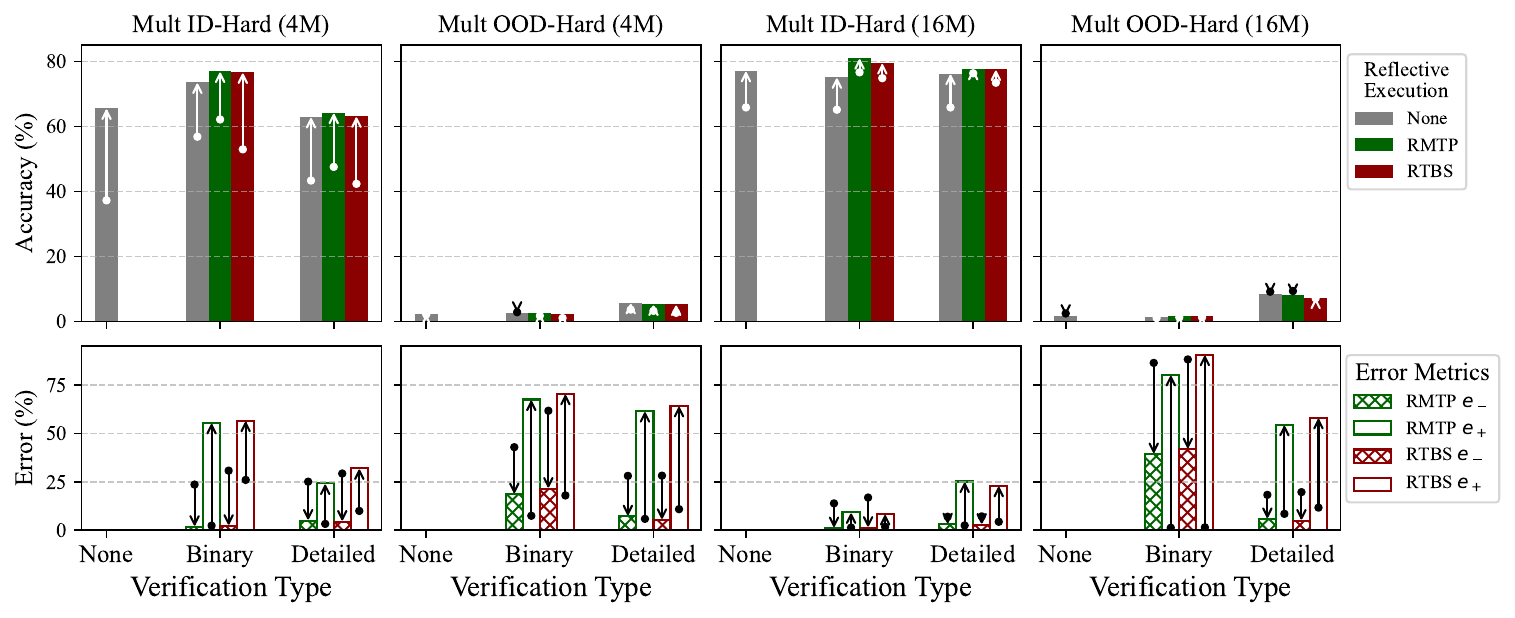}
    \vspace{-1.5em} 
    \caption{Performance of the 4M and 16M models in Mult after GRPO, including accuracy and the verification error rates. As an ablation, we also include non-reflective models. The vertical arrows start from the baseline accuracy after SFT, presenting the relative change caused by GRPO.}
    \label{fig:grpo-acc-err}
\end{figure}

As introduced in Section~\ref{sec:training}, we further apply GRPO to fine-tune the models after SFT. Especially, GRPO based on RMTP allows solution planning and verification to be jointly optimized for self-verifying policies. The full GRPO results are presented in Appendix~\ref{app:experiments:grpo}, and the main findings are presented below. Overall, RL does enable most models to better solve ID problems, yet such improvements arise from a superficial shift in the distribution of known reasoning skills.

\textbf{How does RL improve reasoning accuracy?} Figure~\ref{fig:grpo-acc-err} presents the performance of 4M and 16M models in Mult after GRPO, where the differences from SFT results are visualized. GRPO effectively enhances accuracy in solving ID-Hard problems, yet the change in OOD performance is marginal. Therefore, \textit{RL can optimize ID performance, while failing to generalize to OOD cases}.

\textbf{Does RL truly enhance verification?} From the change of verification errors in Figure~\ref{fig:grpo-acc-err}, we find that the false negative rate $e_-$ decreases along with an increase in the false positive rate $e_+$. This suggests that models learn an \textit{optimistic bias}, which avoids rejecting correct steps through a high false positive rate that bypasses verification. In other words, \textit{instead of truly improving the verifier} (where $e_-$ and $e_+$ both decrease), \textit{RL mainly induces an error-type trade-off}, shifting from false negatives ($e_+$) to false positives ($e_-$).

To explain this, we note that a high $e_-$ raises the computational cost (Proposition~\ref{pro:refl-cost}) and thus causes a significant performance loss under the limited budget of RL sampling, making reducing $e_-$ more rewarding than maintaining a low $e_+$. Meanwhile, shifting the error type is easy to learn, achievable by adjusting only a few parameters in the output layer of the transformer.

Inspired by DeepSeek-R1 \cite{deepseek-aiDeepSeekR1IncentivizingReasoning2025}, we additionally examine how RL influences the frequency of reflective behavior. To simulate the natural distribution of human reasoning, we train models to perform \textbf{optional detailed verification} by adding examples of \textit{empty verification} (in the same amount as the full verification) into reflective SFT. This allows the policy to optionally omit self-verification, usually with a higher probability than producing full verification, since empty verification is easier to learn. Consequently, we can measure the \textit{reflection frequency} by counting the proportion of steps that include non-empty verification. Since models can implicitly omit binary verification by producing false positive labels, we do not explicitly examine the optional binary verification.

\paragraph{When does RL incentivize frequent reflection?} Figure~\ref{fig:refl-freq-mult-4m} shows reflection frequency in Mult before and after GRPO, comparing exploratory ($1.25$) and exploitative ($1$) temperatures when sampling experience CoTs. With a temperature $1.25$, GRPO elicits frequent reflection, especially on hard queries. However, reflection frequency remains low if using temperature $1$. Additional results for other model sizes and Sudoku appear in Appendix~\ref{app:experiments:refl-freq}. In conclusion, \textit{RL can adapt reflection frequency to align with the exploratory ability of the planning policy $\pi$, encouraging more reflection if the policy can potentially explore rewards}. This helps explain why RL promotes frequent reflection in LLMs \cite{deepseek-aiDeepSeekR1IncentivizingReasoning2025}, as the flexibility of language naturally fosters exploratory reasoning.

\begin{figure}[htb]
    \centering
    \includegraphics[width=0.8\linewidth]{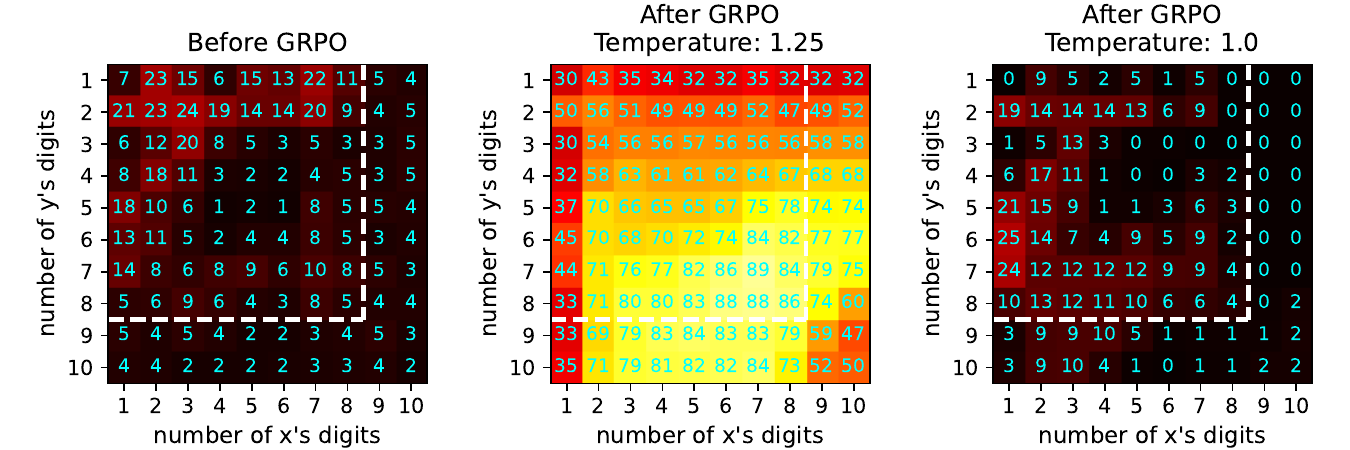}
    \caption{The hot-maps of reflection frequencies of the 4M transformer in multiplication before and after GRPO using temperatures $1$ and $1.25$, tested with RMTP execution. The $i$-th row and $j$-th column shows the frequency (\%) for problems $x \times y$ where $x$ has $j$ digits and $y$ has $i$ digits.}
    \label{fig:refl-freq-mult-4m}
\end{figure}

\section{Conclusion and Discussion}

In this paper, we provide a foundational analysis of self-verifying reflection in multi-step CoTs using small transformers. Through minimalistic prototypes of reflective reasoning (the RMTP and RTBS), we demonstrate that self-verification benefits both training and execution. Compared to natural-language reasoning based on LLMs, the proposed minimalistic framework performs effective reasoning and reflection using limited computational resources. We also show that RL fine-tuning can enhance the performance in solving in-distribution problems and incentivize reflective thinking for exploratory reasoners. However, the improvements from RL rely on shallow patterns and lack generalizable new skills. Overall, we suggest that self-verifying reflection is inherently beneficial for CoT reasoning, yet its synergy with RL fine-tuning is limited in superficial statistics.

\paragraph{Limitations and future work} Although the current training pipeline enables tiny transformers to reason properly through reflective CoTs, the \textit{generalization ability} is still low and not improved in RL. Therefore, future work will extend reflection frameworks and explore novel training approaches. Observing the positive effect of learning self-verification, a closer connection between generative and discriminative reasoning may be the key to addressing this challenge. Additionally, how our findings \textit{transfer from small transformers to natural-language LLMs} needs to be further examined. However, the diversity of natural language and high computational cost pose significant challenges to comprehensive evaluation, and our proposed framework does not sufficiently exploit the emergent linguistic ability of LLMs. To this end, we expect to investigate a more flexible self-verification framework with an efficient evaluator of natural-language reflection in future work.

\begin{ack}
We gratefully acknowledge Dr. \textit{Linyi Yang} for providing partial computational resources.

\end{ack}

\printbibliography


\newpage
\appendix
\tableofcontents
\section{Notations}
\label{app:notations}

The notations used in the main paper are summarized in Table~\ref{tab:notations}. Notations only appear in the appendix are not included.

\begin{table}[h]
    \centering
    \caption{Notations in the main paper.}
    
    \begin{tabular}{c|l}
         \hline
         Symbol & Meaning \\
         \hline
         $\seq{Q}$ & The query of CoT reasoning \\
         $\{\seq{R}\}$ & The sequence of intermediate reasoning steps \\
         $\seq[t]{R}$ & The $t$-th intermediate step in CoT reasoning \\
         $\seq{A}$ & The answer of CoT reasoning. \\
         $T$ & The number of steps (including the final answer) in an CoT \\
         $\pi$ & The planning policy in MTP reasoning \\
         $\seq[t]{s}$ & The $t$-th state in CoT reasoning \\
         $\mathcal{T}$ & The transition function in an MTP \\
         ``$\checkmark$'' & The special token as the positive label of verification. \\
         ``$\times$'' & The special token as the negative label of verification \\
         $\seq[t]{V}$ & The verification sequence for the proposed step $\seq[t]{R}$. \\
         $\mathcal{V}$ & The verifier such that $\seq[t+1]{V} \sim \mathcal{V}(\cdot|\seq[t]{S}, \seq[t+1]{R})$ \\
         $\tilde{\seq[t]{R}}$ & The verified reasoning step, i.e. $(\seq[t]{R}, \seq[t]{V})$\\
         $\tilde{\mathcal{T}}$ & The reflective transition function in an RMTP \\
         $\tilde{\mathcal{\pi}}$ & The self-verifying policy, i.e. $\{\pi, \mathcal{V}\}$ \\
         $m$ & The RTBS width, i.e. maximal number of attempts on each state\\
         $\mu$ & The probability of proposing a correct step on positive states\\
         $e_-$ & The probability of instantly rejecting a correct step on positive states\\
         $e_+$ & The probability of accepting an incorrect step on positive states\\
         $f$ & The probability of instantly rejecting any step on negative states\\
         $\alpha$ & The shorthand of $\mu e_- + (1-\mu)(1 - e_+)$ \\
         $\rho(n)$ & The accuracy of non-reflective MTP reasoning\\
         $\tilde\rho(n)$ & The accuracy of RMTP reasoning for queries with scale $n$ \\
         $\tilde\rho_m(n)$ & The accuracy of RTBS reasoning with width $m$ for queries with scale $n$ \\
         \hline
    \end{tabular}
    \label{tab:notations}
\end{table}
\section{Details of reinforcement learning} \label{app:rl}

This section introduces PPO and GRPO algorithms used in RL fine-tuning. We introduce PPO and GRPO under the context of MTP, which is described in Section~\ref{sec:mtp}. This also applies to RMTP reasoning in Section~\ref{sec:rmtp}, as RMTP is a special MTP given the self-verifying policy $\tilde{\pi}$ and the reflective transition function $\tilde{\mathcal{T}}$.

For any sequence $\seq{X}$ of tokens, we additionally define the following notations: $\subtok{X}{i}$ denotes the $i$-th token, $\subseq{X}{<i}$ ($\subseq{X}{\le i}$) denotes the former $i-1$ ($i$) tokens, and $|\seq{X}|$ denotes the length (i.e., the number of tokens).

Both PPO and GRPO iteratively update the reasoning policy through online experience. Let $\pi_{\theta}$ to denote a reasoning policy parameterized by $\theta$. On each iteration, PPO and GRPO use a similar process to update $\theta$:

\begin{enumerate}[left=5pt]
    \item Randomly draw queries from the task or taring set, and apply the old policy $\pi_{\theta_{old}}$ to sample experience CoTs.
    \item Use reward models to assign rewards to the experience CoTs. Let $\operatorname{ORM}$ and $\operatorname{PRM}$ be the outcome reward model and process reward model, respectively. For each CoT $(\seq{Q}, \seq[1]{R}, \ldots, \seq[T-1]{R}, A)$, we obtain outcome rewards $r_o = \operatorname{ORM}(Q,A)$ and the process rewards $r_t = \operatorname{PRM}(\seq[t]{S}, \seq[t+1]{R})$ for $t = 0, 1, \ldots, T-1$ (where $\seq[T]{R} = A$). In our case, we only use the outcome reward model and thus all process rewards are $0$.
    \item Then, $\theta$ is updated by maximizing an objective function based on the experience CoTs with above rewards. Especially, PPO additionally needs to update an value approximator.
\end{enumerate}

\subsection{Proximal policy optimization}
\label{app:rl:ppo}

PPO \cite{schulmanProximalPolicyOptimization2017} is a classic RL algorithm widely used in various applications. It includes a value model $v$ to approximate the value function, namely the expected cumulated rewards:
\begin{equation}
v(\seq[t]{S}, \subseq[t]{R}{<i}) = \mathbb{E}_{\pi} \left(r_o +\sum_{k=t}^T r_k \right)
\end{equation}
Let
$q_{t,i}(\theta) = \frac{
    \pi_\theta\left(\subtok[t]{R}{i} \middle| \seq[t]{S}, \subseq[t]{R}{<i} \right)
}{
    \pi_{\theta_{old}}\left(\subtok[t]{R}{i} \middle| \seq[t]{S}, \subseq[t]{R}{<i} \right)
}$
be the relative likelihood of the $i$-th token in the $t$-th step, and $\pi_{ref}$ be the reference model (e.g., the policy before RL-tuning). Then, the PPO algorithm maximizes
\begin{align}
J_{PPO}(\theta) =&
\mathbb{E}_{\seq{Q} \sim P(\seq{Q}), \{\seq{R}\} \sim \pi_{\theta_{old}}}
\frac{1}{\sum_{t=1}^T |\seq[t]{R}|}
\sum_{t=1}^{T} \sum_{i=1}^{|\seq[t]{R}|} \notag \\
&
\left\{
\min \left[
    q_{t,i}(\theta) \hat{A}_{t,i},
    \operatorname{clip}\left(
        q_{t,i}(\theta),
        1-\varepsilon,
        1+\varepsilon
    \right) \hat{A}_{t,i}
\right]
- \beta \mathbb{D}_{K L}\left[\pi_\theta \Vert \pi_{r e f}\right]
\right\}.
\label{eq:ppo}
\end{align}
Here, $\hat{A}_{t,i}$ is the advantage of the $i$-th token in step $t$, computed using the value model $v$. For example,
$\hat{A}_{t,i} = v(\seq[t]{S}, \subseq[t]{R}{<i}, \subtok[t]{R}{i}) - v(\seq[t]{S}, \subseq[t]{R}{<i})$ is a simple way to estimate advantage. In practice, advantages can be estimated using the general advantage estimation (GAE) \cite{schulmanHighDimensionalContinuousControl2018}.

The value model $v$ is implemented using \textit{the same architecture as the reasoner except for the output layer}, which is replaced by a linear function that outputs a scalar value. The value model is initialized using the same parameters as the reasoner, apart from the output layer. Assuming that $v$ is parameterized by $\omega$, we learn $v$ by minimizing the temporal-difference error:
\begin{equation}
    J_v(\omega) = \mathbb{E}_{\seq{Q} \sim P(\seq{Q}), \seq{R} \sim \pi_{\theta_{old}}}
    \sum_{t=1}^T \sum_{i=1}^{|\seq[t]{R}|} \left( v_\omega(\seq[t]{S}, \subseq[t]{R}{<i}) - v_{\omega_{old}}(\seq[t+1]{S}) \right)^2.
\end{equation}

Although PPO proves effective in training LLMs \cite{ouyangTrainingLanguageModels2022}, we deprecate using it in training tiny transformers due to the difficulty of learning the value function. Since the value model $v$ is also a tiny transformer, its weakness in model capacity severely compromise the precision of value approximation, leading to unreliable advantage estimation.

\subsection{Group-reward policy optimization}

PPO requires learning an additional value model, which can be expensive and unstable. Alternatively, GRPO \cite{shaoDeepSeekMathPushingLimits2024} directly computes the advantages using the relative rewards from a group of $G$ solutions. For each query $\seq{Q}$, it samples a group of $G$ solutions:
\begin{equation}
    \{\seq[g]{R}\} = (\seq[g,1]{R}, \ldots, \seq[g,T_g-1]{R}, A_g) \sim \pi_{\theta_{old}},\qquad \text{for}\ g = 1, \ldots, G.
\end{equation}
In this group, each solution $\{\seq[g]{R}\}$ contains $T_g$ steps, where the answer $A_g$ is considered as the final step $\seq[g,T_g]{R}$. Using the reward models, we obtain process rewards $\boldsymbol{r}_p := \{(r_{g,1},\ldots,r_{g,T_g})\}_{g=1}^G$ and outcome rewards $\boldsymbol{r}_{o} :=\{r_{g,o}\}_{g=1}^G$. Then, GPRO computes the normalized rewards, given by:
\begin{equation}
    \tilde{r}_{g,t} = \frac{r_{g,t} - \operatorname{mean} \boldsymbol{r}_p}{\operatorname{std} \boldsymbol{r}_p},\ 
    \tilde{r}_{g,o} = \frac{r_{g,o} - \operatorname{mean} \boldsymbol{r}_o}{\operatorname{std} \boldsymbol{r}_o}
\end{equation}
Afterwards, the advantage of step $t$ in the $g$-th solution of the group is $\hat{A}_{g,t} = \tilde{r}_{g,o} + \sum_{k=t}^{T_g} \tilde{r}_{g,t'}$. Let
$q_{g,t,i}(\theta) = \frac{
    \pi_\theta\left(\subtok[g,t]{R}{i} \middle|\seq[g,t]{S}, \subseq[g,t]{R}{<i} \right)
}{
    \pi_{\theta_{old}}\left(\subtok[g,t]{R}{i} \middle| \seq[g,t]{S}, \subseq[g,t]{R}{<i} \right)
}$
be the relative likelihood of the $i$-th token in the $t$-th step from the $g$-th solution. Then, the GRPO objective is to maximize the following:
\begin{align}
J_{GRPO}(\theta) =&
\mathbb{E}_{\seq{Q}\sim P(\seq{Q}), \{\seq[g]{R}\} \sim \pi_{old}}
\frac{1}{G} \sum_{g=1}^G 
\frac{1}{\sum_{t=1}^{T_g} |\tau^{(t)}|} 
\sum_{t=1}^{T_g}
\sum_{i=1}^{|\seq[g,t]{R}|} \notag \\
& \left\{
\min \left[
    q_{g,t,i}(\theta) \hat{A}_{g,t},
    \operatorname{clip}\left(
        q_{g,t,i}(\theta),
        1-\varepsilon,
        1+\varepsilon
    \right) \hat{A}_{g,t}
\right]
- \beta \mathbb{D}_{K L}\left[\pi_\theta \Vert \pi_{r e f}\right]
\right\}
\label{eq:grpo}
\end{align}

\subsection{Technical Implementation}

We made two technical modifications that make RL more suitable in our case, described in the following.

First, in RMTP, we \textit{mask off the advantage of rejected steps}, while the advantage of self-verification labels is reserved. This prevents the algorithm from increasing the likelihood of rejected steps, allowing the planning policy $\pi$ to be properly optimized. In practice, we find this modification facilitates the training of models that perform mandatory detailed verification. Otherwise, RL could make the reasoner excessively rely on reflection, leading to CoTs that are unnecessarily long.

Second, we employ an \textit{early-truncating strategy} when sampling trajectories in training. If the model has already made a clear error at some step (detected using an oracle process reward model), we truncate the trajectory as it is impossible to find a correct answer. This avoids unnecessarily punishing later steps due to previous deviations, as some later steps may be locally correct in their own context. Empirically, we find this modification reduces the training time required to reach the same performance, while the difference in final performance is marginal.

\section{Theory}
\label{app:theory}

\subsection{A general formulation of reasoning performance}
\label{app:bellman}

Let $\sset$ denote the state space and $\oset$ denote the answer space. We use $\osetof{Q} \subseteq \oset$ to denote the set of correct answers for some input query $\seq{Q}$. Given any thought state $\seq{S}$, the accuracy, namely the probability of finding a correct answer, is denoted as
\begin{equation}
    \rho_{\seq{Q}}(\seq{S}) = p_{(\seq[t+1]{R}, \seq[t+2]{R}, \ldots, \seq{A}) \sim \pi}(\seq{A} \in \osetof{Q} \mid \seq[t]{S} = \seq{S})
\end{equation}

\subsubsection{Bellman equations in RMTP}

By considering the reasoning correctness as the binary outcome reward, we may use Bellman equations \cite{suttonReinforcementLearningIntroduction2018} to provide a general formulation of the reasoning performance for arbitrary MTPs (RMTP). For simplicity, we use $\seq{S}$, $\seq{R}$, and $\seq{S}'$ to respectively denote the state, step, and next state in a transition.

Initially, in the absence of a trace-back mechanism, the accuracy $\rho_{\seq{Q}}(s)$ can be interpreted as the value function when considering the MTP as a goal-directed decision process. For simplicity, we denote the transition probability drawn from the reasoning dynamics $\mathcal{T}$ as $p(\nexts \mid \seq{S}, \seq{R})$. In non-reflective reasoning, the state transition probability $p(\nexts \mid \seq{S})$ can be expressed as:
\begin{equation}
p(\nexts|\seq{S}) = \sum_{\seq{R}}p(\nexts|\seq{S}, \seq{R})\pi(\seq{R}|\seq{S})
\end{equation}
When using RMTP execution, assuming that $\xi(\seq{S},\seq{R}) := p_{\seq{V} \sim \mathcal{V}}(\text{``$\times$''} \in \seq{V} \mid \seq{S}, \seq{R})$ represents the probability of rejecting the step $\seq{R}$, we have:
\begin{equation}
p(\nexts|\seq{S}) = 
\begin{cases}
\sum_{R} \pi(\seq{R}|\seq{S})(1 - \xi(\seq{S},\seq{R}))p(\nexts|\seq{S},\seq{R}), & \text{if } \nexts \neq \seq{S} \\
\sum_{R} \pi(\seq{R}|\seq{S})\left((1 - \xi(\seq{S},\seq{R}))p(\nexts|\seq{S},\seq{R}) + \xi(\seq{S},\seq{R})\right), & \text{if } \nexts = \seq{S}
\end{cases}
\end{equation}
Consequently, the Bellman equation follows:
\begin{equation}
\rho_{\seq{Q}}(\seq{S}) = 
\begin{cases}
1, & \text{if } \seq{S} \in \osetof{Q} \\
0, & \text{if } \seq{S} \in \oset \setminus \osetof{Q} \\
\sum_{\nexts} \rho_{\seq{Q}}(\nexts) p(\nexts \mid \seq{S}), & \text{if } s \in \sset \setminus \oset
\end{cases}
\end{equation}

\subsubsection{Bellman equations in RTBS}

Let $m$ denote the number of attempts at each state, and let $\phi(\seq{S})$ represent the failure probability (i.e., the probability of tracing back after $m$ rejected steps) at state $\seq{S}$. The probability of needing to retry a proposed step due to instant rejection or recursive rejection is given by:
\begin{equation}
\epsilon(\seq{S}) = \sum_{\seq{R}} \pi(\seq{R}|\seq{S}) \left(
\xi(\seq{S},\seq{R}) + \sum_{\nexts}(1 - \xi(\seq{S},\seq{R}))p(\nexts|\seq{S},\seq{R})\phi(\nexts) 
\right)
\end{equation}
The failure probability is then given by $\phi(\seq{S}) = \epsilon^m(\seq{S})$.
When there are $k$ attempts remaining at the current state $\seq{S}$, we denote the accuracy as $\rho_x(\seq{S},k)$, given by:
\begin{equation}
\rho_{Q}(\seq{S},k) = 
\begin{cases}
\epsilon(\seq{S}) \rho_x(\seq{S},k-1) +  \sum_{\seq{R}}\pi(\seq{R}|\seq{S})(1 - \xi(\seq{S},\seq{R})) \sum_{\nexts} p(\nexts|\seq{S},\seq{R}) \rho_{Q}(\nexts), & k > 0 \\
0, & k = 0 
\end{cases}
\end{equation}
It follows that $\rho_x(\seq{S}) = \rho_x(\seq{S},m)$. This leads to a recursive formulation that ultimately results in the following equations for each $s \in \mathcal{S}$:
\begin{gather}
\epsilon(\seq{S}) = \sum_{\seq{R}} \pi(\seq{R}|\seq{S}) \left(
\xi(\seq{S},\seq{R}) + \sum_{\nexts}(1 - \xi(\seq{S},\seq{R}))p(\nexts|\seq{S},\seq{R})\epsilon^m(\nexts)
\right), \\
\rho_x(\seq{S}) = \frac{1 - \epsilon^m(\seq{S})}{1 - \epsilon(\seq{S})} \sum_{\nexts} \rho_{Q}(\nexts) \pi(\seq{R}|\seq{S})(1 - \xi(\seq{S},\seq{R})) p(\nexts|\seq{S},\seq{R}).
\end{gather}

\subsection{Accuracy derivation in the simplified reasoning task}
\label{app:simple:accuracy}

In the following, we derive the accuracy of reflective reasoning with and without the trace-back search, given the simplified reasoning task in Section~\ref{sec:theory}. For each proposed step on a correct state, we define several probabilities to simplify notations: $\alpha := \mu e_{-} + (1 - \mu)(1 - e_{+})$ is the probability of being \textbf{instantly rejected}; $\beta =  \mu(1 - e_-)$ is the probability of being \textbf{correct and accepted}; $\gamma = (1 - \mu)e_{+}$ is the probability of being \textbf{incorrect but accepted}. Note that $\beta$ here no longer refers to the KL-divergence factor in Appendix~\ref{app:rl}.

\begin{proposition} \label{pro:refl-performance-simple}
The RTMP accuracy $\tilde\rho(n)$ for problems with a scale of $n$ is
\begin{equation} \label{eq:acc-rmtp}
    \tilde{\rho}(n) = \left(\frac{\beta}{1 - \alpha}\right)^n
\end{equation}
Let $m$ be the width of RTBS. Let $\delta_m(n)$ and $\epsilon_m(n)$ be the probability of a proposed step being rejected (either instantly or recursively) on a correct state and incorrect state of scale $n$, respectively. We have $\sigma_m(0) = \epsilon_m(0) = 0$ and the following recursive equations for $n > 0$:
\begin{gather}
\delta_m(n) = \alpha + \beta (\delta_m(n-1))^m + \gamma (\epsilon_m(n-1))^m \label{eq:rej-correct} \\
\epsilon_m(n) = f + (1 -f)\left(\epsilon_m(n-1)\right)^m \label{eq:rej-incorrect}
\end{gather}
Then, the RTBS accuracy $\tilde{\rho}_{m}(n)$ for problems with a scale of $n$ is given by
\begin{equation} \label{eq:acc-rtbs}
    \tilde{\rho}_{m}(n) = \prod_{t=1}^n \sigma_m(t),\qquad\text{where }
    \sigma_m(t) = \beta \sum_{i=0}^{m-1} (\delta_m(t))^i = \frac{1 - (\delta_m(t))^m}{1 - \delta_m(t)} \beta.
\end{equation}
In addition, $\delta_m(n)$, $\epsilon_m(n)$ and $\sigma_m(n)$ all motonously increase and converge in relation to $n$.
\end{proposition}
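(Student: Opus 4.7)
The plan is to verify each of the three distinct claims in turn --- the closed form for $\tilde{\rho}(n)$, the coupled recursions for $\delta_m(n)$ and $\epsilon_m(n)$, the product formula for $\tilde{\rho}_m(n)$ --- and then establish monotone convergence of the three sequences. In all cases I will rely on the fact that, on a positive scale-$n$ state, a single proposed step has exactly three mutually exclusive outcomes of probabilities $\alpha$ (instant rejection), $\beta$ (accepted, moves to positive scale-$(n{-}1)$ state), and $\gamma$ (accepted, moves to negative scale-$(n{-}1)$ state), which sum to $1$; the analogous decomposition on negative states is simply ``instant reject'' vs. ``accept and recurse into negative scale-$(n{-}1)$'' with probabilities $f$ and $1-f$.

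First I would handle the RMTP accuracy. Because RMTP retries indefinitely until an attempt is accepted, the state must eventually advance, and conditional on advancement the chance of landing in a positive child is $\beta/(\beta+\gamma)=\beta/(1-\alpha)$. A one-line induction on $n$ starting from $\tilde{\rho}(0)=1$ then yields $\tilde{\rho}(n)=(\beta/(1-\alpha))^n$. For the recursions, each attempt on a positive scale-$n$ state is recursively rejected exactly when it is either instantly rejected, or accepted into a positive child whose $m$ attempts are all in turn rejected, or accepted into a negative child whose $m$ attempts are all rejected; collecting terms gives the stated recursion for $\delta_m(n)$, and the same bookkeeping on a negative state (where only negative children arise by definition) yields the recursion for $\epsilon_m(n)$. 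The base values $\delta_m(0)=\epsilon_m(0)=0$ reflect that reaching a scale-$0$ state terminates the process so no recursive rejection is ever recorded there.

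The product formula $\tilde{\rho}_m(n)=\prod_{t=1}^n \sigma_m(t)$ is the central step and I would prove it by induction on $n$, which I expect to be the main obstacle because the geometric structure must survive despite the presence of a third ``terminates at a wrong answer'' outcome that neither succeeds nor triggers a retry. The key observation is that at a scale-$t$ positive state a single attempt partitions into (i) downstream success, with probability $\beta\,\tilde{\rho}_m(t-1)$, (ii) rejection with trace-back, with probability $\delta_m(t)$, and (iii) downstream termination at an incorrect answer, contributing the remaining mass. Because the RTBS stopping rule terminates on the first success or the first wrong-answer termination and otherwise re-enters this state until $m$ attempts are exhausted, the overall success probability at scale $t$ equals the per-attempt success probability times the finite geometric factor $1+\delta_m(t)+\cdots+\delta_m(t)^{m-1}$. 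This gives $\tilde{\rho}_m(t)=\sigma_m(t)\,\tilde{\rho}_m(t-1)$, whence the product; crucially, classes (i) and (iii) collapse into a single ``stop'' event in this geometric counting, so only class (ii) produces the sum.

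Finally, monotonicity and convergence of the three sequences follow from the recursions by straightforward induction. The map $x\mapsto f+(1-f)x^m$ is increasing in $x\in[0,1]$, so $\epsilon_m(n)$ is non-decreasing from $\epsilon_m(0)=0$ and bounded above by $1$, hence convergent. Joint induction on $(\delta_m(n),\epsilon_m(n))$, using that the right-hand side of the $\delta_m$-recursion is increasing in both arguments, gives the same conclusion for $\delta_m$. Finally $\sigma_m(n)=\beta\sum_{i=0}^{m-1}\delta_m(n)^i$ is manifestly increasing in $\delta_m(n)$ and bounded, so $\sigma_m$ also converges. The only care required is to order the inductive arguments so that the monotonicity of $\epsilon_m$ is established before it is invoked in the $\delta_m$ step.
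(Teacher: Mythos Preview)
Your proposal is correct and follows essentially the same route as the paper: the case analysis on a positive state to obtain the $\delta_m,\epsilon_m$ recursions, the geometric-sum argument for $\tilde\rho_m(n)=\sigma_m(n)\tilde\rho_m(n-1)$ (the paper makes this explicit via an auxiliary quantity $\tilde\rho_{i|m}(n)$ indexed by the number of attempts already used, but the content is identical), and the inductive monotonicity proof in the order $\epsilon_m\to\delta_m\to\sigma_m$. The one organizational difference is that you derive the RMTP accuracy directly by conditioning on eventual acceptance, whereas the paper obtains it at the end by specializing RTBS (replace $\delta_m(n)$ by $\alpha$ and send $m\to\infty$); your direct argument is slightly cleaner, while the paper's has the virtue of showing RMTP as the $m\to\infty$ limit of RTBS.
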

\begin{proof}
We first consider reasoning through RTBS. Let $\phi_m(n)$ and $\psi_m(n)$ denote the probabilities of failure (reaching the maximum number of attempts) in correct and incorrect states, respectively. Let $\tilde{\rho}_{i|m}(n)$ indicate the accuracy after the $i$ attempts at the current sub-problem of scale $n$. Therefore, we have $\tilde{\rho}_m(n) = \tilde{\rho}_{0|m}(n)$ and $\tilde{\rho}_{m|m}(n) = 0$.

At a correct state, we have the following possible cases:

\begin{itemize}
    \item A correct step is proposed and instantly accepted with probability $\beta =\mu(1 - e_{-})$. In this case, the next state has a scale of $n - 1$, which is correctly solved with probability $\rho_{0|m}(n - 1)$ and fails (i.e., is recursively rejected) with probability $\phi_m(n-1)$.
    \item A correct step is proposed and instantly rejected with probability $\mu e_{-}$.
    \item An incorrect step is proposed and instantly accepted with probability $\gamma = (1 - \mu)e_{+}$. In this scenario, the next state has a scale of $n - 1$, which fails with probability $\psi_m(n-1)$.
    \item An incorrect step is proposed and instantly rejected with probability $\beta = (1 - \mu)(1 - e_{+})$.
\end{itemize}

Thus, we have a probability of $\alpha = \mu e_{-} + (1 - \mu)(1 - e_{+})$ to instantly reject the step, and a probability of $\beta \phi_m(n-1) + \gamma \psi_m(n-1)$ to recursively reject the step. Therefore, the overall probability of \textit{rejecting an attempt on correct states} is:
\begin{equation} \label{eq:rej-correct-raw}
    \delta_{m}(n) = \alpha  + \beta \phi_m(n-1) + \gamma \psi_m(n-1).
\end{equation}
Since failure occurs after $m$ rejections, we have:
\begin{equation} \label{eq:fail-correct}
    \phi_m(n) = \left(\delta_m(n) \right)^m
\end{equation}

At an incorrect state, we have a probability $f$ to instantly reject a step. Otherwise, we accept the step, and the next state fails with probability $\psi_m(n-1)$. Therefore, the overall probability of rejecting an attempt for incorrect states is:
\begin{equation} \label{eq:rej-incorrect-raw}
    \epsilon_m(n) = f + (1 -f)\psi_m(n-1).
\end{equation}
Similarly, we obtain:
\begin{equation} \label{eq:fail-incorrect}
    \psi_m(n) = \left(\epsilon_m(n) \right)^m
\end{equation}

By substituting Equations~\eqref{eq:fail-correct} and \eqref{eq:fail-incorrect} into Equations~\eqref{eq:rej-correct-raw} and \eqref{eq:rej-incorrect-raw}, we obtain Equations~\eqref{eq:rej-correct} and \eqref{eq:rej-incorrect}. If an attempt is rejected (either instantly or recursively), we initiate another attempt which solves the problem with a probability of $\rho_{i+1|m}(n)$. Therefore, we have the recursive form of the accuracy, given by:
\begin{equation}
\tilde\rho_{i|m}(n) = \beta \tilde\rho_{0|m}(n-1) + \delta(n, m)\tilde\rho_{i+1|m}(n)
\end{equation}
Thus, we can expand $\tilde\rho_m(n)$ as:
\begin{align}
\tilde\rho_m(n) &= \tilde\rho_{0|m}(n) \notag \\
&= \beta \tilde\rho_m(n-1) + \delta_m(n) \tilde\rho_{1|m}(n) \notag \\
& \cdots \notag \\
&= (\beta  + \delta_m(n)\beta + \delta_m^2(n)\beta + \cdots + \delta_m^m(n)\beta) \tilde\rho_m(n-1)\\
&= \sigma_m(n) \tilde\rho_m(n-1) \label{eq:acc-rtbs-recur}
\end{align}

Note that $n = 0$ indicates that the state is exactly the outcome, which means $\tilde{\rho}_m(0) =1$. Then, Equation~\eqref{eq:acc-rtbs} is evident given the recursive form in Equation~\eqref{eq:acc-rtbs-recur}.

For reflective reasoning \textbf{without trace-back}, we can simply replace $\delta_m(n)$ with $\alpha$ in $\sigma_m(n)$, as only instant rejections are allowed. We then set $m \to \infty$, leading to Equation~\eqref{eq:acc-rmtp}.

\paragraph{Monotonicity} We first prove the monotonic increase of $\epsilon_m(n)$. Equation~\eqref{eq:rej-incorrect} gives $\epsilon_m(n) =  f + (\epsilon_m(n-1))^m$ and $\epsilon_m(n + 1) =  f + (\epsilon_m(n))^m$ for each $n > 1$. Therefore, if $\epsilon_m(n) \geq \epsilon_m(n-1)$, we have:
\begin{equation}
    \epsilon_m(n + 1) =  f + (\epsilon_m(n))^m \geq f + (\epsilon_m(n -1))^m = \epsilon_m(n).
\end{equation}
Additionally, it is clear that $\epsilon_m(1) = f \geq 0 = \epsilon_m(0)$. Using mathematical induction, we conclude that $\epsilon_m(n + 1) > \epsilon_m(n)$ for all $n \geq 0$. The monotonicity of $\delta_m(n)$ can be proven similarly, and the monotonicity of $\sigma_m(n)$ is evident from that of $\delta_m(n)$. Since $\delta_m(n)$ and $\epsilon_m(n)$ are probabilities, they are bounded in $[0, 1]$ and thus converge monotonically.
\end{proof}

\paragraph{Illustration of accuracy curves} Using the recursive formulae in Proposition~\ref{pro:refl-performance-simple}, we are able to implement a program to compute the reasoning accuracy in the simplified reasoning problem in Section~\ref{sec:theory} and thereby visualize the accuracy curves of various reasoning algorithms. For example, Figure~\ref{fig:reasoning_accuracy_curve} presents the reasoning curves given $\mu = 0.8$, $e_- = 0.3$, $e_+ = 0.2$, and $f = 0.8$, which lead to $\alpha=0.4 < f$. For this example, we may observe the following patterns: 1) An overly small width $m$ in RTBS leads to poor performance; and 2) by choosing $m$ properly, $\tilde\rho_m(n)$ remains stable when $n \to \infty$. These observations are formally described and proved in Appendix~\ref{app:theory:main-derivation}.

\begin{figure}[tb]
    \centering
    \includegraphics[width=0.8\linewidth]{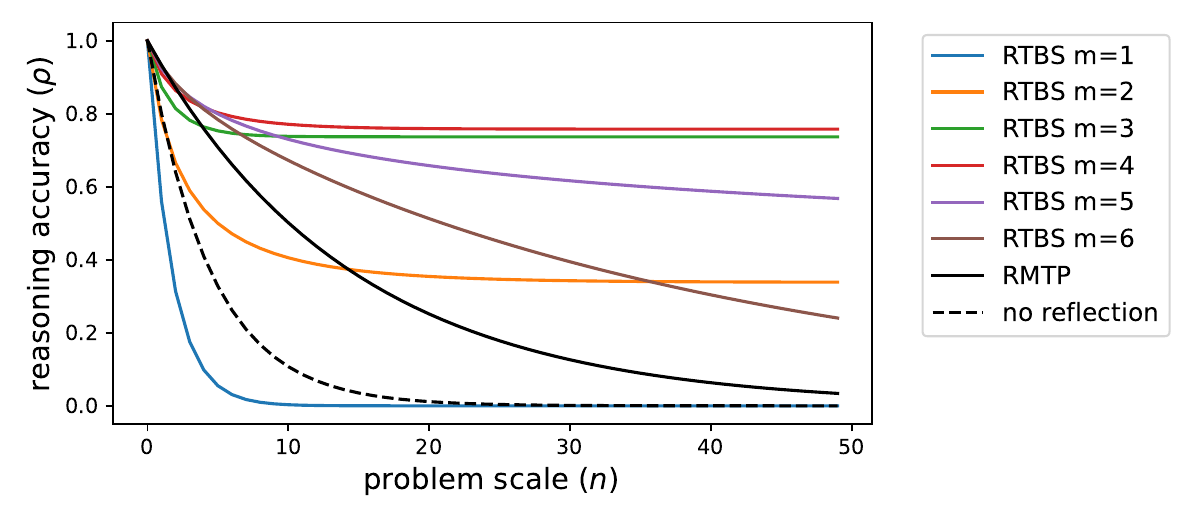}
    \caption{The accuracy curves of non-reflective MTP $\rho(n)$, RMTP $\tilde{\rho}(n)$, and RTBS $\tilde\rho_m(n)$, using $\mu = 0.8$, $e_- = 0.3$, $e_+ = 0.2$, and $f = 0.8$.}
    \label{fig:reasoning_accuracy_curve}
\end{figure}

Furthermore, in Figure~\ref{fig:reasoning_accuracy_curve} we see that a small $m$ stabilizes the drop of $\tilde\rho_m(n)$ when $n$ is large, yet it also makes $\tilde\rho_m(n)$ drop sharply in the area where $n$ is small. This indicates the potential of using an \textit{adaptive width} in RTBS, where $m$ is set small when the current subproblem (state) requires a large number $n$ of steps to solve, and $m$ increases when $n$ is reduced by previous reasoning steps. Since this paper currently focuses on the minimalistic reflection framework, we expect to explore such an extension in future work.

\subsection{Derivation of Theorem~\ref{the:main}}
\label{app:theory:main-derivation}

Theorem~\ref{the:main} is obtained by merging the following Proposition~\ref{pro:refl-valid} and Proposition~\ref{pro:tb-valid}, which also provide supplementary details on the non-trivial assumptions of factors $\mu$, $e_-$, and $e_+$. Additionally, Proposition~\ref{pro:tb-valid} also shows that there exists an ideal range of RTBS width $m$ such that stabilizes the drop of $\tilde\rho_m(n)$ when $n \to \infty$.

\begin{proposition}[RMTP Validity conditions]\label{pro:refl-valid}
For all $n \geq 0$, we have $\tilde\rho(n) \geq \rho(n) \iff e_{-} + e_{+} \leq 1$. Additionally, if $\mu > 0$ and $e_{-} < 1$, then for all $n \geq 1$ we have that $\tilde\rho(n) = \rho(n) \iff e_{-} + e_{+} = 1$ and $\tilde\rho(n)$ decreases strictly with either $e_{-}$ or $e_{+}$. 
\end{proposition}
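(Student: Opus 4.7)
The plan is to work directly with the closed-form RMTP accuracy from Proposition~\ref{pro:refl-performance-simple}, namely $\tilde\rho(n) = \bigl(\beta/(1-\alpha)\bigr)^n$ with $\beta = \mu(1-e_-)$ and $\alpha = \mu e_- + (1-\mu)(1-e_+)$, and compare it to $\rho(n) = \mu^n$ step by step. Since both quantities are $n$-th powers of non-negative per-step factors, the whole theorem reduces to analyzing the single quantity $r := \beta/(1-\alpha)$ against $\mu$.

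First, I would simplify $1 - \alpha$ by direct expansion:
\begin{equation}
1 - \alpha = 1 - \mu e_- - (1-\mu)(1-e_+) = \mu(1-e_-) + (1-\mu) e_+ = \beta + \gamma,
\end{equation}
where $\gamma := (1-\mu)e_+$. This collapses the per-step ratio to the very clean form
\begin{equation}
r = \frac{\beta}{\beta + \gamma} = \frac{\mu(1-e_-)}{\mu(1-e_-) + (1-\mu)e_+}.
\end{equation}
Once this identity is in hand, the rest is algebraic manipulation.

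Next, for the biconditional $\tilde\rho(n) \ge \rho(n) \iff e_- + e_+ \le 1$, I would cross-multiply $r \ge \mu$ using that the denominator $\beta + \gamma$ is strictly positive (here I use the non-triviality hypotheses: $\mu > 0$ and $e_- < 1$ keep $\beta > 0$, and $\mu < 1$ with $e_+ > 0$ keep the argument working both ways). The chain $\mu(1-e_-) \ge \mu\bigl(\mu(1-e_-) + (1-\mu)e_+\bigr) \iff (1-\mu)(1-e_-) \ge (1-\mu)e_+ \iff 1 - e_- \ge e_+$ gives exactly the desired inequality, and equalities correspond step by step, yielding the ``simultaneously'' claim. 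Raising to the $n$-th power preserves the inequality since both sides are non-negative, and for $n \ge 1$ both sides are strictly positive so equality persists at each $n$.

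Finally, for strict monotonicity, I would treat $r$ as a function of $e_-$ (with $e_+, \mu$ fixed) and of $e_+$ (with $e_-, \mu$ fixed). Writing $r = \bigl(1 + \gamma/\beta\bigr)^{-1}$, the ratio $\gamma/\beta = \frac{(1-\mu)e_+}{\mu(1-e_-)}$ is strictly increasing in both $e_-$ and $e_+$ under the non-triviality conditions, so $r$ is strictly decreasing in each of them; raising to the positive integer power $n \ge 1$ preserves strict monotonicity because $r \in (0,1)$ stays in the region where $x \mapsto x^n$ is strictly increasing in $x$. I don't anticipate any real obstacle: the only thing to watch carefully is keeping track of which non-triviality hypotheses are needed at which step (positivity of $\beta$, $\gamma$, $1-\mu$) so that divisions and cancellations are justified and the ``if and only if'' is genuinely tight rather than merely an implication.
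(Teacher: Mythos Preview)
Your approach is correct and essentially identical to the paper's: both rewrite the per-step ratio as $\beta/(\beta+\gamma)=\mu(1-e_-)/[\mu(1-e_-)+(1-\mu)e_+]$ (the paper equivalently presents it as $[1+\tfrac{e_+}{1-e_-}(\mu^{-1}-1)]^{-1}$) and then read off both the biconditional $r\ge\mu\iff e_-+e_+\le1$ and the strict monotonicity in $e_\pm$ directly from this form. The only cosmetic difference is that the paper first disposes of the degenerate cases $n=0$ and $\mu=0$ or $e_-=1$ explicitly, whereas you fold these into your non-triviality remarks.
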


\begin{proposition}[RTBS Validity Condition]\label{pro:tb-valid}
Assuming $0 < \mu < 1$, $e_{-} < 1$, and $e_{+} > 0$, then
\begin{equation} \label{eq:tb-valid}
\lim_{n\to\infty}\frac{\tilde\rho_m(n)}{\tilde\rho(n)} > 1 \iff \left( m > \frac{1}{1 -\alpha}\ \text{and}\ f > \alpha \right).
\end{equation}
Furthermore, we have
\begin{itemize}[itemsep=0pt, topsep=0pt, left=10pt]
    \item $\lim_{n\to\infty}\frac{\tilde\rho_m(n)}{\tilde\rho_m(n-1)} = 1$ if $m \in [\frac{1}{\mu(1 - e_{-})}, \frac{1}{1 -f}]$.
    \item $\tilde\rho_m(n)$ increases strictly with $f$ for all $n \geq 2$.
\end{itemize}
\end{proposition}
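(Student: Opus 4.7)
The plan is to reduce the ratio $\tilde\rho_m(n)/\tilde\rho(n)$ to the single-factor limit $\sigma_m^\infty := \lim_t \sigma_m(t)$, and then characterize when $\sigma_m^\infty > \beta/(1-\alpha)$ via the fixed-point equations for $\delta_m^\infty$ and $\epsilon_m^\infty$. From Proposition~\ref{pro:refl-performance-simple}, $\tilde\rho(n) = (\beta/(1-\alpha))^n$ and $\tilde\rho_m(n) = \prod_{t=1}^n \sigma_m(t)$, so the ratio equals $\prod_{t=1}^n \sigma_m(t)(1-\alpha)/\beta$. Since $\sigma_m(t)$ is monotone increasing with limit $\sigma_m^\infty$, each factor is monotone increasing, with $\sigma_m(1)(1-\alpha)/\beta = 1-\alpha^m < 1$. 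If $\sigma_m^\infty > \beta/(1-\alpha)$, tail factors eventually exceed $1+\varepsilon$ for some $\varepsilon > 0$ and the product diverges to $+\infty$; otherwise every factor lies in $(0,1]$ and the product converges to a value in $(0,1]$. Hence $\lim_n \tilde\rho_m(n)/\tilde\rho(n) > 1$ iff $\sigma_m^\infty > \beta/(1-\alpha)$.

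Next I would derive an algebraic criterion for $\sigma_m^\infty > \beta/(1-\alpha)$ by comparing $\delta_m^\infty$ and $\epsilon_m^\infty$. When $\delta_m^\infty < 1$, expanding $\sigma_m^\infty = \beta(1-(\delta_m^\infty)^m)/(1-\delta_m^\infty)$ and substituting $\delta_m^\infty - \alpha = \beta(\delta_m^\infty)^m + \gamma(\epsilon_m^\infty)^m$, together with $\alpha+\beta+\gamma = 1$, collapses the target inequality to $\epsilon_m^\infty > \delta_m^\infty$. I would establish the sign of $\epsilon_m(n) - \delta_m(n)$ by induction on $n$ using the identity
\[
\epsilon_m(n+1) - \delta_m(n+1) = (f-\alpha)(1-\epsilon_m(n)^m) + \beta(\epsilon_m(n)^m - \delta_m(n)^m),
\]
which propagates the sign of $f - \alpha$ starting from $\epsilon_m(1) - \delta_m(1) = f - \alpha$. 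Passing to limits yields the weak inequality, and to upgrade to strict, I would observe that $\epsilon_m^\infty = \delta_m^\infty = L$ substituted into the two fixed-point equations and subtracted gives $(f-\alpha)(1-L^m) = 0$, so for $f > \alpha$ and $L < 1$ the two limits cannot coincide.

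The main obstacle is the boundary case $\delta_m^\infty = 1$, where the division-based algebra breaks and instead $\sigma_m^\infty = \beta m$ via $\lim_{\delta\to 1}(1-\delta^m)/(1-\delta) = m$. I would handle this with a convex fixed-point analysis: the map $\epsilon \mapsto f + (1-f)\epsilon^m$ has derivative $(1-f)m$ at $\epsilon=1$, giving $\epsilon_m^\infty = 1 \iff m \leq 1/(1-f)$; on this branch the $\delta$-equation reduces to $\delta = 1-\beta + \beta\delta^m$, which analogously yields $\delta_m^\infty = 1 \iff m \leq 1/\beta$. Then, using $1-\alpha = \beta+\gamma \geq \beta$, I would verify the main equivalence by cases on the signs of $f-\alpha$ and $m - 1/(1-\alpha)$: if $f \leq \alpha$ the inductive sign argument yields $\sigma_m^\infty \leq \beta/(1-\alpha)$; if $f > \alpha$ and $m \leq 1/(1-\alpha)$, then automatically $m \leq 1/\beta$ and $m < 1/(1-f)$ (since $1-f < 1-\alpha$), placing us in the $\delta_m^\infty = 1$ branch with $\sigma_m^\infty = \beta m \leq \beta/(1-\alpha)$; if $f > \alpha$ and $m > 1/(1-\alpha)$, each remaining sub-case gives $\sigma_m^\infty > \beta/(1-\alpha)$, completing the equivalence.

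For the two supplementary claims, the identity $\sigma_m^\infty = 1$ on $m \in [1/\beta, 1/(1-f)]$ drops out of the same analysis: this range satisfies $\epsilon_m^\infty = 1$, and either $\sigma_m^\infty = \beta m = 1$ at the boundary $m = 1/\beta$ or, for $m > 1/\beta$, the fixed-point equation $\delta = 1-\beta+\beta\delta^m$ rewrites as $\beta(1-\delta^m)/(1-\delta) = 1$, i.e., $\sigma_m^\infty = 1$. Strict monotonicity of $\tilde\rho_m(n)$ in $f$ for $n \geq 2$ follows by a cascading induction: $\epsilon_m(n)$ strictly increases in $f$ for $n \geq 1$ starting from $\epsilon_m(1) = f$ (the recursion preserves strict monotonicity); the term $\gamma\epsilon_m(n-1)^m$ with $\gamma > 0$ then forces $\delta_m(n)$ to strictly increase in $f$ for $n \geq 2$; $\sigma_m(n) = \beta\sum_{i=0}^{m-1}\delta_m(n)^i$ is strictly increasing in $\delta_m(n)$, hence strictly in $f$ for $n \geq 2$; and since each $\sigma_m(t) > 0$, the product $\tilde\rho_m(n) = \prod_{t=1}^n\sigma_m(t)$ strictly increases in $f$ for $n \geq 2$.
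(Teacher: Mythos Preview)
Your proposal is correct and follows the same architecture as the paper: reduce the ratio limit to the single comparison $\sigma_m^\infty$ versus $\beta/(1-\alpha)$, collapse that (when $\delta_m^\infty<1$) to $\epsilon_m^\infty>\delta_m^\infty$, treat the degenerate branch $\delta_m^\infty=1$ via $\sigma_m^\infty=m\beta$, and then case-split on the signs of $f-\alpha$ and $m-1/(1-\alpha)$. The two supplementary claims are also handled the same way in both (rewriting the fixed-point equation as $\beta(1-\delta^m)/(1-\delta)=1$, and the inductive monotonicity of $\epsilon_m(n)$ in $f$).

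The substantive difference lies in how you compare $\epsilon_m^\infty$ with $\delta_m^\infty$. The paper proves two auxiliary lemmas on how the smallest fixed point of $x=p+(1-p)x^m$ and $x=p+qx^m$ varies monotonically with $p$, observes that at $f=\alpha$ the two fixed-point equations coincide, and then slides $\alpha$ relative to $f$ to conclude which side is larger. You instead derive the finite-$n$ identity
\[
\epsilon_m(n{+}1)-\delta_m(n{+}1)=(f-\alpha)\bigl(1-\epsilon_m(n)^m\bigr)+\beta\bigl(\epsilon_m(n)^m-\delta_m(n)^m\bigr)
\]
and propagate the sign of $f-\alpha$ by induction, upgrading to strict inequality in the limit via the subtracted fixed-point equations. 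Your route is more elementary and self-contained (no separate lemmas needed), and it also makes the finite-$n$ ordering transparent; the paper's route is more modular but relies on a parametric monotonicity argument that is somewhat informal as written. One minor slip: in the reduction step, when $\sigma_m^\infty\le\beta/(1-\alpha)$ the infinite product of factors in $(0,1]$ may converge to $0$, not necessarily to a value in $(0,1]$; this does not affect the conclusion since you only need the limit to fail to exceed $1$.
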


The proof of propositions~\ref{pro:refl-valid}~and~\ref{pro:tb-valid} is given in Appendix~\ref{app:proof:refl-valid} and Appendix~\ref{app:proof:tb-valid}, which are based on the previous derivation in Appendix~\ref{app:simple:accuracy}.

\subsubsection{Proof of Proposition~\ref{pro:refl-valid}}
\label{app:proof:refl-valid}
In any case, we have $\tilde{\rho}(0) = \rho(0) = 1$.

If $\mu = 0$ or $e_{-} = 1$, we clearly have $\tilde{\rho}(n) = \rho(n) = 0$ for $n \geq 1$.

If $0 > \mu$ and $e_{-} < 1$, we can transform $\tilde\rho(n)$ (given in Proposition~\ref{pro:refl-performance-simple}) as:
\begin{equation}
\tilde{\rho}(n) = \left(\frac{1}{
1 + \frac{e_{+}}{1 - e_-}(\mu^{-1} - 1)
}\right)^n = \left(\frac{\mu(1 - e_-)}{
\mu(1 - e_{+} - e_{-}) + e_{+}
}\right)^n.
\end{equation}
This shows that $\rho(n)$ strictly decreases with both $e_{+}$ and $e_{-}$, and $\tilde{\rho}(n) = \mu^n \iff e_{+} + e_{-} = 1$. Therefore, we also have $\tilde{\rho}(n) > \mu^n \iff e_{+} + e_{-} < 1$.

The Proposition is proved by combining all the above cases.

\subsubsection{Proof of Proposition~\ref{pro:tb-valid}}
\label{app:proof:tb-valid}

The assumptions $0 < \mu < 1$, $e_{-} < 1$, and $e_{+} > 0$ ensure that $\beta > 0$ and $\gamma > 0$. Proposition~\ref{pro:refl-performance-simple} suggests the monotonous convergence of $\delta_m(n)$, $\epsilon_m(n)$, and $\sigma_m(n)$. For simplicity, we denote $\delta := \lim_{n\to \infty} \delta_m(n)$, $\epsilon := \lim_{n \to \infty} \epsilon_{m}(n)$, and $\sigma := \lim_{n \to \infty} \sigma_{m}(n)$. From Equations~\eqref{eq:rej-correct} and \eqref{eq:rej-incorrect}, we have:
\begin{align}
\delta &= \alpha + \beta\delta^m + \gamma \epsilon^m \label{eq:lim-rej-correct} \\
\epsilon &= f + (1 - f)\epsilon^{m} \label{eq:lim-rej-incorrect}
\end{align}
Note that $\epsilon = \delta = 1$ gives the trivial solution of the above equations. However, there may exist another solution (if any) such that $\delta < 1$ or $\epsilon < 1$ under certain circumstances. Since $\epsilon_m(0) = 0$ and $\delta_m(0) = 0$, the limits $\epsilon$ and $\delta$ take the smaller solution within $[0, 1]$. In the following, we first discuss when another non-trivial solution exists.

\begin{lemma} \label{lemma:refl-valid:1}
For any $m \geq 1$, if $0 \le p < \frac{m - 1}{m}$, then $x = p + (1 - p) x^m$ has a unique solution $x_* \in [p, 1)$, which strictly increases with $p$. Otherwise, if $\frac{m - 1}{m} \le p \le 1$, the only solution in $[0, 1]$ is $x_* = 1$.
\end{lemma}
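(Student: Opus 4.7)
The plan is to study the auxiliary function $g(x) := p + (1-p)x^m - x$ on $[0,1]$, whose zeros there are exactly the solutions in question. Observe that $g(1)=0$ always, so the lemma reduces to pinning down which other zeros lie in $[0,1)$. The case $m=1$ collapses to $g(x)=p(1-x)$ and is handled directly (only the second branch can apply, since $(m-1)/m=0$). Henceforth I assume $m\ge 2$; then for $p<1$ the function $g$ is strictly convex on $[0,\infty)$ (because $g''(x)=m(m-1)(1-p)x^{m-2}>0$ for $x>0$), and $g'(x)=m(1-p)x^{m-1}-1$ is strictly increasing from $g'(0)=-1$. The sign of $g'(1)=m(1-p)-1$ will be the discriminating quantity between the two regimes.

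I will first dispatch the easy regime $p\ge (m-1)/m$. Here $g'(1)\le 0$, and since $g'$ is non-decreasing (and also $g'\equiv -1$ in the borderline case $p=1$), this forces $g'\le 0$ on the whole of $[0,1]$, making $g$ non-increasing from $g(0)=p\ge 0$ down to $g(1)=0$; hence $x=1$ is the only zero in $[0,1]$. For the main regime $p<(m-1)/m$, I will combine $g'(1)>0$ with $g(1)=0$ to conclude that $g<0$ on a left neighborhood of $1$, while $g(p)=(1-p)p^m\ge 0$ sits on the non-negative side; the intermediate value theorem then yields a zero $x_*\in[p,1)$. Strict convexity of $g$ on $[0,\infty)$ permits at most two zeros there, so this $x_*$ is the unique zero in $[0,1)$.

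For the strict monotonicity of $x_*$ in $p$, I plan to apply the implicit function theorem to $F(p,x):=p+(1-p)x^m-x$. The partial derivative $\partial_p F=1-x_*^m$ is positive since $x_*<1$, and $\partial_x F=g'(x_*)$, giving $dx_*/dp=-(1-x_*^m)/g'(x_*)$. The main technical point, and the chief obstacle I foresee, is establishing the strict inequality $g'(x_*)<0$. I will rule out the alternatives by contradiction: if $g'(x_*)>0$, then monotonicity of $g'$ makes $g$ strictly increasing on $[x_*,1]$, incompatible with $g(x_*)=g(1)=0$; if $g'(x_*)=0$, then $x_*$ is the unique critical point of the strictly convex $g$ on $[0,\infty)$ and hence its global minimum there, forcing $g(1)>g(x_*)=0$, another contradiction. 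With $g'(x_*)<0$ in hand, $dx_*/dp>0$ follows and the lemma assembles, with the edge case $p=0$ yielding $x_*=0$, consistent with $x_*\in[p,1)$.
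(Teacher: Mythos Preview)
Your proof is correct and follows essentially the same approach as the paper: both define $g(x)=F(x)=p+(1-p)x^m-x$, discriminate the two regimes via the sign of $g'(1)=m(1-p)-1$, use $g(p)=(1-p)p^m\ge 0$ together with convexity (equivalently, monotonicity of $g'$) for existence and uniqueness of $x_*\in[p,1)$, and derive $\frac{dx_*}{dp}=\frac{1-x_*^m}{-g'(x_*)}>0$ via the implicit function theorem. The only cosmetic difference is that the paper locates $x_*<\xi$ (the unique critical point) directly to conclude $g'(x_*)<0$, whereas you reach the same inequality by a short contradiction argument; your explicit handling of $m=1$ is also a nice touch that the paper leaves implicit.
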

\begin{proof}
Define $F(x) := p + (1 - p) x^m - x$. We find:
\begin{equation*}
    F'(x) = m(1 - p)x^{m-1} - 1.
\end{equation*}
It is observed that $F'(x)$ increases monotonically with $x$. Additionally, we have $F'(0) = -1 < 0$, $F'(1) = m(1 - p) - 1$, and $F(1) = 0$. We only consider the scenario where $p > 0$, since $x = 0 \in [0, 1)$ is obviously the unique solution.

If $0 \le p < \frac{m - 1}{m}$, we have $1 - p > \frac{1}{m}$. This implies $F'(1) > 0$. Combining $F'(0) < 0$, there exists $\xi \in (0, 1)$ such that $F'(\xi) = 0$. As a result, $F(x)$ strictly decreases in $[0, \xi]$ and increases in $[\xi, 1)$. Therefore, we have $F(\xi) < F(1) = 0$. Since $F(p) = (1 - p)p^m > 0$, we know that there exists a unique $x_* \in [p, \xi)$ such that $F(x_*) = 0$.

If $\frac{m - 1}{m} \le p \le 1$, we have $1 - p \leq \frac{1}{m}$ and $F'(1) \leq 0$. In this case, $F'(x) < 0$ in $[0, 1)$ due to the monotonicity of $F'(x)$. Thus, $F(x) > F(1) = 0$ for all $x \in [0, 1)$. Therefore, $x_* = 1$ is the only solution within $[0, 1]$.

Now, we prove the monotonic increase of $x_*$ when $0 \le p < \frac{m - 1}{m}$. We have:
\begin{gather}
    \frac{\mathrm{d}x_*}{\mathrm{d}p} = 1  + m(1 - p)x_*^{m-1}\frac{\mathrm{d}x_*}{\mathrm{d}p} - x_*^m \\
    \frac{\mathrm{d}x_*}{\mathrm{d}p} = \frac{1  - x_*^m}{1 - m(1 - p)x_*^{m-1}} = \frac{1  - x_*^m}{-F'(x_*)}
\end{gather}
The previous discussion shows that with $x_* < [p, \xi)$ for some $\xi$ such that $F'(\xi) = 0$. Given that $F'(x)$ increases monotonically, we have $F'(x_{*})< 0$ and thus $\frac{\mathrm{d}x_*}{\mathrm{d}p} > 0$.
\end{proof}

\begin{lemma} \label{lemma:refl-valid:2}
Assume $p \ge 0$, $q > 0$, and $p + q \leq 1$. Then, the equation $x = p + q x^m$ has a unique solution $x_* \in [0, 1)$, which increases monotonically with $p \in [0, 1 - q]$.
\end{lemma}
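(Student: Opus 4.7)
The plan is to reduce the statement to a sign analysis of the auxiliary function $G(x) := p + q x^m - x$ on $[0,1]$, mirroring the structure used in the proof of Lemma~\ref{lemma:refl-valid:1}. First I would observe that $G$ is convex, since $G''(x) = m(m-1) q x^{m-2} \geq 0$ for $m \geq 2$ (with $G$ affine when $m = 1$), and that the boundary values $G(0) = p \geq 0$ and $G(1) = p + q - 1 \leq 0$ produce a zero in $[0, 1]$ by the intermediate value theorem; the strict inequality $p + q < 1$ would place this zero in the half-open interval $[0, 1)$.

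Next, uniqueness in $[0, 1)$ follows from the standard fact that a convex function admits at most two zeros, and if two zeros $x_1 < x_2$ exist then $G \leq 0$ on $[x_1, x_2]$ while $G \geq 0$ outside. Since $G(1) \leq 0$ forces $1 \in [x_1, x_2]$ in the two-zero case, any putative second root satisfies $x_2 \geq 1$, so at most one root lives in $[0, 1)$.

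For monotonicity in $p$, I would differentiate $x_* = p + q x_*^m$ implicitly to obtain $\frac{d x_*}{d p} = \frac{1}{1 - m q x_*^{m-1}} = -\frac{1}{G'(x_*)}$, reducing strict positivity to $G'(x_*) < 0$. This in turn follows from convexity: were $G'(x_*) \geq 0$, then $G'$ non-decreasing would make $G$ non-decreasing on $[x_*, 1]$, contradicting $G(1) < G(x_*) = 0$ whenever $p + q < 1$. Combining existence, uniqueness, and this implicit-differentiation step yields the lemma.

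The main obstacle I anticipate is the boundary case $p + q = 1$: here $x = 1$ is automatically a root, and whether a genuine sub-unital root survives depends on the sign of $G'(1) = m q - 1$. If $m q > 1$ the convex $G$ dips strictly below zero in the interior and a second root appears in $[0, 1)$, so the whole argument goes through unchanged; but if $m q \leq 1$, $G$ is monotone on $[0, 1]$ and the only root is $x_* = 1$, so the claim of a unique $x_* \in [0, 1)$ fails. I would handle this either by strengthening the hypothesis to $p + q < 1$ strictly, which is the regime in which \eqref{eq:lim-rej-correct}--\eqref{eq:lim-rej-incorrect} are actually applied non-degenerately in the proof of Proposition~\ref{pro:tb-valid}, or by bifurcating the argument into the subcases $m q < 1$ and $m q \geq 1$ in direct analogy with the threshold $p \lessgtr \frac{m-1}{m}$ in Lemma~\ref{lemma:refl-valid:1}.
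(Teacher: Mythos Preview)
Your proposal is correct and follows essentially the same route as the paper: define $F(x)=p+qx^m-x$, use convexity together with $F(0)\ge 0$ and $F(1)\le 0$ for existence and uniqueness in $[0,1)$, then differentiate implicitly to get $\frac{dx_*}{dp}=-1/F'(x_*)$ and argue $F'(x_*)<0$ from convexity. Your treatment of the boundary case $p+q=1$ is in fact more careful than the paper's, which simply writes $F(1)<0$ and places the second root in $(1,+\infty)$ without comment; your observation that the sub-unital root disappears when $mq\le 1$ is a genuine refinement, and your proposed fix (restricting to $p+q<1$, which is the only regime actually used downstream) is exactly right.
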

\begin{proof}
Define $F(x) := p + qx^m - x$. Since $F(0) \ge 0$ and $F(1) < 0$, there exists a solution $x_* \in [0, 1)$. Since $F$ is convex, we know there is at most one other solution. Clearly, the other solution appears in $(1, +\infty)$, since $F(+\infty) > 0$. Therefore, $F(x) = 0$ must have a unique solution $x_*$ in $[0, 1)$. Additionally, $x_*$ must appear to the left of the minimum of $F$, which yields $F'(x_*) < 0$.

Using the Implicit Function Theorem, we write:
\begin{equation}
\frac{\mathrm{d}x_*}{\mathrm{d}{p}} = \frac{1}{1 - mqx_*^{m-1}} = -\frac{1}{F'(x_*)} > 0
\end{equation}
Thus, we conclude that $x_*$ increases monotonically with $p$.
\end{proof}

Applying Lemmas~\ref{lemma:refl-valid:1} and \ref{lemma:refl-valid:2} to Equation~\eqref{eq:lim-rej-incorrect}, we find that $\epsilon = 1$ if and only if $f \ge \frac{m -1}{m}$; otherwise, $\epsilon$ strictly increases with $p$. Therefore, $f < \frac{m -1}{m}$ indicates that $\epsilon < 1$, leading to $(\alpha + \gamma \epsilon) + \gamma < 1$. Using Lemma~\ref{lemma:refl-valid:2} again in Equation~\eqref{eq:lim-rej-correct}, we have $f < \frac{m -1}{m} \implies \delta < 1$. Conversely, $f \ge \frac{m -1}{m}$ yields $\epsilon = 1$. and thus $f, \alpha + \gamma \ge \frac{m-1}{m} \implies \delta = 1$. 

First, we consider the special case where $\delta = 1$, which occurs if both $f, \alpha + \gamma \ge \frac{m-1}{m}$, namely $m \leq \min \{ \frac{1}{1 - f}, \frac{1}{\beta}\}$. In this case, we write $\sigma = (1 + \delta + \cdots + \delta^{m-1})\beta = m\beta$. Therefore, we have:
\begin{align*}
    \lim_{n \to \infty}\frac{{\tilde{\rho}(n)}}{\rho(n)} > 1
    & \iff \sigma > \frac{\beta}{1 - \alpha} \\
    & \iff  m \beta> \frac{\beta}{1 - \alpha} \\
    & \iff  m > \frac{1}{1 - \alpha}
\end{align*}
This leads to the validity condition that $\frac{1}{1 - \alpha} < m \le \min \{ \frac{1}{1 - f}, \frac{1}{\beta}\}$.

Next, we consider the case where $\delta < 1$, which occurs when $f < \frac{m-1}{m}$ or $\alpha + \gamma < \frac{m-1}{m}$. This leads to $\beta \ge \frac{1}{m} > 0$, and we can write:
\begin{gather}
\delta^m = \frac{1}{\beta} \left(\delta - \alpha - \gamma \epsilon^m \right), \\
\sigma= \frac{1 - \delta^m}{1 - \delta}
= \frac{(1 - \delta^m)\beta}{(1 - \alpha) - (\beta \delta^m + \gamma \epsilon^m)}.
\end{gather}
Then, we can derive:
\begin{align*}
    \lim_{n \to \infty}\frac{{\tilde{\rho}(n)}}{\rho(n)} < 1
    & \iff \sigma > \frac{\beta}{1 - \alpha} \\
    & \iff \frac{1 - \delta^m}{(1 - \alpha) - (\beta \delta^m + \gamma \epsilon^m)} > \frac{1}{1 - \alpha} \\
    & \iff (1 - \delta^m)(1 - \alpha) >  (1 - \alpha) - (\beta \delta^m + \gamma \epsilon^m) \\
    & \iff \gamma \epsilon^m >  \delta^m(1 - \alpha - \beta) \\
    & \iff \gamma \epsilon^m >  \gamma\delta^m \\
    & \iff \epsilon > \delta
\end{align*}

Since we have assumed $\delta < 1$, we have $\epsilon = 1 > \delta$ if $f \ge \frac{m-1}{m}$; otherwise if $f = \alpha < \frac{m-1}{m}$, then $\epsilon = \delta$ leads to $\delta = \epsilon$ being a solution of Equation~\eqref{eq:lim-rej-correct}. Additionally, from Lemmas~\ref{lemma:refl-valid:1} and \ref{lemma:refl-valid:2}, we know that a higher $\alpha$ would increase $(\alpha + \gamma\epsilon)$, which eventually raises $\delta$ above $\epsilon$; conversely, a lower $\alpha$ causes $\delta$ to drop below $\epsilon$. To summarize, we have the following conditions when $\delta < 1$:
\begin{align}
   1 \geq \epsilon > \delta &\iff \left(
   \alpha +\gamma < \frac{m-1}{m} \leq f
   \right) \text{ or } \left( 
   \alpha < f < \frac{m-1}{m}
   \right)\\
   &\iff \left(\frac{1}{\beta} < m \le \frac{1}{1-f}\right) \text{ or } \left(
    \alpha < f \text{ and } m > \frac{1}{1 - f}
   \right)
\end{align}

Combining the conditions when $\delta = 1$, we have:
\begin{align}
   & \lim_{n \to \infty}\frac{{\tilde{\rho}(n)}}{\rho(n)} > 1 \notag
   \\ \iff &
   \left(\frac{1}{1 - \alpha} < m \le \min \left\{ \frac{1}{1 - f}, \frac{1}{\beta} \right\}\right)
   \text{ or } \left(\frac{1}{\beta} < m \le \frac{1}{1-f}\right)
   \text{ or } \left(\alpha < f \text{ and } m > \frac{1}{1 - f} \right)
   \\ \iff & m > \frac{1}{1 - \alpha} \text{ and } f > \alpha
\end{align}
Thus far, we have obtained Equation~\eqref{eq:tb-valid}. Now, we start proving the two additional statements in Proposition~\ref{pro:tb-valid}.

\textbf{First, we prove that $\frac{1}{\beta} \leq m \leq \frac{1}{1 - f}$ ensures that $\sigma = 1$.} First, if $\delta = 1$, we have $m \leq\frac{1}{\beta} \leq \frac{1}{1-f}$. In this case, $\sigma = m\beta$, and thus $\sigma = 1$ when $m = \frac{1}{\beta}$. Alternatively, if $\delta < 1$, we have $m > \min\{\frac{1}{\beta}, \frac{1}{1 - f}\}$. We can express that:
\begin{equation} 
    \sigma = \frac{1 - \delta^m}{1 - \delta} \beta = \frac{\beta - (\delta - \alpha - \gamma \epsilon^m)}{1 - \delta} = 1 - \gamma\frac{1 - \epsilon}{(1 - \delta)(1 - f)}
\end{equation}
Using Lemma~\ref{lemma:refl-valid:2}, we know that $\delta$ increases with $\alpha + \gamma\epsilon$, which increases with $\epsilon$. Therefore, we have $\frac{\mathrm{d}\delta}{\mathrm{d}\epsilon} > 0$. Then, we obtain
\begin{equation}
    \mathrm{d}\sigma / \mathrm{d} \epsilon = \sum_{i=1}^{m} i \delta^{m-1} \beta\frac{\mathrm{d}\delta}{\mathrm{d}\epsilon} > 0,
\end{equation}
\begin{align}
    \sigma &= \frac{1 - \delta^m}{1 - \delta} \beta = \frac{\beta - (\delta - \alpha - \gamma \epsilon^m)}{1 - \delta} = \frac{\alpha + \beta + \gamma - (1 - \epsilon^m)\gamma - \delta}{1 - \delta} \notag \\
    &= \frac{1 - \delta - \gamma (1 - \frac{\epsilon - f}{1 - f})}{1 - \delta}
\end{align}
Therefore, $\sigma$ increases with $\epsilon$ and reaches its maximum value of $1$ when $\epsilon = 1$. As a result, we conclude that $\frac{1}{\beta} \leq m \leq \frac{1}{1 - f}$ ensures that $\sigma = 1$. Combining $\beta = \mu(1-e_{-})$ and $\sigma = \lim_{n \to \infty} \sigma_m(n) =  \lim_{n \to \infty} \frac{\tilde\rho_m(n)}{\tilde\rho_m(n-1)}$, we have proved that $\lim_{n\to\infty}\frac{\tilde\rho_m(n)}{\tilde\rho_m(n-1)} = 1$ if $m \in [\frac{1}{\mu(1 - e_{-})}, \frac{1}{1 -f}]$.

\textbf{Next, we prove the monotonicity of $\tilde{\rho}_m(n)$ with respect to $f$.} To prove this, we first prove the monotonicity of $\epsilon_m(t)$ for all $t$ with respect to $f$.

\begin{lemma} \label{lemma:refl-valid:3}
For $n > 0$, $\epsilon_m(t)$ as defined in Equation~\ref{eq:rej-incorrect} increases strictly with $f$.
\end{lemma}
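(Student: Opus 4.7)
The plan is to induct on $t$, exploiting the recurrence $\epsilon_m(t) = f + (1-f)(\epsilon_m(t-1))^m$ with base value $\epsilon_m(0) = 0$. The case $t = 1$ is immediate since $\epsilon_m(1) = f$, so strict monotonicity in $f$ is visible by inspection. The task reduces to propagating strict monotonicity across the recurrence.

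For the inductive step, fix $0 \le f_1 < f_2 \le 1$ and write $a := \epsilon_m(t-1; f_1)$ and $b := \epsilon_m(t-1; f_2)$, so that $a \le b$ by the inductive hypothesis. The key algebraic identity is
\begin{equation*}
\epsilon_m(t; f_2) - \epsilon_m(t; f_1) = (f_2 - f_1)(1 - a^m) + (1 - f_2)(b^m - a^m),
\end{equation*}
which exhibits the difference as a sum of two non-negative pieces: the second is non-negative by induction, and the first is positive once $a < 1$.

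The only issue is thus ensuring $a < 1$, which in turn requires a short auxiliary induction showing that $f < 1$ implies $\epsilon_m(t; f) < 1$ for every $t \ge 0$. This follows directly from the recurrence: if $\epsilon_m(t-1; f) < 1$, then $(\epsilon_m(t-1; f))^m < 1$, whence $\epsilon_m(t; f) < f + (1-f) = 1$. Since $f_1 < f_2 \le 1$ forces $f_1 < 1$, the auxiliary bound gives $a < 1$, making $(f_2 - f_1)(1 - a^m) > 0$ and closing the induction in both subcases $f_2 < 1$ and $f_2 = 1$.

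I do not foresee a genuine obstacle here: the whole argument hinges on finding the correct algebraic decomposition of the difference, after which each term is manifestly signed, and the boundary $f = 1$ is dispatched by the one-line auxiliary bound. The same template (induction plus a convexity-style decomposition) will later apply to the monotonicity of $\delta_m(t)$ and ultimately of $\tilde\rho_m(n)$ in $f$, which is the downstream goal of Proposition~\ref{pro:tb-valid}.
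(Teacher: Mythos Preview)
Your proof is correct. Both you and the paper argue by induction on $t$ and rely on the auxiliary bound $\epsilon_m(t-1;f) < 1$ for $f < 1$; the difference is that the paper differentiates the recursion and uses the chain rule to show $\partial \epsilon_m(t;f)/\partial f > 0$, whereas you work directly with the finite difference $\epsilon_m(t;f_2) - \epsilon_m(t;f_1)$ and decompose it algebraically. Your approach is slightly more elementary (no derivatives needed) and also handles the boundary $f_2 = 1$ cleanly---the paper's inductive hypothesis ``$\epsilon_m(k;f) < 1$ for all $f\in[0,1]$'' is technically false at $f=1$, while your argument only needs $a = \epsilon_m(t-1;f_1) < 1$, which follows from $f_1 < 1$. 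Otherwise the two arguments are structurally identical: both split the increment into a direct contribution from the explicit $f$ and an indirect one through $\epsilon_m(t-1)$, and both need $\epsilon_m(t-1) < 1$ to make the direct part strictly positive.
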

\begin{proof}
We regard
\begin{equation}
\epsilon_m(0;f)\equiv0,\qquad
\epsilon_m(t;f)=f+(1-f)\bigl[\epsilon_m(t-1;f)\bigr]^m
\end{equation}
as a function of $f$ on $[0,1]$. When $t = 1$ we have
\begin{equation}
\epsilon_m(1;f) = f+(1-f)\bigl[\epsilon_m(0;f)\bigr]^m =f,
\end{equation}
so $\frac{\partial\epsilon_m(1;f)}{\partial f}=1>0$. Thus $\epsilon_m(1;f)$ is strictly increasing with $f$. Further, assume for some $k\ge1$ that
\begin{equation}
0\le \epsilon_m(k;f)<1
\quad\text{and}\quad
\frac{\partial\epsilon_m(k;f)}{\partial f}>0
\quad
\forall f\in[0,1].
\end{equation}
Differentiate the recursion for $t=k+1$:
\begin{equation}
\epsilon_m(k+1;f)
=f+(1-f)\bigl[\epsilon_m(k;f)\bigr]^m,
\end{equation}
\begin{equation}
\frac{\partial\epsilon_m(k+1;f)}{\partial f}
=1-\bigl[\epsilon_m(k;f)\bigr]^m
+(1-f)\,m\bigl[\epsilon_m(k;f)\bigr]^{m-1}
\,\frac{\partial\epsilon_m(k;f)}{\partial f}.
\end{equation}
By the inductive hypothesis, $\epsilon_m(k;f)<1$ implies $\bigl[\epsilon_m(k;f)\bigr]^m<1$. Therefore, we have
\begin{equation}
1-\bigl[\epsilon_m(k;f)\bigr]^m>0.
\end{equation}
Since $1-f\ge0$, $m\ge1$, $\bigl[\epsilon_m(k;f)\bigr]^{m-1}\ge0$, and $\frac{\partial\epsilon_m(k;f)}{\partial f}>0$, the second term is also positive. Hence
\begin{equation}
\frac{\partial\epsilon_m(k+1;f)}{\partial f}>0,
\end{equation}
showing that $\epsilon_m(k+1;f)$ is strictly increasing. This completes the induction.
\end{proof}

Given Lemma~\ref{lemma:refl-valid:3}, we can also prove the monotonicity of $\delta_m(t)$ using mathematical induction: It is easy to write that $\delta_m(2) = \alpha + \beta \alpha^m + \gamma f^m$, showing that $\delta_m(2)$ increases strictly with $f$. Then, for $t > 2$, assuming that $\delta_m(t-1)$ increases strictly with $f$ and using Given Lemma~\ref{lemma:refl-valid:3}, we know that $\delta_m(t)$ increases strictly with $f$ from Equation~\eqref{eq:rej-correct}.

Therefore, we have $\delta_m(1) = \alpha$ and that $\delta_m(t)$ strictly increases with $f$ for $n \geq 2$.
According to Equation~\eqref{eq:acc-rtbs}, from the above monotonicity of $\delta_m(t)$, it is obvious that $\sigma_m(t)$ increases with respect to $f$ for all $t$. This gives the corollary that $\tilde\rho_m(n)$ increases with $f$ for all $n$.

\subsection{Derivation of RMTP reasoning cost}
\label{app:refl-cost}

In this section, we derive how many steps it costs to find a correct solution in RMTP and thereby prove Proposition~\ref{pro:refl-cost}.

\begin{proof}[Proposition~\ref{pro:refl-cost}]
The probability of accepting the correct step after the $i$-th attempt is given by $\alpha^{i-1}\beta$. Assuming a maximum number of attempts $m$, the number of attempts consumed at each step satisfies:
\begin{equation}
\Pr(i\ \text{attempts} \mid \text{correct}) = \frac{\alpha^{i-1}\beta}{\beta + \alpha\beta + \cdots +\alpha^{m-1}\beta} = \frac{(1 - \alpha)\alpha^{i-1}}{1 - \alpha^m}
\end{equation}
Therefore, the average number of attempts required for a correct reasoning step is given by
\begin{equation}
A_m = \sum_{i=1}^m i \cdot \frac{(1 - \alpha)\alpha^i}{1 - \alpha^m} = \frac{1 - \alpha}{1-\alpha^m} \sum_{i=1}^m i\alpha^{i-1}
\end{equation}

To simplify the summation expression $\sum_{i=1}^m i\alpha^{i-1}$ (where $0 < \alpha < 1$), we can use the method of telescoping series. Let $ S = \sum_{i=1}^m i\alpha^{i-1} $. We calculate $ \alpha S $:
\begin{equation}
\alpha S = \sum_{i=1}^m i\alpha^i
\end{equation}
Thus,
\begin{equation}
S - \alpha S = \sum_{i=1}^m i\alpha^{i-1} - \sum_{i=1}^m i\alpha^i
\end{equation}
This gives us
\begin{equation}
(1 - \alpha)S = 1 + \alpha + \alpha^2 + \cdots + \alpha^{m-1} - m\alpha^m = \frac{1 - \alpha^m}{1 - \alpha} - m\alpha^m
\end{equation}
Rearranging, we have
\begin{equation}
\sum_{i=1}^m i\alpha^{i-1} = \frac{1 - \alpha^m - (1 - \alpha)m\alpha^m}{(1 - \alpha)^2}
\end{equation}
Thus, the average number of attempts can be further expressed as:

\begin{align}
A_m &= \frac{1 - \alpha}{1-\alpha^m} \cdot\frac{1 - \alpha^m - (1 - \alpha)m\alpha^m}{(1 - \alpha)^2} \\
&= \frac{1 - \alpha^m - (1 - \alpha)m\alpha^m}{(1 - \alpha)(1 - \alpha^m)} \\
&= \frac{1}{1 - \alpha} - m \frac{\alpha^m}{1 - \alpha^m} \\
\end{align}
Considering the limit as $ m \to \infty $, it can be shown using limit properties that $ \lim_{m \to \infty} m\frac{\alpha^m}{1 - \alpha^m} = 0 $. If the correct solution is obtained (i.e., correct steps are accepted at each step), the average number of steps taken is given by
\begin{equation}
\bar{T} = n A_{\infty} = \frac{n}{1 - \alpha} = \frac{n}{1 - \mu e_{-} - (1 - \mu)(1 - e_{+})} = \frac{n}{(1 - \mu)e_{+} + \mu (1 - e_{-})}
\end{equation}
\end{proof}

\subsection{Considering posterior risks of rejected attempts}
\label{app:theory:complicated-mtp}

Our previous analysis is based on a coarse binary partition of states (correct and incorrect) for each scale, which enhances clarity yet does not apply to real-world complexity. Therefore, we can introduce stronger constraints by taking into account the posterior distribution of states in $\mathcal{S}_n^+$ after multiple attempts. For example, if the state has produced several incorrect attempts on state $\seq{S}$ (or rejected several correct attempts), it is more likely that the current state has not been well generalized by the model. Consequently, the chances of making subsequent errors increase. In this case, $\mu$ is likely to decrease with the number of attempts, while $e_{-}$ increases with the number of attempts. Thus, the probability of accepting the correct action will decrease as the number of attempts increases.

Therefore, we consider the scenario where the probabilities of error increase while the correctness rate $\mu$ drops after each attempt. We define $e_{i+}, e_{i-}, \mu_i$, etc., to represent the probabilities related to the $i$-th attempt, corresponding to the calculations of $\alpha_i, \beta_i, \gamma_i$, etc. We have that $\beta_i =  \mu_i(1 - e_{i-})$ is monotonically decreasing, and $\gamma_i = (1 - \mu_i)e_{i+}$ is monotonically increasing with the index $i$ of the attempt. In this case, the derivation is similar to that of Proposition~\ref{pro:refl-performance-simple}. Therefore, we skip all unnecessary details and present the results directly.

\begin{proposition} \label{pro:refl-performance-simple-complex}
Given the above posterior risks, the RTMP accuracy $\tilde\rho(n)$ for problems with a scale of $n$ is
\begin{equation}
    \tilde{\rho}(n) = \left(
    \beta_1 +\sum_{i=2}^\infty \beta_i\prod_{j=1}^{i-1} \alpha_i \right)^n
\end{equation}
Let $m$ be the width of RTBS. Let $\delta_{i|m}(n)$ denote the probability of a proposed step being rejected (either instantly or recursively) at the $i$-th attempt on a correct state, and $\epsilon_m(n)$ follows the same definition as in Proposition~\ref{pro:refl-performance-simple}. We have $\delta_{i|m}(0) = \epsilon_m(0) = 0$ and the following recursive equations for $n > 0$:
\begin{gather}
\delta_{i|m}(n) = \alpha_i + \beta_i \prod_{j=1}^{m} \delta_{j|m}(n-1) + \gamma_i (\epsilon_{m}(n-1))^m, \qquad i=1,\cdots,m \\
\epsilon_m(n) = f + (1 -f)\left(\epsilon_m(n-1)\right)^m
\end{gather}
Then, the RTBS accuracy $\tilde{\rho}_{m}(n)$ for problems of scale $n$ is given by:
\begin{equation}
    \tilde{\rho}_{m}(n) = \prod_{t=1}^n \sigma_m(t),\qquad\text{where }
    \sigma_m(t) = \beta_1 + \sum_{j=2}^m \beta_j \prod_{i=1}^{j-1} \delta_{i|m}(t, m) \beta.
\end{equation}
In addition, $\delta_{i|m}(n)$, $\epsilon_m(n)$, and $\sigma_m(n)$ all monotonically increase and converge with respect to $n$.
\end{proposition}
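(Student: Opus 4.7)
The plan is to extend the derivation of Proposition~\ref{pro:refl-performance-simple} (Appendix~\ref{app:simple:accuracy}) by carrying the attempt index through each stage. The key shift is that at a positive state of scale $n$, the $i$-th proposed attempt is governed by attempt-dependent probabilities $\alpha_i$, $\beta_i$, $\gamma_i$ (with $\beta_i$ monotonically decreasing and $\gamma_i$ monotonically increasing in $i$), whereas the negative-state rejection probability $f$ remains constant in the index. All arguments therefore proceed as before, but the "sums over geometric terms" must be replaced by sums and products mixing heterogeneous factors.

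First, I would handle the RMTP accuracy. Since RMTP permits unbounded retries on a single state, I decompose the probability that the transition at scale $n$ produces a positive state of scale $n-1$ by conditioning on the index $i$ of the first accepted attempt: this event occurs exactly when attempts $1,\ldots,i-1$ are all instantly rejected (probability $\prod_{j=1}^{i-1}\alpha_j$) and the $i$-th attempt is correct-and-accepted (probability $\beta_i$). Summing over $i\ge 1$ yields the single-step success probability $\beta_1+\sum_{i\ge 2}\beta_i\prod_{j=1}^{i-1}\alpha_j$, and independence across levels gives $\tilde{\rho}(n)$ as its $n$-th power, collapsing to $\tfrac{\beta}{1-\alpha}$ when $\alpha_i\equiv\alpha$ and $\beta_i\equiv\beta$, recovering Equation~\eqref{eq:acc-rmtp}.

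Next, for RTBS I would generalize the recursions for $\delta_m$ and $\epsilon_m$. Since the negative-state rejection probability does not change with the attempt index, the recursion for $\epsilon_m$ and the identity $\psi_m(n)=\epsilon_m(n)^m$ from Appendix~\ref{app:simple:accuracy} carry over verbatim. For $\delta_{i|m}(n)$, the probability that the $i$-th attempt at a scale-$n$ positive state is rejected decomposes into: instant rejection ($\alpha_i$); correct-and-accepted followed by failure at the scale-$(n-1)$ positive child, where failure now equals $\prod_{j=1}^{m}\delta_{j|m}(n-1)$ since subsequent attempts there use heterogeneous parameters; and incorrect-but-accepted with failure at the scale-$(n-1)$ negative child, still $\epsilon_m(n-1)^m$. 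The single-state acceptance probability $\sigma_m(t)$ is then $\sum_{j=1}^m \beta_j\prod_{i=1}^{j-1}\delta_{i|m}(t)$, and unrolling $\tilde{\rho}_m(n)=\sigma_m(n)\tilde{\rho}_m(n-1)$ with $\tilde{\rho}_m(0)=1$ yields the stated product form.

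Finally, for monotonicity and convergence in $n$, the argument for $\epsilon_m(n)$ is identical to the homogeneous case. For $\delta_{i|m}$, the recursion shows that if $\delta_{j|m}(n-1)\le\delta_{j|m}(n)$ for all $j$ and $\epsilon_m(n-1)\le\epsilon_m(n)$, then $\delta_{i|m}(n)\le\delta_{i|m}(n+1)$ simultaneously for every $i$; the base case $\delta_{i|m}(0)=0$ together with non-negativity of the recursion closes the induction, and the bound $\delta_{i|m}(n)\in[0,1]$ gives convergence. Monotonicity of $\sigma_m$ follows directly from its explicit formula. The main obstacle I anticipate is the coupling among the $m$ functions $\delta_{1|m},\ldots,\delta_{m|m}$ through the shared product $\prod_{j=1}^{m}\delta_{j|m}(n-1)$: the induction must be run jointly across all indices rather than one at a time, and any subsequent attempt to transport the validity analysis of Theorem~\ref{the:main} into this setting will require a multivariate fixed-point argument in place of the one-dimensional Lemmas~\ref{lemma:refl-valid:1} and \ref{lemma:refl-valid:2}.
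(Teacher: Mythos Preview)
Your proposal is correct and mirrors the paper's intended approach: the paper explicitly states that the derivation parallels Proposition~\ref{pro:refl-performance-simple} and omits all details, and your outline---carrying the attempt index through each stage, replacing geometric sums by heterogeneous products, and running the monotonicity induction jointly over all $i$---is precisely that extension. Your closing remark about the multivariate fixed-point obstacle is also well-placed; the paper's subsequent proposition indeed sidesteps it by passing to a crude lower bound ($\beta_i\ge k^{i-1}\beta_1$, $\alpha_i\ge\underline{\alpha}$) rather than attempting an exact analysis.
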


The theoretical result in this new setting becomes much more challenging to derive an exact validity condition that remains clear and understandable. However, it is still useful to derive a bound that sufficiently guarantees the effectiveness of reflection. In the following, we show that a sufficient condition becomes much stricter than that in Proposition~\ref{pro:refl-valid}.

\begin{proposition}
Assume $\mu_1 < 1$ and $k = \inf_i \frac{\beta_{i+1}}{\beta_i}$ is the lower bound of the decay rate of the probability of accepting the correct step in multiple attempts. Then, a sufficient condition for $\tilde{\rho}(n) > \rho(n)$ is:
\begin{equation}
\frac{e_{1-}}{k(1 - \mu_1)} + \sup_i e_{i+} < 1
\end{equation}
\end{proposition}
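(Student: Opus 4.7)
The plan is to reduce the claim $\tilde\rho(n) > \rho(n)$ to the single scalar inequality $\sigma > \mu_1$, where $\sigma = \sum_{i=1}^\infty \beta_i \prod_{j=1}^{i-1} \alpha_j$ is the per-step success rate given by Proposition~\ref{pro:refl-performance-simple-complex}. Since $\rho(n) = \mu_1^n$ and $\tilde\rho(n) = \sigma^n$, strict monotonicity of $x \mapsto x^n$ on $(0,\infty)$ collapses the $n$-dimensional statement to a scalar one, and the rest of the argument is devoted to showing that the hypothesis forces $\sigma > \mu_1$.

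To lower-bound $\sigma$, I would combine two uniform estimates on the summand. First, the definition $k = \inf_i \beta_{i+1}/\beta_i$ gives, by immediate induction, $\beta_i \geq k^{i-1}\beta_1 = k^{i-1}\mu_1(1-e_{1-})$. Second, and this is the crucial observation, I would establish the uniform bound $\alpha_j \geq (1-\mu_1)(1-e_+)$, where $e_+ := \sup_i e_{i+}$. The move is to write $\alpha_j = \mu_j e_{j-} + (1-\mu_j)(1-e_{j+})$, drop the non-negative first term, and then use $\mu_j \leq \mu_1$ (since $\mu_i$ is non-increasing in $i$) together with $e_{j+} \leq e_+$ to bound the surviving product factor-by-factor from below.

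With these two bounds in hand, the series $\sigma$ is dominated from below by a geometric progression with common ratio $r := k(1-\mu_1)(1-e_+)$, which the hypothesis forces to lie strictly in $(0,1)$. Summing yields
\begin{equation}
\sigma \geq \frac{\mu_1(1-e_{1-})}{1 - k(1-\mu_1)(1-e_+)},
\end{equation}
and a single cross-multiplication reduces $\sigma > \mu_1$ exactly to $e_{1-} < k(1-\mu_1)(1-e_+)$, which is the stated hypothesis. Raising back to the $n$-th power recovers $\tilde\rho(n) > \rho(n)$.

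The step I expect to require the most care is the uniform lower bound on $\alpha_j$. The naive route of estimating $\alpha_j$ via the individual monotonicities of $\beta_j$ and $\gamma_j$ fails because $\gamma_j = (1-\mu_j)e_{j+}$ is actually increasing in $j$; an estimate such as $\alpha_j \geq 1 - \sup_j \beta_j - \sup_j \gamma_j$ introduces an $e_{1-}$-dependent cross-term on the right and produces a different sufficient condition than the one stated. The fix is to keep the rejection-from-incorrect-proposal contribution $(1-\mu_j)(1-e_{j+})$ intact as a product and observe that each factor independently achieves its worst value at the extremes ($1-\mu_1$ and $1-e_+$, respectively); only with this factored estimate does the geometric sum close up cleanly to the stated inequality.
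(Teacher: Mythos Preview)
Your proposal is correct and follows essentially the same approach as the paper: lower-bound $\alpha_j$ uniformly by $(1-\mu_1)(1-\sup_i e_{i+})$, lower-bound $\beta_i$ by $k^{i-1}\beta_1$ via the decay ratio, sum the resulting geometric series, and cross-multiply to recover the stated condition. Your discussion of why the factored estimate on $\alpha_j$ is necessary (rather than going through $1-\beta_j-\gamma_j$) is a useful clarification that the paper's proof omits.
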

\begin{proof}
Considering $\alpha_i = \mu_i e_{i-} + (1 - \mu_i) (1 - \max_i e_{i+})$, let $\underline{\alpha} = (1 - \mu_1)(1 - \sup_i e_{i+})$ be its lower bound.
It can be seen that 
\begin{align}
\beta_1 + \alpha_1\beta_2 + \alpha_1\alpha_2\beta_3 + \cdots + \beta_m \prod_{i=1}^{m-1} \alpha_i \geq \sum_{j=1}^m (\underline{\alpha}k)^{m-1} \beta_1 = \beta_1\frac{1 - (\underline{\alpha}k)^m}{1 - \underline{\alpha}k}
\end{align}
As $m \to \infty$, the sufficient condition for reflection validity is:
\begin{align}
& \frac{\beta_1}{1 - \underline{\alpha}k} > \mu_1 \\
\iff & \frac{1 - e_{1-}}{1 - k(1 - \mu_1)(1 - \sup_i e_{i+})} > 1 \\
\iff & (1 - \mu_1)(1 - \sup_i e_{i+}) > \frac{e_{-}}{k} \\
\iff & \frac{e_{1-}}{k} - \sup_i e_{i+}(1 - \mu_1) < 1 \\
\iff & \frac{e_{1-}}{k(1 - \mu_1)} + \sup_i e_{i+} < 1
\end{align}
\end{proof}

\section{Implementation details}
\label{app:implementation}

This section describes the details of the training datasets, model architectures, and hyper-parameters used in experiments. Our implementation derives the models architectures, pretraining, and SFT from LitGPT \cite{litgpt-2023} (version 0.4.12) under Apache License 2.0.

\subsection{Algorithmic descriptions of reflective reasoning}
\label{app:impl:algs}

Algorithms~\ref{alg:rmtp}~and~\ref{alg:rtbs} presents the pseudo-code of reasoning execution through RMTP and RTBS, respectively. In practice, we introduce a reflection budget $M$ to avoid infinite iteration. If reflective reasoning fails to find a solution within $M$ steps, the algorithm retrogrades to non-reflective reasoning.

To implement RTBS, we maintain a stack to store the reversed list of parental states, allowing them to be restored if needed. Different from our theoretical analysis, our practical implementation does not limit the number of attempts on the input query $Q$ (as long as the total budget $M$ is not used up) as $Q$ has no parent (i.e. the stack is empty).

\begin{algorithm}
    \caption{Reflective reasoning through RMTP}
    \algsetup{indent=2em}
    \label{alg:rmtp}
    \begin{algorithmic}[1] 
        \REQUIRE the query $\seq{Q}$, the augmented policy $\tilde{\pi} = \{\pi, \mathcal{V}\}$, transition function $\mathcal{T}$, and reflective reasoning budget $M$
        \STATE $t \gets 0, \seq[t]{S} \gets Q$
        \REPEAT
            \STATE Infer $R_{t+1} \sim \pi(\cdot \mid \seq[t]{S})$
            \STATE $Reject \gets \text{False}$
            \IF{$t \le M$} 
                \STATE Infer $\seq[t+1]{V} \sim \mathcal{V}(\cdot \mid \seq[t]{S}, \seq[t+1]{R})$
                \STATE $Reject \gets \text{True}$ if ``$\times$'' $\in \seq[t+1]{V}$
            \ENDIF
            \IF{$Reject = \text{True}$}
                \STATE $\seq[t+1]{S} \gets \seq[t]{S}$
            \ELSE
                \STATE $\seq[t+1]{S} \gets \mathcal{T}(\seq[t]{S},\seq[t+1]{R})$
                \IF{$\seq[t+1]{R}$ is the answer}
                    \STATE $T \gets t + 1, \seq{A} \gets \seq[t+1]{R}$
                \ELSE
                    \STATE $t \gets t + 1$
                \ENDIF
            \ENDIF
        \UNTIL{The answer $A$ is produced}
        \RETURN $A$
    \end{algorithmic}
\end{algorithm}

\begin{algorithm}
    \caption{Reflective trace-back search}
    \algsetup{indent=2em}
    \label{alg:rtbs}
    \begin{algorithmic}[1] 
        \REQUIRE the query $\seq{Q}$, the augmented policy $\tilde{\pi} = \{\pi, \mathcal{V}\}$, transition function $\mathcal{T}$, search width $m$, and reflective reasoning budget $M$
        \STATE $t \gets 0, \seq[t]{S} \gets Q$
        \STATE $i \gets 0$ \COMMENT{The index of attempts}
        \STATE Initialize an empty stack $L$
        \REPEAT
            \STATE Infer $R_{t+1} \sim \pi(\cdot \mid \seq[t]{S})$
            \STATE $i \gets i + 1$
            \STATE $Reject \gets \text{False}$
            \IF{$t \le M$} 
                \STATE Infer $\seq[t+1]{V} \sim \mathcal{V}(\cdot \mid \seq[t]{S}, \seq[t+1]{R})$
                \STATE $Reject \gets \text{True}$ if ``$\times$'' $\in \seq[t+1]{V}$
            \ENDIF
            \IF{$Reject = \text{True}$}
                \IF{$i < m$}
                    \STATE $\seq[t+1]{S} \gets \seq[t]{S}$
                \ELSE
                    \STATE \COMMENT {Find a parent state that has remaining number of attempts.}
                    \REPEAT
                        \STATE Pop $(s_k, j)$ from $L$
                        \STATE $\seq[t+1]{S} \gets s_k, i \gets j$
                    \UNTIL{$i < m$ or $L$ is empty}
                \ENDIF
            \ELSE
                \STATE Push $(\seq[t+1]{S}, i)$ into $L$
                \STATE $\seq[t+1]{S} \gets \mathcal{T}(\seq[t]{S},\seq[t+1]{R}), i \gets 0$
                \IF{$\seq[t+1]{R}$ is the answer}
                    \STATE $T \gets t + 1, \seq{A} \gets \seq[t+1]{R}$
                \ELSE
                    \STATE $t \gets t + 1$
                \ENDIF
            \ENDIF
        \UNTIL{the answer $A$ is produced}
        \RETURN $A$
    \end{algorithmic}
\end{algorithm}

\subsection{Example CoT data}
\label{app:example-cot}

We implement predefined programs to generate examples of CoTs and self-verification. Figure~\ref{fig:data-example} presents the example reasoning steps (correct) for non-reflective training and the example detailed verification for reflective training.  In our practical implementations, the reasoning steps include additional tokens, such as preprocessing and formatting, to assist planning and transition. To simplify transition function $\mathcal{T}$, the example steps also include exactly how the states are supposed to be updated, which removes the task-specific prior in $\mathcal{T}$.

\begin{figure}[htb]
    \centering
    \subfigure[Mult]{
        \includegraphics[width=1.0\linewidth]{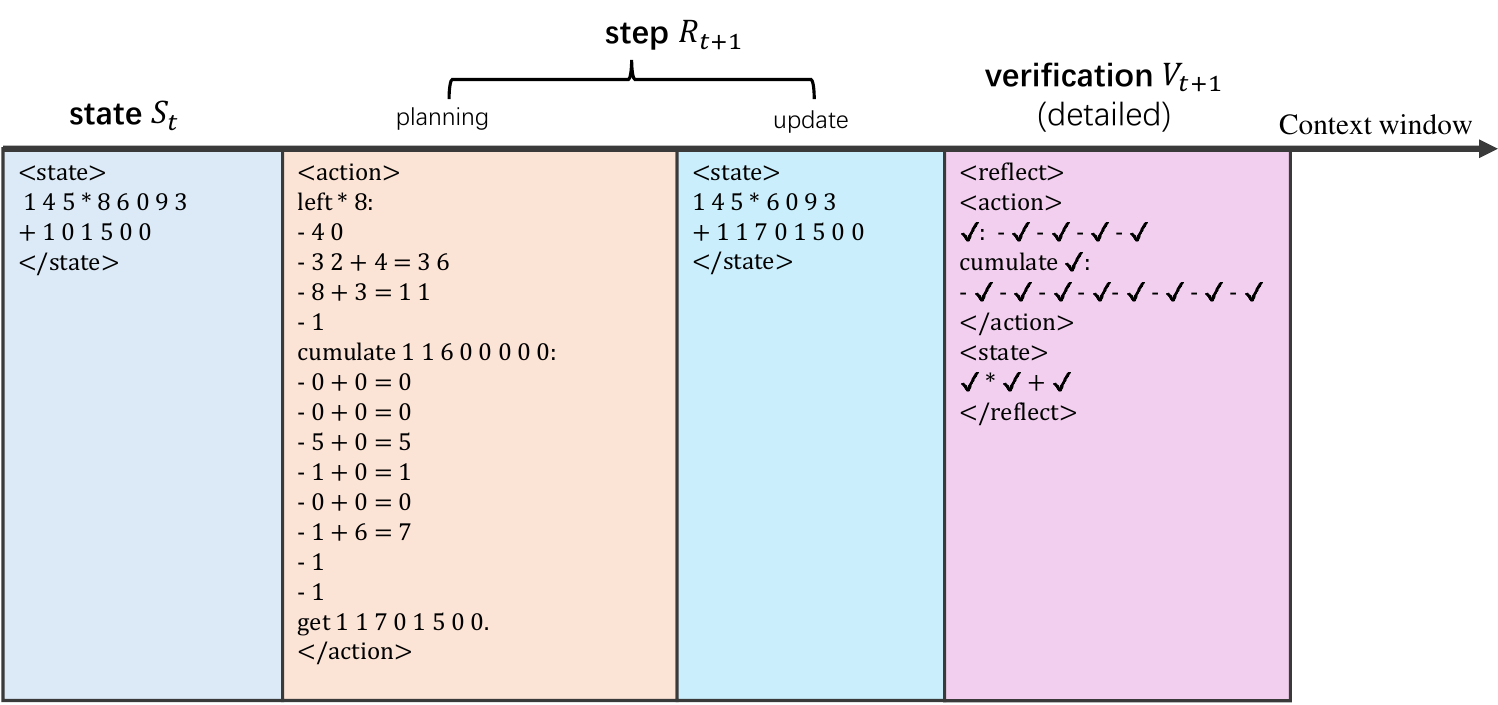}
        \label{fig:data-example:mult}
    }
    \subfigure[Sudoku]{
        \includegraphics[width=1.0\linewidth]{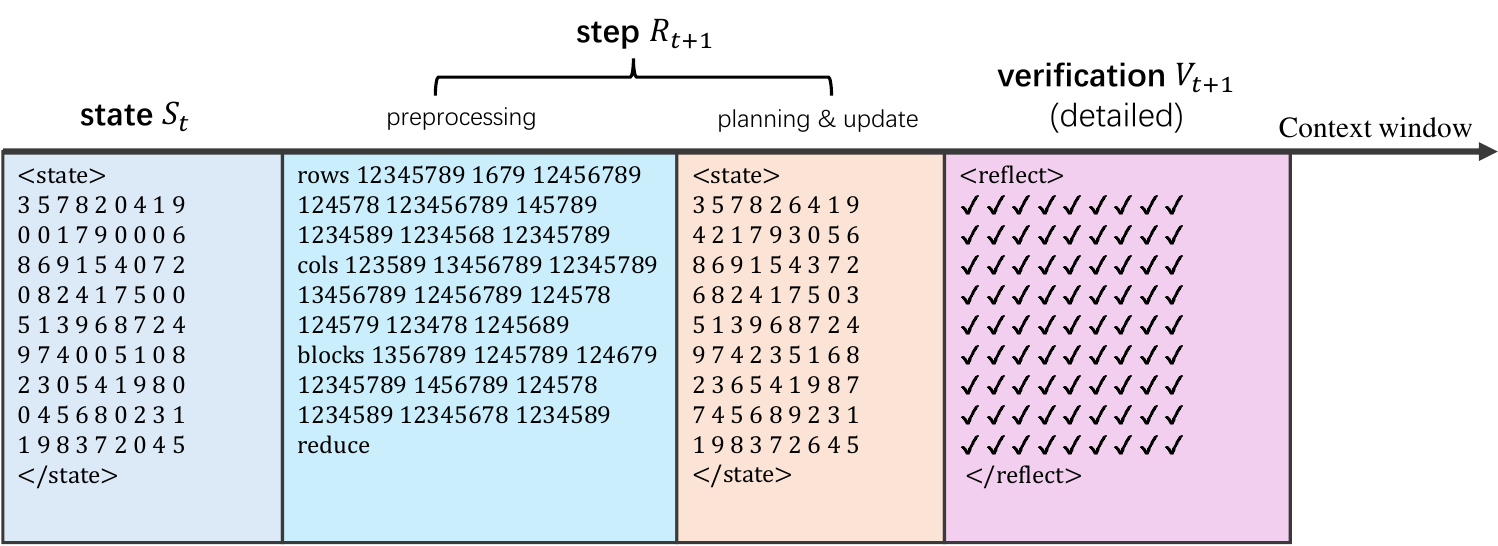}
        \label{fig:data-example:sudoku}
    }
    \caption{Example reasoning steps with detailed verfication for integer multiplication and Sudoku.}
    \label{fig:data-example}
\end{figure}

\subsubsection{Multiplication CoT}

Each state is an expression $x_t \times y_t + r_t$, where $x_t$ and $y_t$ are the remaining values of two multiplicands, and $r_t$ is the cumulative result. For an input query $x \times y$, the expert reasoner assigns $x_1=x$, $y_1=y$, and $r_1 = 0$.

For each step, the reasoner plans a number $u \in \{1,\ldots,9\}$ to eliminate in $x_t$ (or $y_t$). Specifically, it computes $\delta = u \times y_t$ or ($\delta = u \times x_t$). Next, it finds the digits in $x_t$ (or $y_t$) that are equal to $u$ and set them to $0$ in $x_{t+1}$ (or $y_{t+1}$). For each digit set to $0$, the reasoner cumulates $\delta \times 10^i$ to $r_t$, where $i$ is the position of the digit (starting from $0$ for the unit digit). An example of a reasoning step is shown in Figure~\ref{fig:data-example:mult}. Such steps are repeated until either $x_T$ or $y_T$ becomes $0$, then the answer is $r_T$.

\subsubsection{Sudoku CoT}

Each state is a $9 \times 9$ matrix representing the partial solution, where blank numbers are represented by $0$. On each step, the reasoner preprocesses the state by listing the determined numbers of each row, columns, and blocks. Given these information, the model reduces the blank positions that has only one valid candidate. If no blank can be reduced, the model randomly guess a blank position that has the fewest candidates. Such process continues until there exist no blanks (i.e., zeros) in the matrix.

An example of a reasoning step is shown in the right of Figure~\ref{fig:data-example:sudoku}. The planned updates (i.e., which positions are filled with which numbers) is intrinsically included in a new puzzle, which is directly taken as the next state by the transition function $\mathcal{T}$.

\subsubsection{Verification of reasoning steps}
\label{app:implementation:refl-labels}

\paragraph{Binary Verification} The Binary verification labels are generated using a rule-based checker of each reasoning step. In Multiplication, it simply checks whether the next state $x_{t+1} \times y_{t+1} + r_{t+1}$ is equal to the current state $x_t \times y_t + r_{t}$. In Sudoku, it checks whether existing numbers in the old matrix are modified and whether the new matrix has duplicated numbers in any row, column, and block.

\paragraph{Detailed Verification} In Multiplication, we output a label for each elemental computation --- addition or unit-pair product --- is computed correctly. In Sudoku, we output a label for each position in the new matrix, signifying whether the number violates the rule of Sudoku (i.e. conflicts with other numbers in the same row, column, or block) or is inconsistent with the previous matrix. These labels are organized using a consistent format as the CoT data. Examples of detailed reflection for correct steps is in Figure~\ref{fig:data-example:sudoku}. If the step contains errors, we replace the corresponding ``$\checkmark$'' with ``$\times$''. 

\subsection{Model architectures and tokenization}

\def\multparam#1{\multicolumn{3}{c||}{#1}}
\def\sudokuparam#1{\multicolumn{3}{c}{#1}}
\def\sharedparam#1{\multicolumn{6}{c}{#1}}

\begin{table}[htb]
    \centering
    \caption{The model architectures of models for the transitional implementation.}
    \setlength{\extrarowheight}{1pt}
    \begin{tabular}{l||c|c|c||c|c|c}
    \hline \hline
    \textbf{Task} & \multparam{Mult} & \sudokuparam{Sudoku} \\
    \hline 
    \textbf{Model size} & 1M & 4M & 16M & 1M & 4M & 16M \\
    \hline \hline
    Vocabulary size & \sharedparam{128} \\ \hline
    Embedding size & 128 & 256 & 512 & 128 & 256 & 512 \\ \hline
    Number of layers & \sharedparam{5} \\ \hline
    Number of attention Heads & 4 & 8 & 8 & 4 & 8 & 8 \\ \hline
    \hline
    \end{tabular}
    \label{tab:models}
\end{table}

Our models architectures uses multi-head causal attention with LayerNorm \cite{vaswaniAttentionAllYou2017} with implementation provided by LitGPT \cite{litgpt-2023}. Table~\ref{tab:models} specifies the architecture settings of transformer models with 1M, 4M, 16M parameters.

\paragraph{Tokenizers} We employ the byte-pair encoding algorithm \cite{sennrichNeuralMachineTranslation2016} to train tokenizers on reasoning CoT examples for tiny transformers. Special tokens for reflection and reasoning structure (e.g., identifiers for the beginning and ending positions of states and reasoning steps) are manually added to the vocabulary. Since the vocabulary size is small (128 in our experiments), the learned vocabulary is limited to elemental characters and the high-frequency words for formatting.

\subsection{Hyperparameters}

Table~\ref{tab:hyper-parameters} presents the hyper-parameters used in training and testing the tiny-transformer models. In the following sections, we describe how these parameters are involved in our implementation.

\begin{table}[htb]
    \centering
    \caption{The main hyper-parameters used in this work.}
    \setlength{\extrarowheight}{1pt}
    \begin{tabular}{l||c|c|c||c|c|c}
    \hline \hline
    \textbf{Task} & \multparam{Mult} & \sudokuparam{Sudoku} \\
    \hline
    \textbf{Model size} & 1M & 4M & 16M & 1M & 4M & 16M \\
    \hline \hline
    Training CoT examples: $N_{CoT}$ & \multparam{32K} & \sudokuparam{36K} \\ \hline
    Total pretraining tokens: $N_{pre\_tok}$ & \sharedparam{1B} \\ \hline
    Pretraining batch size: $B_{pre}$ & \sharedparam{128} \\ \hline
    Pretraining learning rate: $\eta_{pre}$ & \sharedparam{0.001 $\to$ 0.00006} \\ \hline
    SFT batch size: $B_{SFT}$ & \sharedparam{128} \\ \hline
    SFT learning rate: $\eta_{SFT}$ & \sharedparam{0.001 $\to$ 0.00006} \\ \hline
    Non-reflective SFT epochs: $E_{SFT}$ & \sharedparam{5} \\ \hline
    Reflective sampling temperature: Solving $\tau_{refl:s}$ & \sharedparam{0.75} \\ \hline
    Reflective sampling temperature: Proposing $\tau_{refl:p}$ & 1 & 1.25 & 1.5 & 1 & 1.25 & 1.5 \\ \hline
    Reflective SFT epochs: $E_{RSFT}$ & \sharedparam{3} \\ \hline
    PPO replay buffer size: $N_{PPO:buf}$ & \sharedparam{512} \\ \hline
    GRPO replay buffer size: $N_{GRPO:buf}$ & \sharedparam{1024} \\ \hline
    RL sampling interval: $E_{RL:int}$ & \sharedparam{4} \\ \hline
    RL sampling temperature: Planning $\tau_{RL:\pi}$ & \multparam{1.25} & 1 & 1.25 & 1.25 \\ \hline
    RL sampling temperature: Feedback $\tau_{RL:\pi_f}$ & \sharedparam{1} \\ \hline
    RL clipping factor: $\varepsilon$ & \sharedparam{0.1} \\ \hline
    RL KL-divergence factor: $\beta$ & \sharedparam{0.1} \\ \hline
    GRPO group size: $G$ & \sharedparam{8} \\ \hline
    RL total epochs: $E_{RL}$ & \sharedparam{512} \\ \hline
    RL learning rate: $\eta_{RL}$ & \sharedparam{0.00005 $\to$ 0.00001} \\ \hline
    PPO warm-up epochs: $E_{PPO:warmup}$ & \sharedparam{64} \\ \hline
    Testing first-attempt temperature: $\tau_{\pi:first}$ & \multparam{0} & \sudokuparam{1} \\ \hline
    Testing revision temperature: $\tau_{\pi:rev}$ & \sharedparam{1} \\ \hline
    Testing verification temperature: $\tau_{\pi_f}$ & \sharedparam{0} \\ \hline
    Testing non-reflective steps $T$: & \sharedparam{32} \\ \hline
    Testing reflective steps $\tilde{T}$: & \sharedparam{64} \\ \hline
    RTBS width: $m$ & \sharedparam{4} \\ \hline
    \hline
    \end{tabular}
    \label{tab:hyper-parameters}
\end{table}

\paragraph{Non-reflective training} The pretraining and SFT utilize a dataset $N_{CoT}$ of CoT examples generated by an expert reasoning program. Pretraining treats these CoT examples as plain text and minimizes the cross-entropy loss for next-token prediction, using the batch size $B_{pre}$ and the learning rate $\eta_{pre}$. The pretraining process terminates after predicting a total of $N_{pre\_tok}$ tokens. The non-reflective SFT uses the same dataset as that used in pretraining. It maximizes the likelihood of predicting example outputs (reasoning steps) from prompts (reasoning states), using the batch size $B_{SFT}$ and the learning rate $\eta_{SFT}$. The total number of non-reflective SFT epochs is $E_{SFT}$.

\paragraph{Reflective SFT} To perform non-reflective SFT, we use the model after non-reflective training to sample trajectories for each input query in the training set. The reflective sampling involves two decoding temperatures: the lower solving temperature $\tau_{refl:s}$ is used to walk through the solution path, while a higher proposing temperature $\tau_{refl:p}$ is used to generate diverse steps, which are fed into the reflective dataset. Then, the verification examples, which include binary or detailed labels, are generated by an expert verifier program. The reflective SFT includes $E_{RSFT}$ epochs, using the same batch size and learning rate as the non-reflective SFT.

\paragraph{Reinforcement learning} We use online RL algorithms as described in Appendix~\ref{app:rl}, including PPO and GRPO. These algorithms include an experience replay buffer to store $N_{PPO:buf}$ and $N_{GRPO:buf}$ example trajectories, respectively. After every $E_{RL:int}$ epochs trained on the buffer, the buffer is updated by sampling new trajectories, using the temperature $\tau_{RL:\pi}$ for planning steps and the temperature $\tau_{RL:\tilde{\pi}}$ for reflective feedback. According to Equations~\ref{eq:ppo} and~\ref{eq:grpo}, the hyper-parameters in both the PPO and GRPO objectives include the clipping factor $\varepsilon$ and the KL-Divergence factor $\beta$. Additionally, GRPO defines $G$ as the number of trajectories in a group. We run RL algorithms for $E_{RL}$ epochs, using the learning rate $\eta_{RL}$. PPO involves $E_{PPO:warmup}$ warm-up epochs at the beginning of training, during which only the value model is optimized.

\paragraph{Testing} During testing, we execute the reasoner using three decoding temperatures: $\tau_{\pi:first}$ for the first planning attempt, $\tau_{\pi:rev}$ for the revised planning attempt after being rejected, and $\tau_{\pi_f}$ for self-verification. We use low temperatures to improve accuracy for more deterministic decisions, such as self-verifying feedback and the first attempt in Mult. We use higher temperatures for exploratory decisions, such as planning in Sudoku and revised attempts in Mult. We set the non-reflective reasoning budget to $T$ steps and the reflective reasoning budget to $\tilde{T}$ steps. If the reflective budget is exhausted, the reasoner reverts to non-reflective reasoning. We set the search width of RTBS to $m$.

\subsection{Computational resources}

Since our models are very small, it is entirely feasible to reproduce all our results on any PC (even laptops) that has a standard NVIDIA GPU. Using our hyper-parameters, the maximum GPU memory used for training the 1M, 4M, and 16M models is approximately 4GB, 12GB, and 16GB, respectively, which can be easily reduced by using smaller batch sizes. To run multiple experiments simultaneously, we utilize cloud servers with a total of 5 GPUs (one NVIDIA RTX-3090 GPU and four NVIDIA A10 GPUs).

For each model size and task, a complete pipeline (non-reflective training, reflective training, and RL) takes about two days on a single GPU. This includes 1-2 hours for non-reflective training, 8-12 hours for data collection for reflective training, 1-2 hours for reflective SFT, 6-12 hours for RL, and 6-12 hours for testing.

\section{Supplementary results of experiments}
\label{app:experiments}

In this section, we present supplementary results from our experiments: 1) we assess the reasoning accuracy of various large language models on integer multiplication and Sudoku tasks; 2) we report the accuracy outcomes of models after implementing different supervised fine-tuning strategies; 3) we provide full results of reasoning accuracy after GRPO; 4) we additionally provide the results of PPO, which is weaker than GRPO in reflective reasoning.

\subsection{Evaluation of LLMs}
\label{app:experiments:llms}

In this section, we provide the reasoning accuracy of LLMs on Mult and Sudoku, including GPT-4o \cite{openaiGPT4oSystemCard2024}, OpenAI o3-mini \cite{openaiOpenAIO3miniSystem}, and DeepSeek-R1 \cite{deepseek-aiDeepSeekR1IncentivizingReasoning2025}. Since GPT-4o is not a CoT-specialized model, we use the magic prompt ``let's think step-by-step'' \cite{kojimaLargeLanguageModels2023} to elicit CoT reasoning. For o3-mini and DeepSeek-R1, we only prompt with the natural description of the queries. As shown in Table~\ref{tab:accuracy-llms}, among these LLMs, OpenAI o3-mini produces the highest accuracy in both tasks.  

To illustrate how well tiny transformers can do in these tasks, we also present the best performance (results selected from Tables~\ref{tab:accuracy-sft}~and~\ref{tab:accuracy-grpo}) of our 1M, 4M, and 16M models for each difficulty level, respectively, showing a performance close to or even better than some of the LLM reasoners. For example, according to our GRPO results (see Table~\ref{tab:accuracy-grpo}), our best 4M Sudoku reasoner performs (RTBS through optional detailed verification) equally well to OpenAI o3-mini, and our best 16M Mult reasoner (through binary verification) outperforms DeepSeek-R1 in ID difficulties. Note that this paper mainly focuses on fundamental analysis and does not intend to compete with the general-purpose LLM reasoners, which can certainly gain better accuracy if specially trained on our tasks. Such a comparison is inherently unfair due to the massive gap in resource costs and data scale. The purpose of these results is to show how challenging these tasks can be, providing a conceptual notion of how well a tiny model can perform.

\begin{table}[tb]
\caption{The accuracy (\%) of GRT-4o, DeepSeek-R1, and OpenAI o3-mini in integer multiplication and Sudoku, compared with the best performance of our 1M (1M*), 4M (4M*), and 16M (16M*) transformers. The ``OOD-Hard'' for LLMs only refers to the same difficulty as used in testing our tiny transformers, as OOD-Hard questions may have been seen in the training of LLMs.}
    \setlength{\extrarowheight}{1pt}
    \centering
    \begin{tabular}{l|l|rrr|rrr}
        \hline
        \multicolumn{2}{c|}{Model} & GPT-4o & o3-mini & DeepSeek-R1 & 1M* & 4M* & 16M* \\
        \hline
        \multirow{3}{*}{Mult} & ID-Easy &
           73.2 & \textbf{100} & 96.8 & 96.2 & 98.7 & 99.7 \\
         ~ & ID-Hard &
           32.6 & \textbf{97.2} & 77.0 & 52.7 & 77.0 & 81.1 \\
         ~ & OOD-Hard &
           18.6 & \textbf{96.4} & 61.4 & 3.7 & 5.8 & 9.4 \\
         \hline
         \multirow{3}{*}{Sudoku} & ID-Easy &
           40.7 & 99.6 & 90.4 & 33.9 & 97.2 & \textbf{99.8} \\
         ~ & ID-Hard &
           2.8 & 52.8 & 4.4 & 0.4 & 58.1 & \textbf{72.2} \\
         ~ & OOD-Hard &
           0.0 & 0.0 & 0.0 & 0.0 & 6.9 & \textbf{14.4} \\
         \hline
    \end{tabular}
    \label{tab:accuracy-llms}
\end{table}

\subsection{Results of supervised fine tuning}
\label{app:experiments:sft}

Table~\ref{tab:accuracy-sft} includes our complete results of reasoning accuracy after non-reflective and reflective SFT. As discussed in Section~\ref{sec:mtp}, our implementation uses \textbf{Reduced} states that maintain only useful information for tiny transformers. To justify this, we also test the vanilla \textbf{Complete} implementation, where each state $\seq[t]{S} = (\seq[t]{Q}, \seq[1]{R} \ldots, \seq[t-1]{R})$ includes the full history of past reasoning steps. As a baseline, the \textbf{Direct} thought without intermediate steps is also presented.

\begin{table}[t]
    \centering
    \caption{The reasoning accuracy (\%) for 1M, 4M, and 16M transformers after SFT.}
    \setlength{\extrarowheight}{1pt}
    \setlength{\tabcolsep}{3pt}
    \footnotesize
    \begin{tabularx}{\linewidth}{l|l|l||R|r|R|RRR|RRR}
    \hline \hline  
    \multicolumn{3}{l||}{Thought Implementation} & Direct & Complete & \multicolumn{7}{c}{Reduced} \\ \hline
    \multicolumn{3}{l||}{Verification Type} & None & None &  None & \multicolumn{3}{c|}{Binary} & \multicolumn{3}{c}{Detailed} \\ \hline
    \multicolumn{3}{l||}{Reflective Execution} & None & None & None & None & RMTP & RTBS & None & RMTP & RTBS \\
    \hline \hline  
    \multirow{6}{*}{1M} &
        \multirow{3}{*}{Mult} & 
            ID Easy & 
                \Acc{21.8} &  
                \Acc{10.6} &  
                \Acc{23.6} &  
                \Acc{95.8} & \Acc{94.5} & \Acc{93.4} &  
                \Acc{22.0} & \Acc{33.4} & \Acc{24.2} \\  
            ~ & ~ & ID Hard &
                \Acc{3.0} &  
                \Acc{1.4} &  
                \Acc{2.0} &  
                \Acc{52.7} & \Acc{44.6} & \Acc{35.5} &  
                \Acc{2.2} & \Acc{4.8} & \Acc{2.8} \\  
            ~ & ~ & OOD Hard &
                \Acc{1.4} &  
                \Acc{0.3} &  
                \Acc{1.0} &  
                \Acc{3.7} & \Acc{2.2} & \Acc{1.2} &  
                \Acc{1.0} & \Acc{0.8} & \Acc{0.4} \\  
        \cline{2-12}
        ~ & \multirow{3}{*}{Sudoku} & 
            ID Easy & 
                \Acc{2.8} &  
                \Acc{0} &  
                \Acc{1.4} &  
                \Acc{33.0} & \Acc{32.4} & \Acc{2.4} &  
                \Acc{17.4} & \Acc{18.7} & \Acc{19.4} \\  
            ~ & ~ & ID Hard &
                \Acc{0} &  
                \Acc{0} &  
                \Acc{0} &  
                \Acc{0.3} & \Acc{0.1} & \Acc{0} &  
                \Acc{0.1} & \Acc{0} & \Acc{0} \\  
            ~ & ~ & OOD Hard &
                \Acc{0} &  
                \Acc{0} &  
                \Acc{0} &  
                \Acc{0} & \Acc{0} & \Acc{0} &  
                \Acc{0} & \Acc{0} & \Acc{0} \\  
    \hline \hline
    \multirow{6}{*}{4M} &
        \multirow{3}{*}{Mult} & 
            ID Easy & 
                \Acc{15.6} &  
                \Acc{17.2} &  
                \Acc{92.0} &  
                \Acc{97.7} & \Acc{97.6} & \Acc{97.3} &  
                \Acc{94.5} & \Acc{93.8} & \Acc{93.3} \\  
            ~ & ~ & ID Hard &
                \Acc{1.7} &  
                \Acc{1.9} &  
                \Acc{37.3} &  
                \Acc{56.9} & \Acc{62.2} & \Acc{53.0} &  
                \Acc{43.4} & \Acc{47.6} & \Acc{42.4} \\  
            ~ & ~ & OOD Hard &
                \Acc{1.2} &  
                \Acc{1.0} &  
                \Acc{2.2} &  
                \Acc{2.9} & \Acc{1.8} & \Acc{1.1} &  
                \Acc{3.7} & \Acc{3.3} & \Acc{2.7} \\  
        \cline{2-12}
        ~ & \multirow{3}{*}{Sudoku} & 
            ID Easy & 
                \Acc{13.0} &  
                \Acc{3.9} &  
                \Acc{52.2} &  
                \Acc{92.1} & \Acc{96.8} & \Acc{96.0} &  
                \Acc{54.4} & \Acc{81.9} & \Acc{88.5} \\  
            ~ & ~ & ID Hard &
                \Acc{0.1} &  
                \Acc{0} &  
                \Acc{3.3} &  
                \Acc{40.9} & \Acc{46.3} & \Acc{53.3} &  
                \Acc{5.2} & \Acc{16.9} & \Acc{45.7} \\  
            ~ & ~ & OOD Hard &
                \Acc{0} &  
                \Acc{0} &  
                \Acc{0} &  
                \Acc{0.4} & \Acc{4.0} & \Acc{6.7} &  
                \Acc{0.0} & \Acc{1.1} & \Acc{2.0} \\  
    \hline \hline
    \multirow{6}{*}{16M} &
        \multirow{3}{*}{Mult} & 
            ID Easy & 
                \Acc{15.1} &  
                \Acc{59.2} &  
                \Acc{99.2} &  
                \Acc{98.8} & \Acc{98.9} & \Acc{98.8} &  
                \Acc{99.2} & \Acc{99.5} & \Acc{98.5} \\  
            ~ & ~ & ID Hard &
                \Acc{1.6} &  
                \Acc{9.6} &  
                \Acc{65.9} &  
                \Acc{65.2} & \Acc{76.7} & \Acc{74.9} &  
                \Acc{65.9} & \Acc{76.4} & \Acc{73.5} \\  
            ~ & ~ & OOD Hard &
                \Acc{1.2} &  
                \Acc{1.0} &  
                \Acc{2.5} &  
                \Acc{1.1} & \Acc{1.3} & \Acc{1.3} &  
                \Acc{9.2} & \Acc{9.4} & \Acc{7.2} \\  
        \cline{2-12}
        ~ & \multirow{3}{*}{Sudoku} & 
            ID Easy & 
                \Acc{35.8} &  
                \Acc{15.9} &  
                \Acc{95.7} &  
                \Acc{97.1} & \Acc{97.9} & \Acc{92.5} &  
                \Acc{93.0} & \Acc{99.0} & \Acc{99.7} \\  
            ~ & ~ & ID Hard &
                \Acc{0.4} &  
                \Acc{0} &  
                \Acc{48.8} &  
                \Acc{50.1} & \Acc{53.1} & \Acc{54.8} &  
                \Acc{46.9} & \Acc{57.9} & \Acc{70.7} \\  
            ~ & ~ & OOD Hard &
                \Acc{0} &  
                \Acc{0} &  
                \Acc{0.4} &  
                \Acc{0.9} & \Acc{4.4} & \Acc{6.0} &  
                \Acc{0.7} & \Acc{8.2} & \Acc{14.4} \\  
    \hline \hline
    \end{tabularx}
    \label{tab:accuracy-sft}
\end{table}

\textbf{Reducing the redundancy of states in long CoTs benefits tiny transformers.}  The left three columns in Table~\ref{tab:accuracy-sft} compare the above thought implementations for non-reflective models. We see that both direct and complete thoughts fail to provide an acceptable performance even in ID-Easy difficulty. This proves the importance of avoiding long-context inference by reducing redundancy in representing states. Considering the huge performance gap, we exclude the complete and direct implementations from our main discussion.

\paragraph{Estimated errors of self-verification} For RMTP and RTBS executions, we employ the oracle verifiers to maintain test-time statistics of the average $e_{-}$ and $e_{+}$ (see definition in Section \ref{sec:theory}) of reasoning states. The results are shown in Table~\ref{tab:refl-error-sft}, where we also present the difference in how much reflective reasoning raises the performance over non-reflective reasoning. We only count the errors in the \textit{first attempts} on reasoning states to avoid positive bias, as the reasoner may be trapped in some state and repeat the same error for many steps.

\begin{table}[h]
    \centering
    \caption{The percentage (\%) of test-time verification errors (i.e., $e_{-}$ and $e_{+}$) after reflective SFT. Additionally, we compute $\Delta$ as the difference of how much reflective reasoning raises the performance over non-reflective reasoning, i.e. RMTP (RTBS) accuracy minus non-reflective accuracy.}
    \setlength{\extrarowheight}{1pt}
    \setlength{\tabcolsep}{3pt}
    \footnotesize
    \begin{tabular}{l|l|l||rrr|rrr|rrr|rrr}
    \hline \hline  
    \multicolumn{3}{l||}{Verification Type} & \multicolumn{6}{c|}{Binary} & \multicolumn{6}{c}{Detailed} \\ \hline
    \multicolumn{3}{l||}{Reflective Execution} & \multicolumn{3}{c|}{RMTP} & \multicolumn{3}{c|}{RTBS} & \multicolumn{3}{c|}{RMTP} & \multicolumn{3}{c}{RTBS} \\
    \hline
    \multicolumn{3}{l||}{Measurement} & $e_{+}$ & $e_{-}$ & $\Delta$ & $e_{+}$ & $e_{-}$ & $\Delta$ & $e_{+}$ & $e_{-}$ & $\Delta$ & $e_{+}$ & $e_{-}$ & $\Delta$ \\
    \hline \hline  
    \multirow{6}{*}{1M} &
        \multirow{3}{*}{Mult} & 
            ID Easy & 
                19.3 & 4.4 & $-1.3$ &  
                14.9 & 4.9 & $-2.4$ &  
                10.2 & 18.3 & $+11.4$ &  
                24.4 & 19.4 & $+2.2$ \\  
            ~ & ~ & ID Hard &
                3.8 & 37.6 & $-8.1$ &  
                3.6 & 33.0 & $-17.2$ &  
                0.9 & 6.9 & $+2.6$ &  
                14.5 & 7.4 & $+0.6$ \\  
            ~ & ~ & OOD Hard &
                16.4 & 32.9 & $-1.5$ &  
                6.0 & 22.5 & $-2.5$ &  
                13.6 & 2.2 & $-0.2$ &  
                13.2 & 2.4 & $-0.6$ \\  
        \cline{2-15}
        ~ & \multirow{3}{*}{Sudoku} & 
            ID Easy & 
                9.9 & 35.2 & $-0.6$ &  
                31.1 & 43.9 & $-30.6$ &  
                87.1 & 0.1 & $+1.3$ &  
                85.1 & 0.1 & $+2$ \\  
            ~ & ~ & ID Hard &
                21.1 & 31.0 & $-0.2$ &  
                33.1 & 28.6 & $-0.3$ &  
                82.8 & 0 & $-0.1$ &  
                79.4 & 0 & $-0.1$ \\  
            ~ & ~ & OOD Hard &
                60.3 & 7.5 & $0$ &  
                60.2 & 13.4 & $0$ &  
                87.9 & 0 & $0$ &  
                84.5 & 0 & $0$ \\  
    \hline \hline
    \multirow{6}{*}{4M} &
        \multirow{3}{*}{Mult} & 
            ID Easy & 
                25.1 & 5.9 & $-0.1$ & 
                58.1 & 8.9 & $-0.4$ & 
                30.4 & 3.7 & $-0.7$ & 
                28.7 & 7.5 & $-1.2$ \\  
            ~ & ~ & ID Hard &
                2.4 & 23.6 & $+5.3$ & 
                26.0 & 30.8 & $-3.9$ & 
                3.3 & 25.1 & $+4.2$ & 
                10.0 & 29.3 & $-1.0$ \\  
            ~ & ~ & OOD Hard &
                7.5 & 42.9 & $-1.1$ & 
                18.0 & 61.7 & $-1.8$ & 
                5.9 & 28.1 & $-0.4$ & 
                10.9 & 28.2 & $-1.0$ \\  
        \cline{2-15}
        ~ & \multirow{3}{*}{Sudoku} & 
            ID Easy & 
                39.5 & 9.5 & $+4.7$ &  
                40.4 & 11.5 & $+3.9$ &  
                23.8 & 0.1 & $+27.5$ &  
                46.7 & 0.3 & $+34.1$ \\  
            ~ & ~ & ID Hard &
                41.3 & 1.9 & $+5.4$ &  
                56.0 & 6.7 & $+12.4$ &  
                17.3 & 0.2 & $+11.7$ &  
                22.1 & 0.3 & $+40.5$ \\  
            ~ & ~ & OOD Hard &
                78.5 & 0.8 & $+3.6$ &  
                70.6 & 0.6 & $+6.3$ &  
                31.5 & 0.1 & $+1.1$ &  
                35.9 & 0.1 & $+2$ \\  
    \hline \hline
    \multirow{6}{*}{16M} &
        \multirow{3}{*}{Mult} & 
            ID Easy & 
                11.3 & 8.6 & $+0.1$ &  
                6.1 & 9.4 & $+0.0$ &  
                15.7 & 2.1 & $+0.3$ &  
                3.8 & 2.9 & $-0.7$ \\  
            ~ & ~ & ID Hard &
                1.4 & 13.9 & $+11.5$ &  
                1.8 & 16.9 & $+9.7$ &  
                2.5 & 7.0 & $+10.5$ &  
                4.4 & 7.2 & $+7.6$ \\  
            ~ & ~ & OOD Hard &
                1.3 & 86.4 & $+0.2$ &  
                1.5 & 88.2 & $+0.2$ & 
                8.5 & 18.3 & $+0.2$ &  
                11.7 & 19.7 & $-2$ \\  
        \cline{2-15}
        ~ & \multirow{3}{*}{Sudoku} & 
            ID Easy & 
                40.1 & 3.3 & $+0.8$ &  
                10.1 & 4.7 & $-4.6$ &  
                6.6 & 1.7 & $+6$ &  
                9.1 & 6.4 & $+6.7$ \\  
            ~ & ~ & ID Hard &
                50.5 & 4.3 & $+3$ &  
                37.2 & 9.4 & $+4.7$ &  
                15.4 & 0.1 & $+11.0$ &  
                10.6 & 0.6 & $+23.8$ \\  
            ~ & ~ & OOD Hard &
                75.2 & 4.2 & $+3.5$ &  
                65.0 & 3.1 & $+5.1$ &  
                28.3 & 0.1 & $+7.5$ &  
                24.8 & 0.0 & $+13.7$ \\  
    \hline \hline
    \end{tabular}
    \label{tab:refl-error-sft}
\end{table}

Our full results provide more evidence for the findings discussed in Section~\ref{sec:experiments:sft}:
\begin{itemize}[left=5pt, itemsep=0.5em, parsep=0pt, topsep=0pt, partopsep=0pt]
    \item Learning to self-verify enhances non-reflective execution for 9 out of 12 models (2 verification types, 3 model sizes, and 2 tasks), such that accuracy does not decrease in any difficulty and increases in at least one difficulty.
    \item RMTP improves performance over non-reflective execution for all 4M and 16M models. However, RMTP based on binary verification fails to benefit the 1M models, which suffer from a high $e_-$.
    \item 4M and 16M Sudoku models greatly benefit from RTBS, especially using detailed verification.
\end{itemize}

\subsection{Results of GRPO}
\label{app:experiments:grpo}

The complete results of models after GRPO are given in Table~\ref{tab:accuracy-grpo}. To have a convenient comparison, Table~\ref{tab:accuracy-grpo-dif} presents the difference of accuracy across Table~\ref{tab:accuracy-sft} and Table~\ref{tab:accuracy-grpo}, showing that the difference of accuracy is caused by GRPO.

\begin{table}[h]
    \centering
    \caption{The accuracy (\%) of the 1M, 4M, and 16M transformers after GRPO.}
    \setlength{\tabcolsep}{3pt}
    \setlength{\extrarowheight}{1pt}
    \footnotesize
    \begin{tabular}{l|l|l|r|rrr|rrr|rrr}
        \hline \hline
        \multicolumn{3}{l|}{Verification Type} & 
        \multicolumn{1}{c|}{None} &
        \multicolumn{3}{c|}{Binary} &
        \multicolumn{3}{c|}{Detailed} &
        \multicolumn{3}{c}{Optional Detailed} \\
        \hline
        \multicolumn{3}{l|}{Reflective Execution} & 
        None &
        None & RMTP & RTBS &
        None & RMTP & RTBS &
        None & RMTP & RTBS \\
        \hline \hline
        \multirow{6}{*}{1M} & \multirow{3}{*}{Mult} & ID Easy & 
            \Acc{52.6} &  
            \Acc{96.2} & \Acc{95.9} & \Acc{95.7} &  
            \Acc{53.0} & \Acc{49.5} & \Acc{45.1} &  
            \Acc{48.6} & \Acc{47.7} & \Acc{48.8} \\  
        ~ & ~ & ID Hard &
            \Acc{11.6} &  
            \Acc{50.0} & \Acc{44.0} & \Acc{42.0} &  
            \Acc{11.4} & \Acc{9.7} & \Acc{8.1} &  
            \Acc{12.2} & \Acc{12.7} & \Acc{12.6} \\  
        ~ & ~ & OOD Hard &
            \Acc{1.1} &  
            \Acc{2.5} & \Acc{1.9} & \Acc{1.6} &  
            \Acc{1.0} & \Acc{0.9} & \Acc{0.4} &  
            \Acc{1.2} & \Acc{1.3} & \Acc{1.2} \\  
        \cline{2-13}
        ~ & \multirow{3}{*}{Sudoku} & ID Easy & 
            \Acc{1.3} &  
            \Acc{33.9} & \Acc{29.2} & \Acc{4.5} &  
            \Acc{17.6} & \Acc{20.7} & \Acc{18.7} &  
            \Acc{23.0} & \Acc{23.0} & \Acc{22.6} \\  
        ~ & ~ & ID Hard &
            \Acc{0} &  
            \Acc{0.4} & \Acc{0} & \Acc{0.2} &  
            \Acc{0} & \Acc{0.1} & \Acc{0} &  
            \Acc{0.1} & \Acc{0.1} & \Acc{0} \\  
        ~ & ~ & OOD Hard &
            \Acc{0} &  
            \Acc{0} & \Acc{0} & \Acc{0} &  
            \Acc{0} & \Acc{0} & \Acc{0} &  
            \Acc{0} & \Acc{0} & \Acc{0} \\  
        \hline \hline
        \multirow{6}{*}{4M} & \multirow{3}{*}{Mult} & ID Easy & 
            \Acc{98.0} &  
            \Acc{98.6} & \Acc{98.7} & \Acc{98.8} &  
            \Acc{98.2} & \Acc{98.0} & \Acc{98.4} &  
            \Acc{98.2} & \Acc{98.4} & \Acc{98.6} \\ 
        ~ & ~ & ID Hard &
            \Acc{65.6} &  
            \Acc{73.6} & \Acc{77.0} & \Acc{76.7} &  
            \Acc{63.0} & \Acc{64.3} & \Acc{63.2} &  
            \Acc{63.9} & \Acc{66.8} & \Acc{66.1} \\  
        ~ & ~ & OOD Hard &
            \Acc{2.3} &  
            \Acc{2.7} & \Acc{2.7} & \Acc{2.3} &  
            \Acc{5.8} & \Acc{5.3} & \Acc{5.3} &  
            \Acc{3.3} & \Acc{3.2} & \Acc{3.3} \\  
        \cline{2-13}
        ~ & \multirow{3}{*}{Sudoku} & ID Easy & 
            \Acc{58.7} &  
            \Acc{93.8} & \Acc{97.2} & \Acc{96.7} &  
            \Acc{57.8} & \Acc{85.3} & \Acc{92.2} &  
            \Acc{77.0} & \Acc{94} & \Acc{98.2} \\  
        ~ & ~ & ID Hard &
            \Acc{3.2} &  
            \Acc{43.9} & \Acc{53.8} & \Acc{58.1} &  
            \Acc{5.6} & \Acc{24.7} & \Acc{47.7} &  
            \Acc{21.4} & \Acc{37.7} & \Acc{61.3} \\  
        ~ & ~ & OOD Hard &
            \Acc{0} &  
            \Acc{0.4} & \Acc{4.9} & \Acc{6.9} &  
            \Acc{0} & \Acc{0.4} & \Acc{2.0} &  
            \Acc{0} & \Acc{1.8} & \Acc{4.2} \\  
        \hline \hline
        \multirow{6}{*}{16M} & \multirow{3}{*}{Mult} & ID Easy & 
            \Acc{99.8} &  
            \Acc{99.2} & \Acc{99.2} & \Acc{99.1} &  
            \Acc{99.7} & \Acc{99.6} & \Acc{99.4} &  
            \Acc{99.2} & \Acc{99.4} & \Acc{99.3} \\  
        ~ & ~ & ID Hard &
            \Acc{77.2} &  
            \Acc{75.2} & \Acc{81.1} & \Acc{79.6} &  
            \Acc{76.3} & \Acc{77.8} & \Acc{77.6} &  
            \Acc{75.9} & \Acc{78.4} & \Acc{77.7} \\  
        ~ & ~ & OOD Hard &
            \Acc{1.8} &  
            \Acc{1.3} & \Acc{1.8} & \Acc{1.8} &  
            \Acc{8.4} & \Acc{8.2} & \Acc{7.4} &  
            \Acc{6.0} & \Acc{5.5} & \Acc{5.6} \\  
        \cline{2-13}
        ~ & \multirow{3}{*}{Sudoku} & ID Easy & 
            \Acc{96.3} &  
            \Acc{97.6} & \Acc{98.8} & \Acc{94.6} &  
            \Acc{93.3} & \Acc{98.8} & \Acc{99.8} &  
            \Acc{88.7} & \Acc{97.6} & \Acc{99.0} \\  
        ~ & ~ & ID Hard &
            \Acc{51.3} &  
            \Acc{51.7} & \Acc{58.0} & \Acc{62.3} &  
            \Acc{46.7} & \Acc{60.4} & \Acc{72.2} &  
            \Acc{42.2} & \Acc{57.3} & \Acc{70.9} \\  
        ~ & ~ & OOD Hard &
            \Acc{0.7} &  
            \Acc{0.7} & \Acc{6.0} & \Acc{7.8} &  
            \Acc{0.2} & \Acc{6.7} & \Acc{12.0} &  
            \Acc{0.2} & \Acc{6.7} & \Acc{11.1} \\  
        \hline \hline
    \end{tabular}
    \label{tab:accuracy-grpo}
\end{table}

\begin{table}[h]
    \centering
    \caption{The difference of accuracy (\%) of the 1M, 4M, and 16M transformers after GRPO. Positive values mean that GRPO raises the accuracy of the models above SFT.}
    \setlength{\tabcolsep}{3pt}
    \setlength{\extrarowheight}{1pt}
    \footnotesize
    \begin{tabular}{l|l|l|r|rrr|rrr}
        \hline \hline
        \multicolumn{3}{l|}{Reflective Training} & 
        \multicolumn{1}{c|}{None} &
        \multicolumn{3}{c|}{Binary} &
        \multicolumn{3}{c}{Detailed} \\
        \hline
        \multicolumn{3}{l|}{Reflective Execution} & 
        None &
        None & RMTP & RTBS &
        None & RMTP & RTBS \\
        \hline \hline
        \multirow{6}{*}{1M} & \multirow{3}{*}{Mult} & ID Easy & 
            $+29.0$ &  
            $+0.4$ & $+1.4$ & $+2.3$ &  
            $+31.0$ & $+16.1$ & $+20.9$ \\  
        ~ & ~ & ID Hard &
            $+9.6$ &  
            $-2.7$ & $-0.6$ & $+6.5$ &  
            $+9.2$ & $+4.9$ & $+5.3$ \\  
        ~ & ~ & OOD Hard &
            $+0.1$ &  
            $-1.2$ & $-0.3$ & $+0.4$ &  
            $0.0$ & $+0.1$ & $0.0$ \\  
        \cline{2-10}
        ~ & \multirow{3}{*}{Sudoku} & ID Easy & 
            $-0.1$ &  
            $+0.9$ & $-3.2$ & $+2.1$ &  
            $+0.2$ & $+2.0$ & $-0.7$ \\  
        ~ & ~ & ID Hard &
            $0.0$ &  
            $+0.1$ & $-0.1$ & $+0.2$ &  
            $-0.1$ & $+0.1$ & $0.0$ \\  
        ~ & ~ & OOD Hard &
            $0.0$ &  
            $0.0$ & $0.0$ & $0.0$ &  
            $0.0$ & $0.0$ & $0.0$ \\  
        \hline \hline
        \multirow{6}{*}{4M} & \multirow{3}{*}{Mult} & ID Easy & 
            $+6.0$ &  
            $+0.9$ & $+1.1$ & $+1.5$ &  
            $+3.7$ & $+4.2$ & $+5.1$ \\  
        ~ & ~ & ID Hard &
            $+28.3$ &  
            $+16.7$ & $+14.8$ & $+23.7$ &  
            $+19.6$ & $+16.7$ & $+20.8$ \\  
        ~ & ~ & OOD Hard &
            $0.1$ &  
            $-0.2$ & $+0.9$ & $+1.2$ &  
            $+2.1$ & $+2.0$ & $+2.6$ \\  
        \cline{2-10}
        ~ & \multirow{3}{*}{Sudoku} & ID Easy & 
            $+6.5$ &  
            $+1.7$ & $+0.4$ & $+0.7$ &  
            $+3.4$ & $+3.4$ & $+3.7$ \\  
        ~ & ~ & ID Hard &
            $-0.1$ &  
            $+3.0$ & $+7.5$ & $+4.8$ &  
            $+0.4$ & $+7.8$ & $+2.0$ \\  
        ~ & ~ & OOD Hard &
            $0.0$ &  
            $+0.4$ & $+4.9$ & $+6.9$ &  
            $0$ & $-0.7$ & $0$ \\  
        \hline \hline
        \multirow{6}{*}{16M} & \multirow{3}{*}{Mult} & ID Easy & 
            $+0.6$ &  
            $+0.4$ & $+0.3$ & $+0.3$ &  
            $+0.5$ & $+0.1$ & $+0.9$ \\  
        ~ & ~ & ID Hard &
            $+11.3$ &  
            $+10.0$ & $+4.4$ & $+4.7$ &  
            $+10.4$ & $+1.4$ & $+4.1$ \\  
        ~ & ~ & OOD Hard &
            $-0.7$ &  
            $+0.2$ & $+0.5$ & $+0.5$ &  
            $-0.8$ & $-1.2$ & $+0.2$ \\  
        \cline{2-10}
        ~ & \multirow{3}{*}{Sudoku} & ID Easy & 
            $+0.6$ &  
            $+0.5$ & $+0.9$ & $+2.1$ &  
            $+0.3$ & $-0.2$ & $+0.1$ \\  
        ~ & ~ & ID Hard &
            $+2.5$ &  
            $+1.6$ & $+4.9$ & $+7.5$ &  
            $-0.2$ & $+2.5$ & $+1.5$ \\  
        ~ & ~ & OOD Hard &
            $+0.3$ &  
            $-0.2$ & $+1.6$ & $+1.8$ &  
            $-0.5$ & $-1.5$ & $-2.4$ \\  
        \hline \hline
    \end{tabular}
    \label{tab:accuracy-grpo-dif}
\end{table}

\textbf{Reflection usually extends the limit of RL.} For reflective models, GRPO samples experience CoTs through RMTP, where self-verification $\mathcal{V}$ and the forward policy $\pi$ are jointly optimized in the form of a self-verifying policy $\tilde{\pi}$. By comparing the RMTP results (columns 3, 6, and 9) with the non-reflective model (the first column) in Table~\ref{tab:accuracy-grpo}, we find that GRPO usually converges to higher accuracy solving ID-Hard problems in RMTP. This shows that having reflection in long CoTs extends the limit of RL, compared to only exploiting a planning policy.

Interestingly, optional detailed verification generally demonstrates higher performance after GRPO than mandatory verification. A probable explanation is that a mandatory verification may cause the reasoner to overly rely on reflection, which stagnates the learning of the planning policy.

Overall, our full results provide more evidence to better support our findings discussed in Section~\ref{sec:experiments:rl}:
\begin{itemize}[left=5pt, itemsep=0.5em, parsep=0pt, topsep=0pt, partopsep=0pt]
    \item RL enhances 24 out of 42 ID-Hard results in Table~\ref{tab:accuracy-grpo-dif} by no less than 3\% (measured in absolute difference). However, only 8 out of 42 OOD-Hard results are improved by no less than 1\%.
    \item In table~\ref{tab:refl-error-grpo}, an increase of $e_+$ is observed in 20 out of 25 cases where $e_-$ decreases by more than 5\% (measured in absolute difference).
\end{itemize}

\subsubsection{The verification errors after GRPO}

Furthermore, we also present the estimated errors of verification after GRPO in Table~\ref{tab:refl-error-grpo}, in order to investigate how self-verification evolves during RL. Our main observation is that \textit{if a model has a high $e_-$ before GPRO, then GRPO tends to reduce $e_-$ and also increases $e_+$}. This change in verification errors is a rather superficial (lazy) way to obtain improvements. If the model faithfully improves verification through RL, both types of errors should simultaneously decrease --- such a case occurs only in the ID-Easy difficulty or when $e_-$ is already low after SFT. This highlights a potential retrograde of self-verification ability after RL.

\begin{table}[h]
    \def\Inc#1{{\tiny{#1↑}}}  
    \def\Dec#1{{\tiny{#1↓}}}  

    \centering
    \caption{The percentage (\%) of test-time verification errors (i.e., $e_{-}$ and $e_{+}$) after GRPO. The arrows ``↑'' (increase) and ``↓'' (decrease) present the change compared to the results in SFT (Table~\ref{tab:refl-error-sft}).}
    \setlength{\extrarowheight}{1pt}
    \setlength{\tabcolsep}{3pt}
    \footnotesize
    \begin{tabular}{l|l|l||rr|rr|rr|rr}
    \hline \hline
    \multicolumn{3}{l||}{Verification Type} & \multicolumn{4}{c|}{Binary} & \multicolumn{4}{c}{Detailed} \\ \hline
    \multicolumn{3}{l||}{Reflective Execution} & \multicolumn{2}{c|}{RMTP} & \multicolumn{2}{c|}{RTBS} & \multicolumn{2}{c|}{RMTP} & \multicolumn{2}{c}{RTBS} \\
    \hline
    \multicolumn{3}{l||}{Error Type} & $e_{+}$ & $e_{-}$ & $e_{+}$ & $e_{-}$ & $e_{+}$ & $e_{-}$ & $e_{+}$ & $e_{-}$ \\
    \hline \hline
    \multirow{6}{*}{1M} &
        \multirow{3}{*}{Mult} &
            ID Easy & 
                \Dec{6.8}12.5 & \Dec{3.3}1.1 & 
                \Dec{5.0}9.9 & \Dec{3.5}1.4 & 
                \Inc{12.4}22.6 & \Dec{17.2}1.1 & 
                \Inc{3.5}27.9 & \Dec{17.6}1.8 \\
            ~ & ~ & ID Hard &
                \Inc{16.5}20.3 & \Dec{17.6}20.0 &
                \Inc{7.0}10.6 & \Dec{13.7}19.3 &
                \Inc{54.6}55.5 & \Dec{4.6}2.3 &
                \Inc{42.1}56.6 & \Dec{5.7}1.7 \\
            ~ & ~ & OOD Hard &
                \Inc{41.2}57.6 & \Dec{1.6}31.3 &
                \Inc{40.2}46.2 & \Inc{1.5}24.0 &
                \Inc{53.9}67.5 & \Inc{16.4}18.6 &
                \Inc{57.3}70.5 & \Inc{19.1}21.5 \\
        \cline{2-11}
        ~ & \multirow{3}{*}{Sudoku} &
            ID Easy & 
                \Inc{1.2}11.1 & \Dec{1.7}36.9 &
                \Dec{13.1}18.0 & \Inc{4.0}47.9 &
                \Inc{0.1}87.2 & \Dec{0.0}0.0 &
                \Inc{0.0}87.1 & \Inc{0.4}0.5 \\
            ~ & ~ & ID Hard &
                \Inc{2.9}24.0 & \Dec{0.9}30.1 &
                \Dec{5.5}27.6 & \Inc{0.4}29.0 &
                \Dec{0.1}83.7 & \Dec{0.0}0.0 &
                \Dec{0.1}80.3 & \Dec{0.0}0.0 \\
            ~ & ~ & OOD Hard &
                \Inc{3.1}63.4 & \Inc{0.8}8.3 &
                \Dec{4.5}55.7 & \Inc{0.9}14.3 &
                \Inc{0.5}88.4 & \Dec{0.0}0.0 &
                \Inc{0.6}85.1 & \Dec{0.0}0.0 \\
    \hline \hline
    \multirow{6}{*}{4M} &
        \multirow{3}{*}{Mult} &
            ID Easy & 
                \Dec{2.5}22.6 & \Dec{4.8}1.1 &
                \Dec{30.2}27.9 & \Dec{7.1}1.8 &
                \Dec{21.1}9.3 & \Dec{3.0}0.7 &
                \Dec{5.0}23.7 & \Dec{6.8}0.7 \\
            ~ & ~ & ID Hard &
                \Inc{53.1}55.5 & \Dec{21.8}1.8 &
                \Inc{30.6}56.6 & \Dec{28.5}2.3 &
                \Inc{20.9}24.2 & \Dec{20.1}5.0 &
                \Inc{22.0}32.0 & \Dec{24.8}4.5 \\
            ~ & ~ & OOD Hard &
                \Inc{60.0}67.5 & \Dec{24.3}18.6 &
                \Inc{52.5}70.5 & \Dec{40.2}21.5 &
                \Inc{55.7}61.6 & \Dec{20.9}7.2 &
                \Inc{53.4}64.3 & \Dec{22.9}5.3 \\
        \cline{2-11}
        ~ & \multirow{3}{*}{Sudoku} &
            ID Easy & 
                \Dec{7.9}31.6 & \Dec{3.2}6.3 &
                \Inc{2.8}43.2 & \Dec{7.7}3.8 &
                \Dec{11.5}12.3 & \Inc{1.3}1.4 &
                \Dec{27.1}19.6 & \Inc{1.9}2.2 \\
            ~ & ~ & ID Hard &
                \Inc{28.7}70.0 & \Dec{0.5}1.4 &
                \Inc{2.2}58.2 & \Dec{4.7}2.0 &
                \Dec{4.5}12.8 & \Inc{1.6}1.8 &
                \Dec{9.2}12.9 & \Dec{0.1}0.2 \\
            ~ & ~ & OOD Hard &
                \Inc{6.9}85.4 & \Dec{0.7}0.1 &
                \Inc{11.0}81.6 & \Dec{0.2}0.4 &
                \Dec{2.1}29.4 & \Dec{0}0.1 &
                \Dec{5.9}30.0 & \Inc{0.1}0.2 \\
    \hline \hline
    \multirow{6}{*}{16M} &
        \multirow{3}{*}{Mult} &
            ID Easy & 
                \Dec{4.2}7.1 & \Dec{7.2}1.4 &
                \Dec{0.4}5.7 & \Dec{7.8}1.6 &
                \Dec{8.1}7.6 & \Dec{1.9}0.2 &
                \Inc{7.8}11.6 & \Dec{2.7}0.2 \\
            ~ & ~ & ID Hard &
                \Inc{7.9}9.3 & \Dec{12.8}1.1 &
                \Inc{6.7}8.5 & \Dec{15.6}1.3 &
                \Inc{22.7}25.2 & \Dec{3.9}3.1 &
                \Inc{18.6}23.0 & \Dec{4.3}2.9 \\
            ~ & ~ & OOD Hard &
                \Inc{79.0}80.3 & \Dec{47.1}39.3 &
                \Inc{89.2}90.7 & \Dec{46.4}41.8 &
                \Inc{46.0}54.5 & \Dec{12.5}5.8 &
                \Inc{46.4}58.1 & \Dec{14.7}5.0 \\
        \cline{2-11}
        ~ & \multirow{3}{*}{Sudoku} &
            ID Easy & 
                \Inc{24.5}64.6 & \Inc{6.2}9.5 &
                \Inc{25.6}35.7 & \Inc{8.5}13.2 &
                \Inc{2.1}8.7 & \Dec{0.6}1.1 &
                \Dec{3.3}5.8 & \Dec{5.4}1.0 \\
            ~ & ~ & ID Hard &
                \Inc{25.3}75.8 & \Dec{0.0}4.3 &
                \Inc{16.4}53.6 & \Dec{2.0}7.4 &
                \Inc{0.5}15.9 & \Dec{0.0}0.1 &
                \Inc{2.4}13.0 & \Inc{0.4}1.0 \\
            ~ & ~ & OOD Hard &
                \Inc{7.9}83.1 & \Dec{3.5}0.7 &
                \Inc{12.7}77.7 & \Dec{2.1}1.0 &
                \Inc{7.8}36.1 & \Dec{0.0}0.1 &
                \Inc{6.8}31.6 & \Inc{0.1}0.1 \\
    \hline \hline
    \end{tabular}
    \label{tab:refl-error-grpo}
\end{table}

\subsubsection{The planning correctness rate after GRPO}

\begin{table}[ht]
\centering
\caption{The planning correctness rate ($\mu$) before and after GRPO. Each result is reported by $\mu_{\text{SFT}} \to \mu_{\text{GRPO}}$.}
\begin{tabular}{|c|c|c|ccc|}
\hline
Task & Verification Type & Model Size & ID Easy & ID Hard & OOD Hard \\
\hline
\multirow{6}{*}{Mult} & \multirow{3}{*}{Detailed} & 1M  & $70.2 \to 81.7$ & $54.4 \to 59.5$ & $42.9 \to 41.9$ \\ \cline{3-6} 
                      &                           & 4M  & $98.3 \to 99.5$ & $68.4 \to 79.1$ & $35.0 \to 38.0$ \\ \cline{3-6} 
                      &                           & 16M & $99.7 \to 99.9$ & $80.0 \to 85.9$ & $47.9 \to 43.4$ \\ \cline{2-6} 
                      & \multirow{3}{*}{Binary}   & 1M  & $98.8 \to 99.1$ & $81.2 \to 80.3$ & $42.7 \to 38.6$ \\ \cline{3-6} 
                      &                           & 4M  & $99.3 \to 99.7$ & $77.6 \to 89.9$ & $57.1 \to 48.1$ \\ \cline{3-6} 
                      &                           & 16M & $99.4 \to 99.8$ & $79.6 \to 85.1$ & $75.2 \to 44.8$ \\ \hline
\multirow{6}{*}{Sudoku} & \multirow{3}{*}{Detailed} & 1M  & $34.1 \to 33.0$ & $13.2 \to 12.4$ & $9.0 \to 8.6$   \\ \cline{3-6} 
                        &                           & 4M  & $85.0 \to 86.8$ & $65.2 \to 72.0$ & $70.1 \to 70.3$ \\ \cline{3-6} 
                        &                           & 16M & $98.6 \to 98.1$ & $92.5 \to 94.0$ & $84.9 \to 83.9$ \\ \cline{2-6} 
                        & \multirow{3}{*}{Binary}   & 1M  & $59.1 \to 60.3$ & $36.6 \to 36.1$ & $19.5 \to 19.9$ \\ \cline{3-6} 
                        &                           & 4M  & $97.3 \to 97.8$ & $80.2 \to 81.4$ & $74.5 \to 70.9$ \\ \cline{3-6} 
                        &                           & 16M & $99.0 \to 99.2$ & $88.5 \to 85.1$ & $68.4 \to 64.6$ \\ \hline
\end{tabular}
\label{tab:sft-grpo-mu}
\end{table}

We also report how GRPO influences the step-wise planning ability, measured by $\mu$ (defined in Section~\ref{sec:theory}), across various tasks, verification types, and model sizes. Shown in Table~\ref{tab:sft-grpo-mu}, GRPO increases the planning correctness rate $\mu$ in most ID cases, except for the Sudoku models with binary verification. This indicates that the proposed steps are more likely to be correct and further reduces the overall penalties of false positive verification, making an optimistic verification bias (a high $e_+$ in exchange for a low $e_-$) even more rewarding. In particular, the planning ability shows almost no improvement in OOD problems.

\subsubsection{Reflection frequency of optional detailed verification}
\label{app:experiments:refl-freq}

To show how GRPO adapts the reflection frequency for optional detailed verification, Figure~\ref{fig:refl-freq-mult-1m-16m} shows the reflection frequency of 1M and 16M transformers before and after GRPO, and the reflection frequency of the 4M model is previously shown in Section~\ref{sec:experiments:rl}. Similarly, Figure~\ref{fig:refl-freq-sudoku-all} shows the reflection frequency for 1M, 4M, and 16M models in Sudoku.

According to results in Table~\ref{tab:accuracy-sft}, reflective execution does not improve performance for the 1M model, implying its weakness in exploring correct solutions. Therefore, GRPO does not much incentivize reflection for the 1M model. Contrarily, it greatly encourages reflection for 4M and 16M models, for they explore more effectively than the 1M model. These results align with the discussion in Section~\ref{sec:experiments:rl} that RL adapts the reflection frequency based on how well the proposing policy can explore higher rewards.

\begin{figure}[h]
    \centering
    \subfigure[1M]{
        \includegraphics[width=0.7\textwidth]{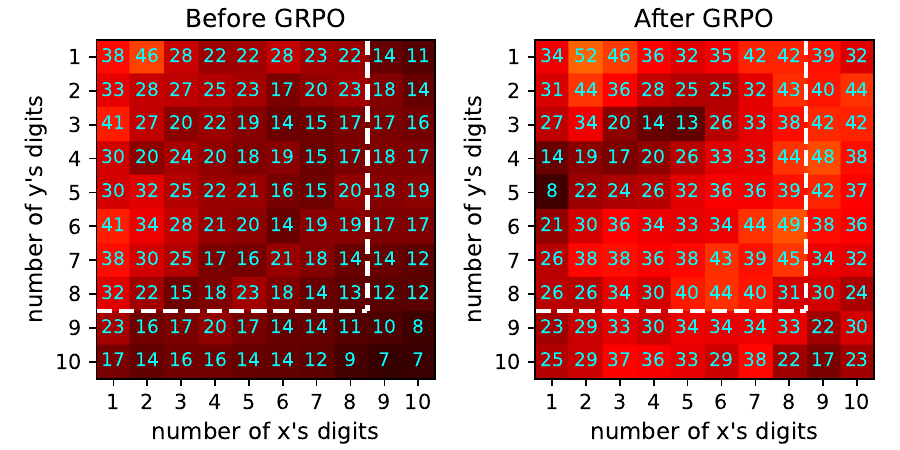}
    }
    \\
    \subfigure[16M]{
        \includegraphics[width=0.7\textwidth]{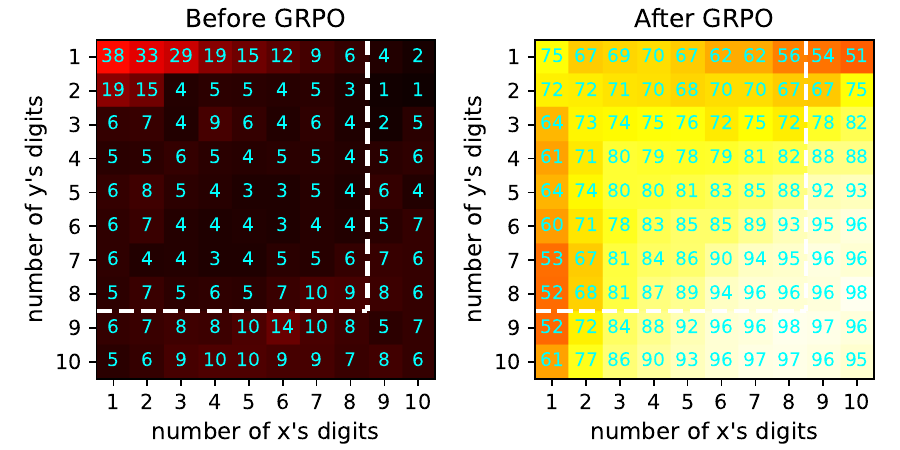}
    }
    \caption{The hot-maps of reflection frequency (\%) of 1M and 16M multiplication models before and after GRPO, which uses a sampling temperature of $1.25$. All models are tested using RMTP execution.}
    \label{fig:refl-freq-mult-1m-16m}
\end{figure}

\begin{figure}[h]
    \centering
    \subfigure[1M]{
        \includegraphics[width=0.7\textwidth]{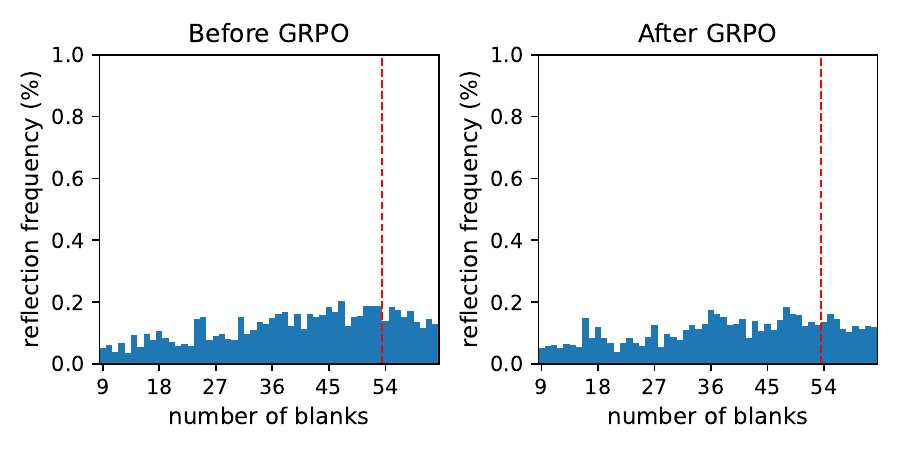}
    }\\
    \subfigure[4M]{
        \includegraphics[width=0.7\textwidth]{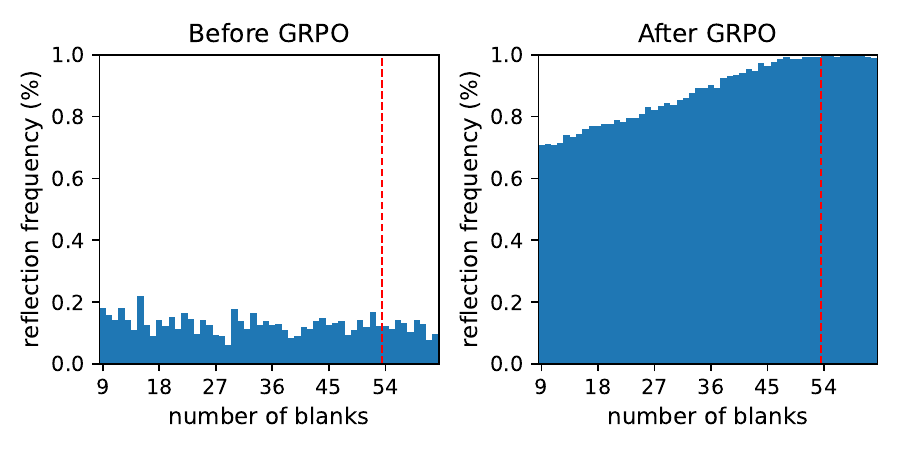}
    }\\
    \subfigure[16M]{
        \includegraphics[width=0.7\textwidth]{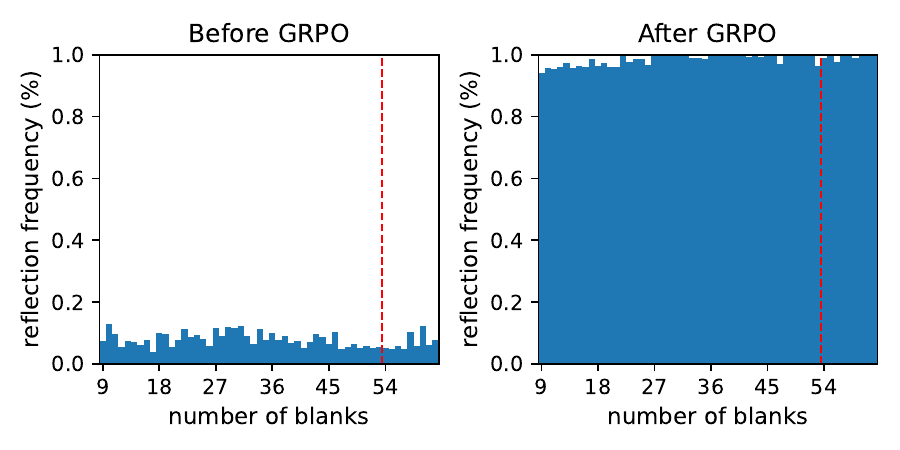}
    }
    \caption{The histograms of reflection frequency of 1M, 4M, and 16M Sudoku models before and after GRPO, which uses a sampling temperature of $1.25$. All models are tested using RMTP execution.}
    \label{fig:refl-freq-sudoku-all}
\end{figure}

\subsection{Reflection frequency under controlled verification error rates}

To investigate how verification error rates ($e_-$ and $e_+$) influence the reflection frequency in GRPO, we ran a controlled experiment in which the error rates were fixed by intervening with expert verifications. After each time the transformer generated a non-empty verification, we replaced the verification sequence with the expert verification, where randomized noise is injected to achieve the prescribed false-negative rate $e_-$ and false-positive rate $e_+$.

We used the 4M Mult model and ran GRPO (sampling temperature = 1.25) for 25 epochs in the in-distribution setting. We measured the fraction of steps at which the model invoked non-empty reflection (``reflection frequency'') after 25 epochs. Especially, we are interested in how reflection frequency changes, given a low $e_- = 0.1$ or a high $e_- = 0.4$. In both cases, we set $e_+ = 0.1$. The results are as follows:

\begin{itemize}[left=10pt, itemsep=0.5em, parsep=0pt, topsep=0pt, partopsep=0pt]
    \item Using a low $e_- = 0.1$, the reflection frequency increases to $59.8\%$ after 25 GRPO epochs.
    \item Using a high $e_- = 0.4$, the reflection frequency drops to $0.0\%$ after 25 GRPO epochs. That is, the model learns to completely disuse reflection.
\end{itemize}

\paragraph{Discussion.} When the verifier rejects many correct steps (high $e_-$), the model learns to avoid invoking reflection, driving the observed reflection frequency to nearly $0\%$. Conversely, when $e_-$ is low (with the same $e_+$), reflection becomes beneficial and the model increases reflection usage (here to $60\%$). Intuitively, reducing excessive false negatives shortens CoT lengths and makes reflection more rewarding; when $e_-$ is large, the model can trade off reflection for a no-reflection policy (which corresponds to the extreme $e_- = 0, e_+ = 1$), thereby avoiding costly rejections. This experiment demonstrates that the model learns to reduce $e_-$ by strategically bypassing verification.

\subsection{Results of PPO}
\label{app:experiments:ppo}

As discussed in Appendix~\ref{app:rl:ppo}, we prefer GRPO over PPO for tiny transformers, as the value model in PPO increases computational cost and introduces additional approximation bias in computing advantages. 

Table~\ref{tab:accuracy-ppo} presents the reasoning accuracy after PPO, and Table~\ref{tab:accuracy-ppo-dif} gives the difference compared to the SFT results in Table~\ref{tab:accuracy-sft}. Our results show that PPO is much weaker than GRPO. Although PPO effectively improves the non-reflective models, the performance of reflective reasoning deteriorates after PPO. To explain this, self-verification in reasoning steps causes a higher complexity of the value function, which may obfuscate tiny transformers. Overall, we suggest that GRPO is a more suitable algorithm to optimize reflective reasoning for tiny transformers.

\begin{table}[h]
    \centering
    \caption{The accuracy (\%) of the 1M, 4M, and 16M transformers after PPO.}
    \setlength{\tabcolsep}{3pt}
    \setlength{\extrarowheight}{1pt}
    \footnotesize
    \begin{tabular}{l|l|l|r|rrr|rrr|rrr}
        \hline \hline
        \multicolumn{3}{l|}{Verification Type} & 
        \multicolumn{1}{c|}{None} &
        \multicolumn{3}{c|}{Binary} &
        \multicolumn{3}{c|}{Detailed} &
        \multicolumn{3}{c}{Optional Detailed} \\
        \hline
        \multicolumn{3}{l|}{Reflective Execution} & 
        None &
        None & RMTP & RTBS &
        None & RMTP & RTBS &
        None & RMTP & RTBS \\
        \hline \hline
        \multirow{6}{*}{1M} & \multirow{3}{*}{Mult} & ID Easy & 
            \Acc{39.6} &  
            \Acc{96.5} & \Acc{94.1} & \Acc{90.6} &  
            \Acc{28.3} & \Acc{30.1} & \Acc{27.2} &  
            \Acc{37.9} & \Acc{49.0} & \Acc{44.4} \\  
        ~ & ~ & ID Hard &
            \Acc{7.8} &  
            \Acc{49.6} & \Acc{43.7} & \Acc{32.2} &  
            \Acc{2.4} & \Acc{3.1} & \Acc{2.4} &  
            \Acc{5.9} & \Acc{9.6} & \Acc{7.3} \\  
        ~ & ~ & OOD Hard &
            \Acc{1.1} &  
            \Acc{2.6} & \Acc{1.8} & \Acc{1.2} &  
            \Acc{0.7} & \Acc{0.8} & \Acc{0.7} &  
            \Acc{1.0} & \Acc{1.0} & \Acc{0.8} \\  
        \cline{2-13}
        ~ & \multirow{3}{*}{Sudoku} & ID Easy & 
            \Acc{1.7} &  
            \Acc{36.1} & \Acc{33.7} & \Acc{5.6} &  
            \Acc{17.3} & \Acc{20.6} & \Acc{20.1} &  
            \Acc{23.8} & \Acc{21.9} & \Acc{20.1} \\  
        ~ & ~ & ID Hard &
            \Acc{0} &  
            \Acc{0.4} & \Acc{1.0} & \Acc{0} &  
            \Acc{0} & \Acc{0.1} & \Acc{0} &  
            \Acc{0} & \Acc{0} & \Acc{0} \\  
        ~ & ~ & OOD Hard &
            \Acc{0} &  
            \Acc{0} & \Acc{0} & \Acc{0} &  
            \Acc{0} & \Acc{0} & \Acc{0} &  
            \Acc{0} & \Acc{0} & \Acc{0} \\  
        \hline \hline
        \multirow{6}{*}{4M} & \multirow{3}{*}{Mult} & ID Easy & 
            \Acc{97.7} &  
            \Acc{95.5} & \Acc{98.6} & \Acc{93.8} &  
            \Acc{96.6} & \Acc{95.7} & \Acc{94.9} &  
            \Acc{97.2} & \Acc{96.9} & \Acc{94.6} \\  
        ~ & ~ & ID Hard &
            \Acc{63.0} &  
            \Acc{52.8} & \Acc{68.6} & \Acc{54.7} &  
            \Acc{54.0} & \Acc{54.6} & \Acc{45.5} &  
            \Acc{58.7} & \Acc{61.7} & \Acc{56.8} \\  
        ~ & ~ & OOD Hard &
            \Acc{2.2} &  
            \Acc{3.1} & \Acc{2.9} & \Acc{1.6} &  
            \Acc{5.3} & \Acc{3.9} & \Acc{2.2} &  
            \Acc{4.4} & \Acc{3.3} & \Acc{3.7} \\  
        \cline{2-13}
        ~ & \multirow{3}{*}{Sudoku} & ID Easy & 
            \Acc{56.4} &  
            \Acc{88.4} & \Acc{97.3} & \Acc{97.6} &  
            \Acc{49.3} & \Acc{82.1} & \Acc{80.6} & 
            \Acc{76.2} & \Acc{94.1} & \Acc{97.3} \\  
        ~ & ~ & ID Hard &
            \Acc{0} &  
            \Acc{28.6} & \Acc{47.4} & \Acc{47.7} &  
            \Acc{0} & \Acc{15.1} & \Acc{35.9} &  
            \Acc{15.2} & \Acc{35.3} & \Acc{55.6} \\  
        ~ & ~ & OOD Hard &
            \Acc{0} &  
            \Acc{0.2} & \Acc{1.6} & \Acc{3.3} &  
            \Acc{3.1} & \Acc{0.4} & \Acc{0.9} &  
            \Acc{0} & \Acc{1.1} & \Acc{2.7} \\  
        \hline \hline
        \multirow{6}{*}{16M} & \multirow{3}{*}{Mult} & ID Easy & 
            \Acc{99.3} &  
            \Acc{99.0} & \Acc{99.0} & \Acc{98.2} &  
            \Acc{98.5} & \Acc{98.7} & \Acc{97.8} &  
            \Acc{99.0} & \Acc{99.5} & \Acc{99.2} \\  
        ~ & ~ & ID Hard &
            \Acc{64.8} &  
            \Acc{62.9} & \Acc{75.7} & \Acc{71.9} &  
            \Acc{63.2} & \Acc{68.6} & \Acc{65.6} &  
            \Acc{65.1} & \Acc{77.1} & \Acc{74.6} \\  
        ~ & ~ & OOD Hard &
            \Acc{1.9} &  
            \Acc{1.0} & \Acc{1.2} & \Acc{1.1} &  
            \Acc{9.1} & \Acc{8.1} & \Acc{7.5} &  
            \Acc{5.4} & \Acc{5.6} & \Acc{5.4} \\  
        \cline{2-13}
        ~ & \multirow{3}{*}{Sudoku} & ID Easy & 
            \Acc{96.5} &  
            \Acc{91.8} & \Acc{97.3} & \Acc{96.7} &  
            \Acc{87.6} & \Acc{98.1} & \Acc{98.9} &  
            \Acc{94.5} & \Acc{96.7} & \Acc{97.1} \\  
        ~ & ~ & ID Hard &
            \Acc{49.0} &  
            \Acc{41.0} & \Acc{51.4} & \Acc{52.7} &  
            \Acc{34.7} & \Acc{55.7} & \Acc{66.3} &  
            \Acc{47.8} & \Acc{53.8} & \Acc{53.0} \\  
        ~ & ~ & OOD Hard &
            \Acc{0.6} &  
            \Acc{0} & \Acc{2.4} & \Acc{4.0} &  
            \Acc{0} & \Acc{1.1} & \Acc{2.0} &  
            \Acc{0} & \Acc{3.8} & \Acc{2.9} \\  
        \hline \hline
    \end{tabular}
    \label{tab:accuracy-ppo}
\end{table}

\begin{table}[h]
    \centering
    \caption{The difference of accuracy (\%) of the 1M, 4M, and 16M transformers after PPO. Positive values mean that PPO raises the accuracy of the models above SFT.}
    \setlength{\tabcolsep}{3pt}
    \setlength{\extrarowheight}{1pt}
    \footnotesize
    \begin{tabular}{l|l|l|r|rrr|rrr}
        \hline \hline
        \multicolumn{3}{l|}{Reflective Training} & 
        \multicolumn{1}{c|}{None} &
        \multicolumn{3}{c|}{Binary} &
        \multicolumn{3}{c}{Detailed} \\
        \hline
        \multicolumn{3}{l|}{Reflective Execution} & 
        None &
        None & RMTP & RTBS &
        None & RMTP & RTBS \\
        \hline \hline
        \multirow{6}{*}{1M} & \multirow{3}{*}{Mult} & ID Easy & 
            $+16.0$ &
            $+0.7$ & $-0.4$ & $-2.8$
            & $+6.3$ & $-3.3$ & $+3.0$ \\
        ~ & ~ & ID Hard &
            $+5.8$ &
            $-3.1$ & $-0.9$ & $-3.3$ &
            $+0.2$ & $-1.7$ & $-0.4$ \\
        ~ & ~ & OOD Hard &
            $+0.1$ &
            $-1.1$ & $-0.4$ & $+0.0$
            & $-0.3$ & $+0.0$ & $+0.3$ \\
        \cline{2-10}
        ~ & \multirow{3}{*}{Sudoku} & ID Easy & 
            $+0.3$ &
            $+3.1$ & $+1.3$ & $+3.2$ &
            $-0.1$ & $+1.9$ & $+0.7$ \\
        ~ & ~ & ID Hard &
            $+0.0$ &
            $+0.1$ & $+0.9$ & $+0.0$ &
            $-0.1$ & $+0.1$ & $+0.0$ \\
        ~ & ~ & OOD Hard &
            $+0.0$ &
            $+0.0$ & $+0.0$ & $+0.0$ &
            $+0.0$ & $+0.0$ & $+0.0$ \\
        \hline \hline
        \multirow{6}{*}{4M} & \multirow{3}{*}{Mult} & ID Easy & 
            $+5.7$ &
            $-2.2$ & $+1.0$ & $-3.5$
            & $+2.1$ & $+1.9$ & $+1.6$ \\
        ~ & ~ & ID Hard &
            $+25.7$ &
            $-4.1$ & $+6.4$ & $+1.7$ &
            $+10.6$ & $+7.0$ & $+3.1$ \\
        ~ & ~ & OOD Hard &
            $+0.0$ &
            $+0.2$ & $+1.1$ & $+0.5$ &
            $+1.6$ & $+0.6$ & $-0.5$ \\
        \cline{2-10}
        ~ & \multirow{3}{*}{Sudoku} & ID Easy & 
            $+4.2$ &
            $-3.7$ & $+0.5$ & $+1.6$
            & $-5.1$ & $+0.2$ & $-7.9$ \\
        ~ & ~ & ID Hard &
            $-3.3$ &
            $-12.3$ & $+1.1$ & $-5.6$ &
            $-5.2$ & $-1.8$ & $-9.8$ \\
        ~ & ~ & OOD Hard &
            $+0.0$ &
            $+0.2$ & $+1.6$ & $+3.3$ &
            $+2.7$ & $-3.6$ & $-5.8$ \\
        \hline \hline
        \multirow{6}{*}{16M} & \multirow{3}{*}{Mult} & ID Easy & 
            $+0.1$ &
            $+0.2$ & $+0.1$ & $-0.6$ &
            $-0.7$ & $-0.8$ & $-0.7$ \\
        ~ & ~ & ID Hard &
            $-1.1$ &
            $-2.3$ & $-1.0$ & $-3.0$ &
            $-2.7$ & $-7.8$ & $-7.9$ \\
        ~ & ~ & OOD Hard &
            $-0.6$ &
            $-0.1$ & $-0.1$ & $-0.2$ &
            $-0.1$ & $-1.3$ & $+0.3$ \\
        \cline{2-10}
        ~ & \multirow{3}{*}{Sudoku} & ID Easy & 
            $+0.8$ &
            $-5.3$ & $-0.6$ & $+4.2$ &
            $-5.4$ & $-0.9$ & $-0.8$ \\
        ~ & ~ & ID Hard &
            $+0.2$ &
            $-9.1$ & $-1.7$ & $-2.1$ &
            $-12.2$ & $-2.2$ & $-4.4$ \\
        ~ & ~ & OOD Hard &
            $+0.2$ &
            $-0.9$ & $-2.0$ & $-2.0$ &
            $-0.7$ & $-7.1$ & $-12.4$ \\
        \hline \hline
    \end{tabular}
    \label{tab:accuracy-ppo-dif}
\end{table}

\clearpage

\section*{NeurIPS Paper Checklist}

\begin{enumerate}

\item {\bf Claims}
    \item[] Question: Do the main claims made in the abstract and introduction accurately reflect the paper's contributions and scope?
    \item[] Answer: \answerYes{} 
    \item[] Justification: Our title, abstract, and introduction clearly state our main claim that transformers can benefit from self-verifying reflection. Our theoretical and experimental results support this claim. 
    \item[] Guidelines:
    \begin{itemize}
        \item The answer NA means that the abstract and introduction do not include the claims made in the paper.
        \item The abstract and/or introduction should clearly state the claims made, including the contributions made in the paper and important assumptions and limitations. A No or NA answer to this question will not be perceived well by the reviewers. 
        \item The claims made should match theoretical and experimental results, and reflect how much the results can be expected to generalize to other settings. 
        \item It is fine to include aspirational goals as motivation as long as it is clear that these goals are not attained by the paper. 
    \end{itemize}

\item {\bf Limitations}
    \item[] Question: Does the paper discuss the limitations of the work performed by the authors?
    \item[] Answer: \answerYes{} 
    \item[] Justification: We mention limitations in the conclusion.
    \item[] Guidelines:
    \begin{itemize}
        \item The answer NA means that the paper has no limitation while the answer No means that the paper has limitations, but those are not discussed in the paper. 
        \item The authors are encouraged to create a separate "Limitations" section in their paper.
        \item The paper should point out any strong assumptions and how robust the results are to violations of these assumptions (e.g., independence assumptions, noiseless settings, model well-specification, asymptotic approximations only holding locally). The authors should reflect on how these assumptions might be violated in practice and what the implications would be.
        \item The authors should reflect on the scope of the claims made, e.g., if the approach was only tested on a few datasets or with a few runs. In general, empirical results often depend on implicit assumptions, which should be articulated.
        \item The authors should reflect on the factors that influence the performance of the approach. For example, a facial recognition algorithm may perform poorly when image resolution is low or images are taken in low lighting. Or a speech-to-text system might not be used reliably to provide closed captions for online lectures because it fails to handle technical jargon.
        \item The authors should discuss the computational efficiency of the proposed algorithms and how they scale with dataset size.
        \item If applicable, the authors should discuss possible limitations of their approach to address problems of privacy and fairness.
        \item While the authors might fear that complete honesty about limitations might be used by reviewers as grounds for rejection, a worse outcome might be that reviewers discover limitations that aren't acknowledged in the paper. The authors should use their best judgment and recognize that individual actions in favor of transparency play an important role in developing norms that preserve the integrity of the community. Reviewers will be specifically instructed to not penalize honesty concerning limitations.
    \end{itemize}

\item {\bf Theory assumptions and proofs}
    \item[] Question: For each theoretical result, does the paper provide the full set of assumptions and a complete (and correct) proof?
    \item[] Answer: \answerYes{} 
    \item[] Justification: The main paper describes the assumptions of our theoretical results. The proof is provided in the appendix.
    \item[] Guidelines:
    \begin{itemize}
        \item The answer NA means that the paper does not include theoretical results. 
        \item All the theorems, formulas, and proofs in the paper should be numbered and cross-referenced.
        \item All assumptions should be clearly stated or referenced in the statement of any theorems.
        \item The proofs can either appear in the main paper or the supplemental material, but if they appear in the supplemental material, the authors are encouraged to provide a short proof sketch to provide intuition. 
        \item Inversely, any informal proof provided in the core of the paper should be complemented by formal proofs provided in the appendix or supplemental material.
        \item Theorems and Lemmas that the proof relies upon should be properly referenced. 
    \end{itemize}

    \item {\bf Experimental result reproducibility}
    \item[] Question: Does the paper fully disclose all the information needed to reproduce the main experimental results of the paper to the extent that it affects the main claims and/or conclusions of the paper (regardless of whether the code and data are provided or not)?
    \item[] Answer: \answerYes{} 
    \item[] Justification: We include necessary information to reproduce our results in the appendix, such as hyper-parameters, model architecture, data examples, and detailed implementation.
    \item[] Guidelines:
    \begin{itemize}
        \item The answer NA means that the paper does not include experiments.
        \item If the paper includes experiments, a No answer to this question will not be perceived well by the reviewers: Making the paper reproducible is important, regardless of whether the code and data are provided or not.
        \item If the contribution is a dataset and/or model, the authors should describe the steps taken to make their results reproducible or verifiable. 
        \item Depending on the contribution, reproducibility can be accomplished in various ways. For example, if the contribution is a novel architecture, describing the architecture fully might suffice, or if the contribution is a specific model and empirical evaluation, it may be necessary to either make it possible for others to replicate the model with the same dataset, or provide access to the model. In general. releasing code and data is often one good way to accomplish this, but reproducibility can also be provided via detailed instructions for how to replicate the results, access to a hosted model (e.g., in the case of a large language model), releasing of a model checkpoint, or other means that are appropriate to the research performed.
        \item While NeurIPS does not require releasing code, the conference does require all submissions to provide some reasonable avenue for reproducibility, which may depend on the nature of the contribution. For example
        \begin{enumerate}
            \item If the contribution is primarily a new algorithm, the paper should make it clear how to reproduce that algorithm.
            \item If the contribution is primarily a new model architecture, the paper should describe the architecture clearly and fully.
            \item If the contribution is a new model (e.g., a large language model), then there should either be a way to access this model for reproducing the results or a way to reproduce the model (e.g., with an open-source dataset or instructions for how to construct the dataset).
            \item We recognize that reproducibility may be tricky in some cases, in which case authors are welcome to describe the particular way they provide for reproducibility. In the case of closed-source models, it may be that access to the model is limited in some way (e.g., to registered users), but it should be possible for other researchers to have some path to reproducing or verifying the results.
        \end{enumerate}
    \end{itemize}

\item {\bf Open access to data and code}
    \item[] Question: Does the paper provide open access to the data and code, with sufficient instructions to faithfully reproduce the main experimental results, as described in supplemental material?
    \item[] Answer: \answerYes{} 
    \item[] Justification: Full code is in the supplementary materials. No data is provided as it is generated by the code. ``README.md'' introduces the commands to perform the complete pipeline and reproduce our results. We will open-source our code once it is formally accepted. 
    \item[] Guidelines:
    \begin{itemize}
        \item The answer NA means that paper does not include experiments requiring code.
        \item Please see the NeurIPS code and data submission guidelines (\url{https://nips.cc/public/guides/CodeSubmissionPolicy}) for more details.
        \item While we encourage the release of code and data, we understand that this might not be possible, so “No” is an acceptable answer. Papers cannot be rejected simply for not including code, unless this is central to the contribution (e.g., for a new open-source benchmark).
        \item The instructions should contain the exact command and environment needed to run to reproduce the results. See the NeurIPS code and data submission guidelines (\url{https://nips.cc/public/guides/CodeSubmissionPolicy}) for more details.
        \item The authors should provide instructions on data access and preparation, including how to access the raw data, preprocessed data, intermediate data, and generated data, etc.
        \item The authors should provide scripts to reproduce all experimental results for the new proposed method and baselines. If only a subset of experiments are reproducible, they should state which ones are omitted from the script and why.
        \item At submission time, to preserve anonymity, the authors should release anonymized versions (if applicable).
        \item Providing as much information as possible in supplemental material (appended to the paper) is recommended, but including URLs to data and code is permitted.
    \end{itemize}

\item {\bf Experimental setting/details}
    \item[] Question: Does the paper specify all the training and test details (e.g., data splits, hyperparameters, how they were chosen, type of optimizer, etc.) necessary to understand the results?
    \item[] Answer: \answerYes{} 
    \item[] Justification: Most relevant hyper-parameters and experiment details are in the appendix. Full settings are clearly defined in our code.
    \item[] Guidelines:
    \begin{itemize}
        \item The answer NA means that the paper does not include experiments.
        \item The experimental setting should be presented in the core of the paper to a level of detail that is necessary to appreciate the results and make sense of them.
        \item The full details can be provided either with the code, in appendix, or as supplemental material.
    \end{itemize}

\item {\bf Experiment statistical significance}
    \item[] Question: Does the paper report error bars suitably and correctly defined or other appropriate information about the statistical significance of the experiments?
    \item[] Answer: \answerNo{} 
    \item[] Justification: It is too expensive to run multiple instances of our experiments, which include training 78 models under various settings (sizes, tasks, verification types, etc). Each model is tested using at most 3 different executions. Given our limited resources, it would take several months to compute error bars. Since our paper focuses on analysis instead of best performance or accurate evaluation, it is acceptable not to include error bars.
    \item[] Guidelines:
    \begin{itemize}
        \item The answer NA means that the paper does not include experiments.
        \item The authors should answer "Yes" if the results are accompanied by error bars, confidence intervals, or statistical significance tests, at least for the experiments that support the main claims of the paper.
        \item The factors of variability that the error bars are capturing should be clearly stated (for example, train/test split, initialization, random drawing of some parameter, or overall run with given experimental conditions).
        \item The method for calculating the error bars should be explained (closed form formula, call to a library function, bootstrap, etc.)
        \item The assumptions made should be given (e.g., Normally distributed errors).
        \item It should be clear whether the error bar is the standard deviation or the standard error of the mean.
        \item It is OK to report 1-sigma error bars, but one should state it. The authors should preferably report a 2-sigma error bar than state that they have a 96\% CI, if the hypothesis of Normality of errors is not verified.
        \item For asymmetric distributions, the authors should be careful not to show in tables or figures symmetric error bars that would yield results that are out of range (e.g. negative error rates).
        \item If error bars are reported in tables or plots, The authors should explain in the text how they were calculated and reference the corresponding figures or tables in the text.
    \end{itemize}

\item {\bf Experiments compute resources}
    \item[] Question: For each experiment, does the paper provide sufficient information on the computer resources (type of compute workers, memory, time of execution) needed to reproduce the experiments?
    \item[] Answer: \answerYes{} 
    \item[] Justification: We roughly describe the computational resource used in the appendix. Since our models are very small, this paper can be easily reproduced by a single NVIDIA GPU.
    \item[] Guidelines:
    \begin{itemize}
        \item The answer NA means that the paper does not include experiments.
        \item The paper should indicate the type of compute workers CPU or GPU, internal cluster, or cloud provider, including relevant memory and storage.
        \item The paper should provide the amount of compute required for each of the individual experimental runs as well as estimate the total compute. 
        \item The paper should disclose whether the full research project required more compute than the experiments reported in the paper (e.g., preliminary or failed experiments that didn't make it into the paper). 
    \end{itemize}
    
\item {\bf Code of ethics}
    \item[] Question: Does the research conducted in the paper conform, in every respect, with the NeurIPS Code of Ethics \url{https://neurips.cc/public/EthicsGuidelines}?
    \item[] Answer: \answerYes{} 
    \item[] Justification: As far as we may perceive, this research does not involve human subjects or negative societal impacts.
    \item[] Guidelines:
    \begin{itemize}
        \item The answer NA means that the authors have not reviewed the NeurIPS Code of Ethics.
        \item If the authors answer No, they should explain the special circumstances that require a deviation from the Code of Ethics.
        \item The authors should make sure to preserve anonymity (e.g., if there is a special consideration due to laws or regulations in their jurisdiction).
    \end{itemize}

\item {\bf Broader impacts}
    \item[] Question: Does the paper discuss both potential positive societal impacts and negative societal impacts of the work performed?
    \item[] Answer: \answerNA{} 
    \item[] Justification: This paper focuses on the fundamental analysis of reasoning instead and is tied to no practical applications.
    \item[] Guidelines:
    \begin{itemize}
        \item The answer NA means that there is no societal impact of the work performed.
        \item If the authors answer NA or No, they should explain why their work has no societal impact or why the paper does not address societal impact.
        \item Examples of negative societal impacts include potential malicious or unintended uses (e.g., disinformation, generating fake profiles, surveillance), fairness considerations (e.g., deployment of technologies that could make decisions that unfairly impact specific groups), privacy considerations, and security considerations.
        \item The conference expects that many papers will be foundational research and not tied to particular applications, let alone deployments. However, if there is a direct path to any negative applications, the authors should point it out. For example, it is legitimate to point out that an improvement in the quality of generative models could be used to generate deepfakes for disinformation. On the other hand, it is not needed to point out that a generic algorithm for optimizing neural networks could enable people to train models that generate Deepfakes faster.
        \item The authors should consider possible harms that could arise when the technology is being used as intended and functioning correctly, harms that could arise when the technology is being used as intended but gives incorrect results, and harms following from (intentional or unintentional) misuse of the technology.
        \item If there are negative societal impacts, the authors could also discuss possible mitigation strategies (e.g., gated release of models, providing defenses in addition to attacks, mechanisms for monitoring misuse, mechanisms to monitor how a system learns from feedback over time, improving the efficiency and accessibility of ML).
    \end{itemize}
    
\item {\bf Safeguards}
    \item[] Question: Does the paper describe safeguards that have been put in place for responsible release of data or models that have a high risk for misuse (e.g., pretrained language models, image generators, or scraped datasets)?
    \item[] Answer: \answerNA{} 
    \item[] Justification: This paper poses no such risks.
    \item[] Guidelines:
    \begin{itemize}
        \item The answer NA means that the paper poses no such risks.
        \item Released models that have a high risk for misuse or dual-use should be released with necessary safeguards to allow for controlled use of the model, for example by requiring that users adhere to usage guidelines or restrictions to access the model or implementing safety filters. 
        \item Datasets that have been scraped from the Internet could pose safety risks. The authors should describe how they avoided releasing unsafe images.
        \item We recognize that providing effective safeguards is challenging, and many papers do not require this, but we encourage authors to take this into account and make a best faith effort.
    \end{itemize}

\item {\bf Licenses for existing assets}
    \item[] Question: Are the creators or original owners of assets (e.g., code, data, models), used in the paper, properly credited and are the license and terms of use explicitly mentioned and properly respected?
    \item[] Answer: \answerYes{} 
    \item[] Justification: Assets used in this paper are cited in the paper. The appendix mentions the version of the asset and the license.
    \item[] Guidelines:
    \begin{itemize}
        \item The answer NA means that the paper does not use existing assets.
        \item The authors should cite the original paper that produced the code package or dataset.
        \item The authors should state which version of the asset is used and, if possible, include a URL.
        \item The name of the license (e.g., CC-BY 4.0) should be included for each asset.
        \item For scraped data from a particular source (e.g., website), the copyright and terms of service of that source should be provided.
        \item If assets are released, the license, copyright information, and terms of use in the package should be provided. For popular datasets, \url{paperswithcode.com/datasets} has curated licenses for some datasets. Their licensing guide can help determine the license of a dataset.
        \item For existing datasets that are re-packaged, both the original license and the license of the derived asset (if it has changed) should be provided.
        \item If this information is not available online, the authors are encouraged to reach out to the asset's creators.
    \end{itemize}

\item {\bf New assets}
    \item[] Question: Are new assets introduced in the paper well documented and is the documentation provided alongside the assets?
    \item[] Answer: \answerNA{} 
    \item[] Justification: This paper does not release assets besides our code.
    \item[] Guidelines:
    \begin{itemize}
        \item The answer NA means that the paper does not release new assets.
        \item Researchers should communicate the details of the dataset/code/model as part of their submissions via structured templates. This includes details about training, license, limitations, etc. 
        \item The paper should discuss whether and how consent was obtained from people whose asset is used.
        \item At submission time, remember to anonymize your assets (if applicable). You can either create an anonymized URL or include an anonymized zip file.
    \end{itemize}

\item {\bf Crowdsourcing and research with human subjects}
    \item[] Question: For crowdsourcing experiments and research with human subjects, does the paper include the full text of instructions given to participants and screenshots, if applicable, as well as details about compensation (if any)? 
    \item[] Answer: \answerNA{} 
    \item[] Justification: This paper does not involve crowdsourcing nor research with human subjects.
    \item[] Guidelines:
    \begin{itemize}
        \item The answer NA means that the paper does not involve crowdsourcing nor research with human subjects.
        \item Including this information in the supplemental material is fine, but if the main contribution of the paper involves human subjects, then as much detail as possible should be included in the main paper. 
        \item According to the NeurIPS Code of Ethics, workers involved in data collection, curation, or other labor should be paid at least the minimum wage in the country of the data collector. 
    \end{itemize}

\item {\bf Institutional review board (IRB) approvals or equivalent for research with human subjects}
    \item[] Question: Does the paper describe potential risks incurred by study participants, whether such risks were disclosed to the subjects, and whether Institutional Review Board (IRB) approvals (or an equivalent approval/review based on the requirements of your country or institution) were obtained?
    \item[] Answer: \answerNA{} 
    \item[] Justification: the paper does not involve crowdsourcing nor research with human subjects.
    \item[] Guidelines:
    \begin{itemize}
        \item The answer NA means that the paper does not involve crowdsourcing nor research with human subjects.
        \item Depending on the country in which research is conducted, IRB approval (or equivalent) may be required for any human subjects research. If you obtained IRB approval, you should clearly state this in the paper. 
        \item We recognize that the procedures for this may vary significantly between institutions and locations, and we expect authors to adhere to the NeurIPS Code of Ethics and the guidelines for their institution. 
        \item For initial submissions, do not include any information that would break anonymity (if applicable), such as the institution conducting the review.
    \end{itemize}

\item {\bf Declaration of LLM usage}
    \item[] Question: Does the paper describe the usage of LLMs if it is an important, original, or non-standard component of the core methods in this research? Note that if the LLM is used only for writing, editing, or formatting purposes and does not impact the core methodology, scientific rigorousness, or originality of the research, declaration is not required.
    \item[] Answer: \answerNA{} 
    \item[] Justification: Although this research is related to LLM reasoning, we focus on tiny transformers. The appendix includes the evaluation of LLMs, yet these results do not impact our core methodology and originality. 
    \item[] Guidelines:
    \begin{itemize}
        \item The answer NA means that the core method development in this research does not involve LLMs as any important, original, or non-standard components.
        \item Please refer to our LLM policy (\url{https://neurips.cc/Conferences/2025/LLM}) for what should or should not be described.
    \end{itemize}

\end{enumerate}

\end{document}